\documentclass{article}
\usepackage{iclr2027_conference,times}

\usepackage{graphicx} %
\usepackage{amsmath}
\usepackage{amssymb}
\usepackage{mathtools}
\usepackage{amsthm}
\usepackage{enumitem}
\usepackage{thm-restate}
\usepackage{longtable}
\usepackage{wrapfig}
\usepackage{xcolor}
\usepackage{float}
\usepackage{subcaption}
\usepackage{algorithm}
\usepackage{algpseudocode}
\usepackage{tabularx}
\usepackage{booktabs}
\newtheorem{theorem}{Theorem}[section]
\newtheorem{lemma}{Lemma}[section]
\newtheorem{corollary}{Corollary}[section]
\newtheorem{proposition}{Proposition}[section]
\newtheorem{assumption}{Assumption}[section]
\newtheorem{remark}{Remark}[section]

\newtheorem{definition}{Definition}[section]

\newcommand{\argmin}{\operatorname{argmin}}

\newcommand{\R}{\mathbb{R}}
\newcommand{\T}{\top}

\newcommand{\Pb}{\mathbb{P}}
\newcommand{\Coverage}{\operatorname{Coverage}}

\newcommand{\cA}{\mathcal{A}}

\newcommand{\cD}{\mathcal{D}}
\newcommand{\cE}{\mathcal{E}}

\newcommand{\cI}{\mathcal{I}}

\newcommand{\cN}{\mathcal{N}}

\newcommand{\cX}{\mathcal{X}}
\newcommand{\cY}{\mathcal{Y}}

\newcommand{\bX}{\mathbf{X}}

\newcommand{\E}{\mathbb{E}}

\newcommand{\bbx}{\mathbf{x}}

\newcommand{\bbone}{\boldsymbol{1}}

\usepackage{xcolor} %

\usepackage{threeparttable}

\makeatletter
\def\thmt@innercounters{}
\let\paper@label\label
\let\paper@ltx@label\ltx@label
\newcommand{\appendixonlyequationlabel}[1]{%
    \ifthmt@thisistheone
        \notag
    \else
        \let\label\paper@label
        \let\ltx@label\paper@ltx@label
        \label{#1}%
    \fi
}
\makeatother

\usepackage[hidelinks,hypertexnames=false]{hyperref}
\let\appendixtocaddcontentsline\addcontentsline
\renewcommand{\addcontentsline}[3]{}

\floatstyle{ruled}
\restylefloat{algorithm}
\algrenewcommand\algorithmicrequire{\textbf{Input:}}
\algrenewcommand\algorithmicensure{\textbf{Output:}}

\title{Split Conformal Interval Length under Huber Contamination}
\author{Xiao Zhang$^1$ \quad Bowen Zhang$^1$ \quad Lie He$^{1,2}$\thanks{Corresponding author.}  \\
$^{1}$Shanghai University of Finance and Economics \\
$^{2}$MoE Key Laboratory of Interdisciplinary Research of Computation and Economics,\\
  \;  Shanghai University of Finance and Economics\\
\texttt{\{xiaozhang,bowenzhang\}@stu.sufe.edu.cn}\quad\texttt{helie@sufe.edu.cn} 
}
\iclrfinalcopy

\begin{document}

\maketitle
\lhead{Preprint. Under review.}

\begin{abstract}
Existing analyses of interval length in split conformal prediction largely focus on clean data. However, how contamination at the training and calibration stages alters interval length relative to the corresponding clean population oracle remains understudied. In this paper, we consider Huber contamination and derive a finite-sample upper bound that holds uniformly over contaminating distributions, with leading terms scaling as $L(\delta_1+n_1^{-1/2}) +(n_2^{-1/2}+\delta_2)/((1-\delta_2)r_0)$, in addition to a clean-sample estimation term. We further establish a minimax lower bound matching the upper-bound rate up to constant factors on a regular asymmetric location submodel. For calibration contamination, we obtain an exact worst-case characterization of the population interval-length deviation. We also extend the upper-bound results to conformalized quantile regression and adaptive residual scores. Experiments on synthetic and real-world datasets provide empirical support for our theoretical findings on training and calibration contamination.
\end{abstract}

\section{Introduction}
Modern regression methods often provide accurate point predictions. However, point predictions alone do not quantify predictive uncertainty, which can be important for downstream decision making. For example, uncertainty in wind and solar generation is relevant to reserve scheduling, balancing, and dispatch in power systems~\citep{pinson2013wind,hong2016probabilistic}. \emph{Conformal prediction} (CP) provides a general framework for constructing prediction sets with distribution-free, finite-sample marginal coverage guarantees~\citep{vovk2005algorithmic}. Among its variants, split conformal prediction (SCP) fits a predictive model on a training split and calibrates the prediction set using held-out data~\citep{papadopoulos2002inductive}. Given a target miscoverage level $\alpha\in(0,1)$, split conformal prediction constructs a prediction set $C_\alpha$ such that, for a new test observation
$(\bX_{\mathrm{new}},Y_{\mathrm{new}})$,
\[
\Pb\!\left(
Y_{\mathrm{new}}\in C_\alpha(\bX_{\mathrm{new}})
\right)
\ge 1-\alpha,
\]
provided that the calibration observations and the test observation are exchangeable.

In regression, the informativeness of a conformal predictor is commonly measured by prediction-interval length, with shorter intervals preferred at the target coverage level. Existing analyses of conformal efficiency, however, largely focus on clean data~\citep{lei2018distribution,sesia2020comparison}, while a separate line of work has studied the robustness of conformal prediction under various contamination and nonexchangeability models~\citep{barber2023conformal,li2024data,zargarbashi2024robust,yang2026ensemble,clarkson2024split}.

The effect of contamination on prediction-interval length remains less understood. We characterize its deviation from the clean population oracle in terms of the training and calibration contamination levels, uniformly over
 contaminating distributions. In split conformal prediction, the two stages affect interval length through different mechanisms: training contamination perturbs the fitted predictor, while calibration contamination shifts the score quantile.

To address this question, we study split conformal regression under Huber contamination in both the training and calibration samples, with contamination levels $\delta_1,\delta_2$ and sample sizes $n_1,n_2$, while allowing the contaminating distributions to be arbitrary. Our main contributions are as follows:
\begin{itemize}[leftmargin=10pt, itemindent=0pt, labelsep=5pt]
\item \textbf{Uniform upper bounds under joint contamination.}
For absolute-residual scores and $\delta_2<\alpha$, under standard stability and regularity conditions,
we show that the prediction-interval length deviates from its clean oracle at
the rate
\[
O_P\!\left(
\eta_{n_1}(P_0)
+
L\bigl(\delta_1+n_1^{-1/2}\bigr)
+
\frac{n_2^{-1/2}+\delta_2}
     {(1-\delta_2)r_0}
\right),
\]
uniformly over contaminating distributions.  Here $\eta_{n_1}(P_0)$ is the clean sample estimation error, while $L$
and $r_0^{-1}$ capture the training- and calibration-side sensitivities, respectively. We further extend the interval-length analysis to conformalized quantile regression and adaptive residual scores, and derive a corresponding bound on set-level discrepancy. For calibration contamination, we additionally obtain
an exact worst-case characterization of interval-length inflation and contraction over contaminating distributions.

\item \textbf{A minimax lower bound on a location submodel.}
On a regular asymmetric location submodel, we establish a minimax lower bound of order
\[
L_\star\bigl(\delta_1+n_1^{-1/2}\bigr)
+
\frac{n_2^{-1/2}+\delta_2}{r_\star}.
\]
Under suitable local comparability conditions on the same submodel, this lower bound matches the upper rate up to constant factors for absolute-residual split conformal prediction.

\item \textbf{Empirical characterization.}
Across synthetic and real-world regression datasets, we empirically characterize the distinct effects of training and calibration contamination and compare their behavior across absolute-residual, adaptive, and CQR scores. The results illustrate the stage-specific mechanisms predicted by our theory and their different sensitivity to contamination.
\end{itemize}

\section{Related Work}
\paragraph{Conformal prediction and efficiency analysis.}
Conformal prediction constructs prediction sets with finite-sample marginal coverage guarantees for exchangeable observations~\citep{papadopoulos2002inductive,vovk2005algorithmic,vovk2009line}. The efficiency of a conformal predictor characterizes the informativeness of its prediction sets and can be evaluated using different criteria~\citep{vovk2016criteria}. In regression, efficiency is commonly measured by prediction-set length or volume, often relative to a population-level oracle benchmark. Prior work has studied such efficiency guarantees for nonparametric prediction bands~\citep{lei2014distribution}, conformalized quantile regression and distributional conformal prediction~\citep{sesia2020comparison,chernozhukov2021distributional}, and conditional-density-based prediction regions in high dimensions~\citep{izbicki2022cd}. Other work derives finite-sample bounds on prediction-set size~\citep{dhillon2024expected,zecchin2024generalization,yao2026non} or directly optimizes set length or volume~\citep{kiyani2024length,braun2025minimum,lebars2025volume,gao2025volume}. Efficiency improvements have also been developed for data-reuse procedures such as cross-conformal prediction through more effective aggregation across folds~\citep{gasparin2025improving}. Our analysis complements this literature by studying efficiency under training and calibration contamination. When $\delta_1=\delta_2=0$, our decomposition reduces to the standard clean-data setting of \citet{lei2018distribution}, in which only finite-sample training and calibration errors remain. 
Under contamination, additional population-level shifts in the predictor and score quantile may arise and need not vanish asymptotically.

\paragraph{Conformal prediction under contamination.}
Robustness of conformal prediction has been studied under departures from exchangeability and various forms of data contamination. \citet{barber2023conformal} analyze coverage under general nonexchangeability, including Huber contamination~\citep{huber1964robust}, and derive a multiplicative clean-coverage correction of $\alpha'=\alpha(1-\delta_2)$ under $\delta_2$-Huber contamination. In classification, prior work considers training poisoning, test-time perturbations, calibration poisoning, and random label 
noise~\citep{li2024data,zargarbashi2024robust,einbinder2024label}.
\citet{yang2026ensemble} propose EnCP, a partition-and-vote defense against bounded poisoning of training or calibration samples. Their method modifies the conformal procedure through an ensemble construction, whereas we study unmodified split conformal regression under distributional contamination.
The work most closely related to our calibration-contamination setting is \citet{clarkson2024split}, 
who bound the absolute difference between the expected clean and contaminated calibration order statistics by $\delta_2 C(n_2,k_{n_2})W_1(\Pi_{P_0},\Pi_{Q_2})$,
where $W_1$ denotes the 1-Wasserstein distance between the clean and contaminating score distributions. This bound may be less informative in the worst-case setting: $W_1(\Pi_{P_0},\Pi_{Q_2})$ may be unbounded, whereas our high-probability bound is uniform over $Q_2$; moreover, for $k_{n_2}=\lceil(n_2+1)(1-\alpha)\rceil$, $C(n_2,k_{n_2})=\Theta(\sqrt{n_2})$. Our bound instead separates a vanishing $O(n_2^{-1/2})$ sampling term
from an $O(\delta_2/r_0)$ population-contamination term and is uniform over $Q_2$.

\section{Preliminaries and Setups}
\subsection{Preliminaries on Split Conformal Prediction}
We consider regression with feature space $\cX\subseteq\R^d$ and response space $\cY=\R$, and let $P_0$ denote the clean distribution on $\cX\times\cY$. Fix a target miscoverage level $\alpha\in(0,1)$. Let $A$ denote a training source, either a population distribution or a realized training sample, and let $\mu(\cdot;A)$ denote the corresponding population or fitted predictor. For the absolute-residual nonconformity score, define
\[
R_A(\bbx,y):=|y-\mu(\bbx;A)|, \qquad (\bbx,y)\in\cX\times\cY.
\]
For any distribution $B$ on $\cX\times\cY$, let
\[
F_{A,B}(t)
:=
\Pb_{(X,Y)\sim B}\bigl(R_A(X,Y)\le t\bigr),
\qquad
q(A;B)
:=
F_{A,B}^{-1}(1-\alpha),
\]
where $F_{A,B}^{-1}(u):=\inf\{t\in\R:F_{A,B}(t)\ge u\}$, $u\in(0,1]$. When $A$ is a realized training sample, $F_{A,B}$ and $q(A;B)$ are understood conditionally on $A$.

Let $\cD_1$ and $\cD_2$ denote training and calibration samples of
sizes $n_1$ and $n_2$, respectively, with their sampling
distributions specified below. Write
$\cD_2=\{(X_i,Y_i)\}_{i=1}^{n_2}$. For a training source $A$,
define the calibration scores
\[
R_{A,i}:=R_A(X_i,Y_i),
\qquad i=1,\ldots,n_2,
\]
and let $R_{A,(1)}\le\cdots\le R_{A,(n_2)}$ denote their order statistics. Set
$k_{n_2}:=\left\lceil (n_2+1)(1-\alpha)\right\rceil$ and define the split-conformal calibration quantile by
\[
\widehat q(A;\cD_2)
:=
\begin{cases}
R_{A,(k_{n_2})}, & k_{n_2}\le n_2,\\
+\infty, & k_{n_2}=n_2+1.
\end{cases}
\]
The split conformal prediction interval based on $\cD_1$ is
\[
C_\alpha(x;\cD_1,\cD_2)
:=
\bigl[
\mu(x;\cD_1)-\widehat q(\cD_1;\cD_2),
\,
\mu(x;\cD_1)+\widehat q(\cD_1;\cD_2)
\bigr].
\]
If, conditional on $\cD_1$, the calibration observations and a new
test observation $(X_{\mathrm{new}},Y_{\mathrm{new}})$ are
exchangeable, then
\[
\Pb\!\left(
Y_{\mathrm{new}}
\in
C_\alpha(X_{\mathrm{new}};\cD_1,\cD_2)
\,\middle|\,
\cD_1
\right)
\ge 1-\alpha.
\]
For the absolute-residual score, the interval length is
$2\widehat q(\cD_1;\cD_2)$.
The corresponding clean population oracle is
\[
C_\alpha^0(x)
:=
\bigl[
\mu(x;P_0)-q(P_0;P_0),
\,
\mu(x;P_0)+q(P_0;P_0)
\bigr].
\]
Our analysis compares the fitted interval under contamination with this clean oracle.

To control the sampling error of the fitted clean predictor, we use the following condition.
\begin{assumption}[Sampling stability~\citep{lei2018distribution}]
\label{ass:sampling-stability}
For every fixed training distribution $P$, there exist deterministic
sequences $\eta_n(P)=o(1)$ and $\rho_n(P)=o(1)$ such that, for
$\cD_n\sim P^n$ and all sufficiently large $n$,
\[
\Pb_{\cD_n\sim P^n}\left(
\left\|\mu(\cdot;\cD_n)-\mu(\cdot;P)\right\|_{\infty}
\ge\eta_n(P)
\right)
\le \rho_n(P),
\]
where
$\|g\|_\infty:=\sup_{\bbx\in\cX}|g(\bbx)|$.
\end{assumption}
\subsection{Training and Calibration Contamination}
\label{sec:contamination}

We model contamination in the training and calibration stages separately
using Huber's contamination model \citep{huber1964robust}:
\[
P_1 := (1-\delta_1)P_0+\delta_1 Q_1,
\qquad
P_2 := (1-\delta_2)P_0+\delta_2 Q_2,
\]
where $\delta_1,\delta_2\in[0,1)$ are the contamination levels and $Q_1,Q_2$ are arbitrary distributions on $\cX\times\cY$. For the training stage, we construct a clean dataset $\cD_0\sim P_0^{n_1}$ and a contaminated dataset $\cD_1\sim P_1^{n_1}$ on a common probability space through Bernoulli replacement. Specifically, each observation in $\cD_0$ is independently replaced by a draw from $Q_1$ with probability $\delta_1$. Hence, $\cD_0$ and $\cD_1$ differ in at most $K$
observations, where $K\sim\operatorname{Binomial}(n_1,\delta_1)$. The calibration dataset $\cD_2\sim P_2^{n_2}$ is independent of $(\cD_0,\cD_1)$. 

\begin{remark}[Fixed-fraction replacement]
An alternative is to replace exactly \(k=\lfloor n\delta\rfloor\) uniformly selected observations. Unlike the Bernoulli replacement induced by the Huber contamination model, the number of replacements is then fixed rather than random. Our analysis extends to this fixed-fraction scheme: the same contamination-dependent terms are retained, while the finite-sample fluctuation due to the random replacement count is removed. Appendix~\ref{app:bernoulli-fixed-fraction} gives the corresponding bound and an empirical comparison of the two schemes.
\end{remark}

For any fixed $A$, the score CDF under $P_2$ satisfies
\newcommand{\FixedScoreMixtureIdentity}{%
F_{A,P_2}(t)
=(1-\delta_2)F_{A,P_0}(t)+\delta_2F_{A,Q_2}(t)%
}
\begin{equation*}
\FixedScoreMixtureIdentity.
\end{equation*}
The identity compares score distributions while holding the fitted predictor fixed. For our interval-length analyses, we assume $\alpha\in(0,1/2)$ to simplify the two-sided calibration bounds. For the clean oracle $A=P_0$, define
\begin{equation}
q_-(\delta_2):=F_{P_0,P_0}^{-1}
\left(\frac{1-\alpha-\delta_2}{1-\delta_2}\right),
\qquad
q_+(\delta_2):=F_{P_0,P_0}^{-1}
\left(\frac{1-\alpha}{1-\delta_2}\right).
\label{eq:contamination-quantile-endpoints}
\end{equation}
For the calibration-side analysis, we assume $\delta_2\in[0,\alpha)$.
\begin{assumption}[Local score-density regularity]
\label{ass:score-density}
The clean oracle score CDF $F_{P_0,P_0}$ is absolutely continuous on an
open interval $\cI\supset[q_-(\delta_2),q_+(\delta_2)]$, with density
$f_{P_0,P_0}$ satisfying
\[
\inf_{t\in\cI}f_{P_0,P_0}(t)\ge r_0>0.
\]
\end{assumption}

\section{Robustness Analysis Under Contamination}

\subsection{Finite-Sample Robustness under Joint Contamination}
\label{ssec:interval-length-deviation}

To isolate the effects of contamination from finite-sample variation at the training and calibration stages, we use the following decomposition of the quantile error:
\newcommand{\QuantileErrorDecomposition}{%
\left|\widehat q(\cD_1;\cD_2)-q(P_0;P_0)\right|
\le&
\underbrace{
\left|\widehat q(\cD_1;\cD_2)-\widehat q(\cD_0;\cD_2)\right|
}_\text{Lemma~\ref{lem:training-contamination-quantile-error}: Training contamination error}
+
\underbrace{
\left|\widehat q(\cD_0;\cD_2)-\widehat q(P_0;\cD_2)\right|
}_\text{Lemma~\ref{lem:clean-training-sampling-quantile-error}: Training-stage sampling error}
\nonumber\\
&+
\underbrace{
\left|\widehat q(P_0;\cD_2)-q(P_0;P_2)\right|
}_\text{Lemma~\ref{lem:empirical-quantile-error-under-p2}: Calibration-stage sampling error}
+
\underbrace{
\left|q(P_0;P_2)-q(P_0;P_0)\right|
}_\text{Lemma~\ref{lem:calibration-contamination-quantile}: Calibration contamination error}%
}
\begin{align*}
\QuantileErrorDecomposition.
\end{align*}

\subsubsection{Training-Side Error}

The first term in the preceding decomposition captures how training contamination changes the calibration quantile through the fitted predictor. We control this term uniformly over the contamination distribution $Q_1$ by bounding the predictor's sensitivity to multiple training-sample replacements. Specifically, motivated by uniform algorithmic stability \citep{bousquet2002stability,liu2017algorithmic,bassily2020stability}, we use the following multi-replacement version of the one-replacement condition studied by \citet{pournaderi2024training}.

\begin{definition}[Uniform replacement stability]
\label{def:replacement-stability}
A regression learning algorithm $\mu$ is said to be $L$-uniformly replacement stable if, for every pair of training datasets $\cD$ and $\cD'$ of size $n$ that differ in at most $k$ observations,
\begin{equation*}
\left\|
\mu(\cdot;\cD)-\mu(\cdot;\cD')
\right\|_\infty
\le
Lk/n,
\end{equation*}
where $L<\infty$ is independent of $n$, $k$, $\cD$, and $\cD'$.
\end{definition}

Next, we give an explicit stability constant for $\ell_2$-regularized linear quantile regression.
\begin{restatable}[$\ell_2$-regularized linear quantile regression]{example}{RQReplacementStabilityExample}
\label{ex:rq-replacement-stability}
Suppose that $\sup_{\bbx\in\cX}\|\bbx\|_2\le B_x$. For $\cD=\{(\bX_i,Y_i)\}_{i=1}^{n}$, define the $\ell_2$-regularized linear quantile regression estimator by
\begin{equation}
\hat\omega(\cD)
:=
\argmin_{\omega\in\R^d}
\left\{
\frac{1}{n}\sum_{i=1}^{n}
\rho_\tau(Y_i-\omega^\T\bX_i)
+
\frac{\lambda}{2}\|\omega\|_2^2
\right\},
\appendixonlyequationlabel{eq:rq-objective}
\end{equation}
where $\rho_\tau(u)=u\bigl(\tau-\bbone\{u<0\}\bigr)$ is the pinball loss, $\tau\in(0,1)$, and $\lambda>0$. Then the predictor $\mu(\bbx;\cD):=\bbx^\T\hat\omega(\cD)$ is $L$-uniformly replacement stable with $L=2B_x^2\max\{\tau,1-\tau\}/\lambda$.
\end{restatable}

Uniform replacement stability yields the following bound on the calibration-quantile perturbation:
\begin{restatable}[Training-contamination quantile perturbation]
{lemma}{TrainingContaminationQuantileLemma}
\label{lem:training-contamination-quantile-error}
If $\mu$ is $L$-uniformly replacement stable, then, for all sufficiently large $n_2$, the following holds with probability at least $1-n_1^{-2}$, uniformly over $Q_1$:
\begin{equation}
\left|
\widehat q(\cD_1;\cD_2)-\widehat q(\cD_0;\cD_2)
\right|
\le
L\left(
\delta_1+\sqrt{\frac{\log n_1}{n_1}}
\right).
\appendixonlyequationlabel{eq:training-contamination-quantile-error}
\end{equation}
\end{restatable}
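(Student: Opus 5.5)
The argument splits into a deterministic step and a probabilistic step. Conditional on $(\cD_0,\cD_1)$, let $K$ be the number of indices at which the Bernoulli replacement actually changed an observation. By construction $\cD_0$ and $\cD_1$ have size $n_1$ and differ in at most $K$ observations, where $K\sim\operatorname{Binomial}(n_1,\delta_1)$ whatever $Q_1$ is. Uniform replacement stability (Definition~\ref{def:replacement-stability}) then gives
\[
\left\|\mu(\cdot;\cD_1)-\mu(\cdot;\cD_0)\right\|_\infty\le LK/n_1 .
\]
No assumption on $Q_1$ enters here.

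\emph{Deterministic step.} This sup-norm bound transfers to the calibration quantile. For every calibration point $i$, the reverse triangle inequality gives
\[
|R_{\cD_1,i}-R_{\cD_0,i}|=\bigl||Y_i-\mu(X_i;\cD_1)|-|Y_i-\mu(X_i;\cD_0)|\bigr|\le LK/n_1 .
\]
Order statistics are $1$-Lipschitz in the sup norm on score vectors: if $|a_i-b_i|\le\epsilon$ for all $i$, then $|a_{(k)}-b_{(k)}|\le\epsilon$ for every $k$. So $|R_{\cD_1,(k_{n_2})}-R_{\cD_0,(k_{n_2})}|\le LK/n_1$ for every realization of $\cD_2$. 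The phrase ``for all sufficiently large $n_2$'' is used only to guarantee $k_{n_2}\le n_2$, which holds once $n_2\ge(1-\alpha)/\alpha$. Both quantiles are then finite order statistics and the difference is well defined; otherwise both equal $+\infty$ and the statement is vacuous.

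\emph{Probabilistic step.} It remains to show $\Pb(K/n_1>\delta_1+\sqrt{\log n_1/n_1})\le n_1^{-2}$. Hoeffding's inequality for a Binomial gives
\[
\Pb\bigl(K-n_1\delta_1\ge n_1 s\bigr)\le\exp(-2n_1s^2).
\]
With $s=\sqrt{\log n_1/n_1}$ the right-hand side is $\exp(-2\log n_1)=n_1^{-2}$, exactly the stated probability. The law of $K$ depends only on $(n_1,\delta_1)$, and the deterministic step holds for every $Q_1$ and every $\cD_2$. The bound is therefore uniform over $Q_1$, and by independence also holds unconditionally over $\cD_2$.

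The only delicate point is the coupling bookkeeping: the number of differing observations must be dominated by the Binomial replacement count no matter what $Q_1$ is. This holds because a replaced point may coincide with the original, which only lowers the count. Beyond that, the constants in Hoeffding's inequality line up exactly with the exponent $2$, so no slack is lost.
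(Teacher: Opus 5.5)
Your proposal is correct and follows the paper's argument step for step. The paper uses the same Bernoulli-replacement coupling, replacement stability to get $\|\mu(\cdot;\cD_1)-\mu(\cdot;\cD_0)\|_\infty\le LK/n_1$ almost surely, Hoeffding with $s=\sqrt{\log n_1/n_1}$ for the $n_1^{-2}$ failure probability (Proposition~\ref{prop:empirical-perturbation-replacement-stability}), and then the reverse triangle inequality with the order-statistic perturbation bound (Lemma~\ref{lem:order-statistic-perturbation}) at rank $k_{n_2}$. One thing to tidy: in your opening paragraph you define $K$ as the number of observations that actually changed and also call it Binomial. Define $K:=\sum_i U_i$ as the replacement count, as you effectively do at the end, so that the number of differing observations is at most $K$.
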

Sampling stability bounds the calibration-quantile discrepancy between the fitted clean predictor and its population counterpart.
\begin{restatable}[Training-stage sampling quantile error]{lemma}{CleanTrainingSamplingQuantileLemma}
\label{lem:clean-training-sampling-quantile-error}
Under Assumption~\ref{ass:sampling-stability}, for all sufficiently large $n_1,n_2$, with probability at least
$1-\rho_{n_1}(P_0)$,
\begin{equation}
\left|\widehat q(\cD_0;\cD_2)-\widehat q(P_0;\cD_2)\right|
\le \eta_{n_1}(P_0).
\appendixonlyequationlabel{eq:clean-training-sampling-quantile-error}
\end{equation}
\end{restatable}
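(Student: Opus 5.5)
The bound rests on one deterministic fact: the $k$-th order statistic is $1$-Lipschitz in the sup-norm of the underlying score vector. The plan is to reduce the quantile discrepancy to a uniform bound on the predictor discrepancy and then apply Assumption~\ref{ass:sampling-stability} on a single high-probability event.

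\textbf{Step 1 (pointwise score comparison).} Fix a realization of $\cD_0$ and of $\cD_2=\{(X_i,Y_i)\}_{i=1}^{n_2}$. For each $i$, the reverse triangle inequality gives
\[
\bigl|R_{\cD_0,i}-R_{P_0,i}\bigr|
=\bigl||Y_i-\mu(X_i;\cD_0)|-|Y_i-\mu(X_i;P_0)|\bigr|
\le \bigl|\mu(X_i;\cD_0)-\mu(X_i;P_0)\bigr|
\le \left\|\mu(\cdot;\cD_0)-\mu(\cdot;P_0)\right\|_\infty=:\Delta .
\]
This holds simultaneously for all $i$ and for every $\cD_2$. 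In particular it does not use the law $P_2$, so contamination of the calibration sample plays no role here.

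\textbf{Step 2 (order statistics are $1$-Lipschitz in $\ell_\infty$).} Let $a,b\in\R^{n_2}$ satisfy $|a_i-b_i|\le\Delta$ for all $i$. Then $a_{(k)}\le b_{(k)}+\Delta$. To see this, note that the $k$ indices achieving $b_{(1)},\dots,b_{(k)}$ carry $a$-values at most $b_{(k)}+\Delta$. So at least $k$ coordinates of $a$ lie below $b_{(k)}+\Delta$, which gives $a_{(k)}\le b_{(k)}+\Delta$. The reverse inequality follows by symmetry, so $|a_{(k)}-b_{(k)}|\le\Delta$.

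Apply this with $k=k_{n_2}$. For all sufficiently large $n_2$ we have $k_{n_2}=\lceil(n_2+1)(1-\alpha)\rceil\le n_2$, because $\alpha>0$ fixed implies $(n_2+1)\alpha\ge1$ eventually. In that regime neither quantile is $+\infty$, and we obtain
\[
\bigl|\widehat q(\cD_0;\cD_2)-\widehat q(P_0;\cD_2)\bigr|\le\Delta
\]
deterministically. This is where the hypothesis ``sufficiently large $n_2$'' is used. Without it, both quantiles equal $+\infty$ and the difference is undefined.

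\textbf{Step 3 (probability).} Since $\cD_0\sim P_0^{n_1}$, Assumption~\ref{ass:sampling-stability} with $P=P_0$ gives, for all large $n_1$, that $\Delta<\eta_{n_1}(P_0)$ with probability at least $1-\rho_{n_1}(P_0)$. Step 2 holds for every $\cD_2$, so no independence argument between $\cD_0$ and $\cD_2$ is needed: the same event delivers the stated bound.

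\textbf{Main obstacle.} The only nontrivial point is the order-statistic Lipschitz argument in Step 2, together with the bookkeeping for the $+\infty$ case. Both are elementary, so I expect no real difficulty.
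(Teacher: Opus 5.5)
Your proof is correct and matches the paper's argument: sampling stability gives the sup-norm event $\|\mu(\cdot;\cD_0)-\mu(\cdot;P_0)\|_\infty\le\eta_{n_1}(P_0)$ with probability at least $1-\rho_{n_1}(P_0)$, the reverse triangle inequality transfers this bound to every calibration score, and the order-statistic perturbation lemma at rank $k_{n_2}$ concludes. Your extra bookkeeping, namely that $k_{n_2}\le n_2$ once $(n_2+1)\alpha\ge1$ and that the bound holds for every $\cD_2$, so no independence is needed, is consistent with the paper and slightly more explicit.
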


\subsubsection{Calibration-Side Error}

The calibration-stage error comprises empirical quantile fluctuation under $P_2$ and the contamination-induced shift of the population quantile.

\begin{restatable}[Finite-sample calibration quantile error]{lemma}{EmpiricalQuantilePtwoLemma}
\label{lem:empirical-quantile-error-under-p2}
Under Assumption~\ref{ass:score-density}, for every $\gamma\in(0,1)$
and all sufficiently large $n_2$, the following holds with probability at least $1-\gamma$:
\begin{equation}
\left|\widehat q(P_0;\cD_2)-q(P_0;P_2)\right|
\le
\frac{1}{(1-\delta_2)r_0}
\left(
\sqrt{\frac{\log(2/\gamma)}{2n_2}}
+
\frac{2}{n_2}
\right).
\appendixonlyequationlabel{eq:empirical-quantile-error-under-p2}
\end{equation}
\end{restatable}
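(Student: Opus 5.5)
Conditional on nothing, the oracle scores $R_{P_0,i}=|Y_i-\mu(X_i;P_0)|$, $i=1,\ldots,n_2$, are i.i.d.\ with CDF $G:=F_{P_0,P_2}=(1-\delta_2)F_{P_0,P_0}+\delta_2F_{P_0,Q_2}$, and $\widehat q(P_0;\cD_2)$ is their $k_{n_2}$-th order statistic. We therefore plan to combine the DKW inequality with a lower bound on the slope of $G$ near $q(P_0;P_2)$. Let $\varepsilon:=\sqrt{\log(2/\gamma)/(2n_2)}$. By DKW (Massart's constant), with probability at least $1-\gamma$ we have $\sup_t|\widehat G(t)-G(t)|\le\varepsilon$, where $\widehat G$ is the empirical CDF of the scores. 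On this event, the definition of the order statistic gives $\widehat G(\widehat q)\ge k_{n_2}/n_2$ and $\widehat G(\widehat q^-)\le (k_{n_2}-1)/n_2$. Since $1-\alpha\le k_{n_2}/n_2\le 1-\alpha+2/n_2$ for $n_2$ large (so that $k_{n_2}\le n_2$, because $\alpha>0$), it follows that
\[
G(\widehat q)\ge 1-\alpha-\varepsilon,\qquad G(\widehat q^-)\le 1-\alpha+\varepsilon+2/n_2 .
\]

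The key step is to convert these probability-level inequalities into quantile-level ones. The mixture identity gives the slope bound: for $s<t$ with $[s,t]\subset\cI$,
\[
G(t)-G(s)\ge(1-\delta_2)\bigl(F_{P_0,P_0}(t)-F_{P_0,P_0}(s)\bigr)\ge(1-\delta_2)r_0(t-s),
\]
because $F_{P_0,Q_2}$ is nondecreasing and Assumption~\ref{ass:score-density} applies. This bound holds uniformly over $Q_2$. Next we locate $q^*:=q(P_0;P_2)$. From $G\le(1-\delta_2)F_{P_0,P_0}+\delta_2$ and $G\ge(1-\delta_2)F_{P_0,P_0}$, the point $q^*$ lies in $[q_-(\delta_2),q_+(\delta_2)]\subset\cI$. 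Because $\cI$ is open, there is a margin $c>0$ with $[q_--c,q_++c]\subset\cI$.

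Upper deviation: suppose $\widehat q>q^*+h$ with $h:=(\varepsilon+2/n_2)/((1-\delta_2)r_0)+\text{tiny}$. Then $G(\widehat q^-)\ge G(q^*+h)\ge G(q^*)+(1-\delta_2)r_0 h>1-\alpha+\varepsilon+2/n_2$, using $G(q^*)\ge1-\alpha$; this contradicts the second display, provided $q^*+h\in\cI$. Lower deviation: suppose $\widehat q<q^*-h$. Then $G(q^*-h)\ge G(\widehat q)\ge1-\alpha-\varepsilon$. On the other hand, since $G(t)<1-\alpha$ for all $t<q^*$ by the definition of the quantile, and by the slope bound, $G(q^*-h)\le G(q^{*-})-(1-\delta_2)r_0h<1-\alpha-\varepsilon$, which is again a contradiction. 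Letting the tiny slack vanish gives the stated bound, which even loses a factor of $2/n_2$ only on one side.

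The main obstacle is keeping every evaluation point inside $\cI$, where the density lower bound is valid. This is exactly where "all sufficiently large $n_2$" is used: once $h\le c$, the points $q^*\pm h$ lie in $[q_--c,q_++c]\subset\cI$. To handle $\widehat q$ falling far outside $\cI$, we note that monotonicity of $G$ and $\widehat G$ lets us compare only at the points $q^*\pm h$. So the argument reduces entirely to those two points, which are inside $\cI$, and uniformity over $Q_2$ follows because nothing depends on $Q_2$ beyond monotonicity of $F_{P_0,Q_2}$.
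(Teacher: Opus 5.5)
Your proposal is correct and follows essentially the same route as the paper: DKW at level $\sqrt{\log(2/\gamma)/(2n_2)}$, the rank bound $0\le k_{n_2}/n_2-(1-\alpha)\le 2/n_2$, the mixture-identity slope bound $(1-\delta_2)r_0$, and the margin of the open interval $\cI$ around $[q_-(\delta_2),q_+(\delta_2)]$ to justify ``sufficiently large $n_2$''. The only difference is presentational: the paper sandwiches $\widehat q(P_0;\cD_2)$ between $F_{P_0,P_2}^{-1}$ evaluated at $k_{n_2}/n_2$ shifted by $\pm$ the DKW radius and then invokes the inverse-Lipschitz property, whereas you argue by contradiction directly at the two points $q(P_0;P_2)\pm h$.
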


\begin{wrapfigure}{r}{0.47\textwidth}
\centering
\includegraphics[width=\linewidth]{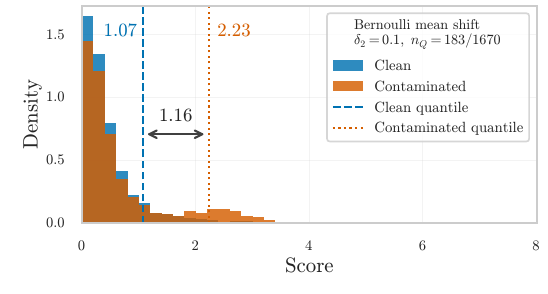}
\caption{Clean and contaminated absolute-score distributions on \texttt{Abalone}.}
\label{fig:abalone_score}
\end{wrapfigure}

Figure~\ref{fig:abalone_score} illustrates how upper-tail calibration contamination shifts the empirical score quantile on the \texttt{Abalone} dataset. The following lemma bounds the corresponding population-level shift.
\begin{restatable}[Quantile bias under calibration contamination]{lemma}{CalibrationContaminationQuantileLemma}
\label{lem:calibration-contamination-quantile}
Under Assumption~\ref{ass:score-density} and $\delta_2<\alpha$, the population
$(1-\alpha)$-quantile of the oracle score $R_{P_0}$ satisfies
\begin{equation}
\left|q(P_0;P_2)-q(P_0;P_0)\right|
\le
\frac{\delta_2(1-\alpha)}
{(1-\delta_2)r_0}.
\appendixonlyequationlabel{eq:population-quantile-error-from-delta2-contamination}
\end{equation}
\end{restatable}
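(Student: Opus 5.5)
The plan is to sandwich $q(P_0;P_2)$ between the two endpoints $q_-(\delta_2)$ and $q_+(\delta_2)$ from \eqref{eq:contamination-quantile-endpoints}, using only the mixture identity for $F_{P_0,P_2}$ and the fact that $F_{P_0,Q_2}$ is a CDF with values in $[0,1]$. Write $F:=F_{P_0,P_0}$ and $q_0:=q(P_0;P_0)=F^{-1}(1-\alpha)$. For every $t$ the mixture identity gives
\[
(1-\delta_2)F(t)\;\le\;F_{P_0,P_2}(t)\;\le\;(1-\delta_2)F(t)+\delta_2 .
\]

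\textbf{Upper endpoint.} At $t=q_+(\delta_2)$ we have $F(t)\ge (1-\alpha)/(1-\delta_2)$; by Assumption~\ref{ass:score-density}, $F$ is continuous on $\cI$, and this level is at most $1$ because $\delta_2<\alpha$. Hence $F_{P_0,P_2}(q_+)\ge 1-\alpha$, and so $q(P_0;P_2)\le q_+(\delta_2)$.

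\textbf{Lower endpoint.} For any $t<q_-(\delta_2)$, the definition of the generalized inverse gives $F(t)<(1-\alpha-\delta_2)/(1-\delta_2)$. Then $F_{P_0,P_2}(t)<1-\alpha$, so $q(P_0;P_2)\ge q_-(\delta_2)$. The level $(1-\alpha-\delta_2)/(1-\delta_2)$ lies in $(0,1)$ since $\delta_2<\alpha$.

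\textbf{Converting levels to distances.} Both $q_\pm$ and $q_0$ lie in $[q_-,q_+]\subset\cI$, where $F$ is absolutely continuous with density at least $r_0$. Hence $F$ is strictly increasing there, and for $s\le t$ in this range,
\[
F(t)-F(s)\;\ge\; r_0(t-s).
\]
Continuity gives $F(q_0)=1-\alpha$ and $F(q_\pm)$ equal to the corresponding levels. It follows that
\[
q_+-q_0\le \frac{1}{r_0}\Bigl(\frac{1-\alpha}{1-\delta_2}-(1-\alpha)\Bigr)=\frac{\delta_2(1-\alpha)}{(1-\delta_2)r_0},
\]
\[
q_0-q_-\le \frac{1}{r_0}\Bigl((1-\alpha)-\frac{1-\alpha-\delta_2}{1-\delta_2}\Bigr)=\frac{\delta_2\alpha}{(1-\delta_2)r_0}.
\]
Since $\alpha<1/2$, we have $\alpha\le 1-\alpha$, so both deviations are bounded by $\delta_2(1-\alpha)/((1-\delta_2)r_0)$. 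Combined with the sandwich $q_-\le q(P_0;P_2)\le q_+$, this proves the claim.

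\textbf{Main obstacle.} The step needing care is the generalized-inverse bookkeeping: checking that $F$ actually attains the relevant levels inside $\cI$, so that the equalities $F(q_\pm)=\text{level}$ and $F(q_0)=1-\alpha$ hold, and that the lower-endpoint argument via strict inequality is valid. I would handle this first by noting that $F$ is continuous and strictly increasing on $\cI$, which makes $F^{-1}$ an exact inverse at every level in $[F(q_-),F(q_+)]$.
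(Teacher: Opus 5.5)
Your proof is correct and follows the paper's argument. The mixture identity gives the sandwich $q_-(\delta_2)\le q(P_0;P_2)\le q_+(\delta_2)$; the density lower bound on $\cI$ converts the level gaps $\alpha\delta_2/(1-\delta_2)$ and $(1-\alpha)\delta_2/(1-\delta_2)$ into distances; and $\alpha<1/2$ gives the common bound, with your continuity check on $\cI$ making explicit the generalized-inverse step the paper only asserts.
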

The factor $r_0^{-1}$ captures local quantile sensitivity: the same probability-level perturbation can produce a larger quantile shift when the clean score density near the target quantile is smaller.

\subsubsection{Joint Interval-Length Deviation Bound}
Combining the preceding training- and calibration-stage bounds yields the following finite-sample guarantee under joint contamination.
\begin{restatable}[Interval-length deviation under joint contamination]{theorem}{JointContaminationTheorem}
\label{thm:delta}
Under Assumptions~\ref{ass:sampling-stability} and~\ref{ass:score-density}, suppose also that $\mu$ is $L$-uniformly replacement stable as in Definition~\ref{def:replacement-stability}. Fix $\delta_1\in[0,1)$ and $\delta_2\in[0,\alpha)$. For every $\gamma\in(0,1)$ and all sufficiently large $n_1,n_2$, the following holds with probability at least $1-\rho_{n_1}(P_0)-n_1^{-2}-\gamma$, uniformly over the contamination distributions $Q_1$ and $Q_2$:
\begin{equation}
\begin{aligned}
&2\left|
\widehat q(\cD_1;\cD_2)-q(P_0;P_0)
\right|
\\[-0.25em]
&\quad\le
2\eta_{n_1}(P_0)
+2L\left(\delta_1+\sqrt{\frac{\log n_1}{n_1}}\right)
+\frac{2}{(1-\delta_2)r_0}
\left(
\sqrt{\frac{\log(2/\gamma)}{2n_2}}
+\frac{2}{n_2}
+\delta_2(1-\alpha)
\right).
\end{aligned}
\appendixonlyequationlabel{eq:joint-contamination-finite-sample-bound}
\end{equation}
\end{restatable}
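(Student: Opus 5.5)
The proof combines the four-term quantile-error decomposition displayed at the start of Section~\ref{ssec:interval-length-deviation} with the four lemmas that control its pieces, followed by a union bound over the failure events. Multiplying the decomposition by $2$ turns it into a bound on the interval-length deviation $2|\widehat q(\cD_1;\cD_2)-q(P_0;P_0)|$. The decomposition is just the triangle inequality applied to the chain $\widehat q(\cD_1;\cD_2)\to\widehat q(\cD_0;\cD_2)\to\widehat q(P_0;\cD_2)\to q(P_0;P_2)\to q(P_0;P_0)$. It holds pointwise on the common probability space built through the Bernoulli replacement coupling of $(\cD_0,\cD_1)$ together with the independent $\cD_2$.

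The four terms are handled in turn.
\begin{enumerate}
\item The training-contamination term is bounded by $L(\delta_1+\sqrt{\log n_1/n_1})$ via Lemma~\ref{lem:training-contamination-quantile-error}. This holds on an event $E_1$ with $\Pb(E_1^c)\le n_1^{-2}$, uniformly over $Q_1$.
\item The training-stage sampling term is bounded by $\eta_{n_1}(P_0)$ via Lemma~\ref{lem:clean-training-sampling-quantile-error}. This holds on an event $E_2$ with $\Pb(E_2^c)\le\rho_{n_1}(P_0)$. Since $\cD_0\sim P_0^{n_1}$ regardless of the contamination, $E_2$ does not depend on $Q_1$ or $Q_2$.
\item The calibration sampling term is bounded via Lemma~\ref{lem:empirical-quantile-error-under-p2} on an event $E_3$ with $\Pb(E_3^c)\le\gamma$. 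I will check that the lemma's bound is uniform in $Q_2$. This should follow because it rests on the DKW inequality for the scores under $P_2$, whose constant is distribution-free. It also uses the lower bound $(1-\delta_2)r_0$ on the slope of $F_{P_0,P_2}$ near the quantile, which holds for every $Q_2$ by the mixture identity, since $F_{P_0,Q_2}$ is nondecreasing.
\item The calibration-contamination term is deterministic. By Lemma~\ref{lem:calibration-contamination-quantile} it is at most $\delta_2(1-\alpha)/((1-\delta_2)r_0)$, and this bound is uniform in $Q_2$ because the bracketing $q_-(\delta_2)\le q(P_0;P_2)\le q_+(\delta_2)$ holds for every $Q_2$.
\end{enumerate}

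On $E_1\cap E_2\cap E_3$, summing the four bounds and multiplying by $2$ gives exactly the right-hand side of \eqref{eq:joint-contamination-finite-sample-bound}. Here the two calibration pieces share the prefactor $2/((1-\delta_2)r_0)$. The union bound gives $\Pb\bigl((E_1\cap E_2\cap E_3)^c\bigr)\le n_1^{-2}+\rho_{n_1}(P_0)+\gamma$. The ``sufficiently large $n_1,n_2$'' requirement is the conjunction of the thresholds in the three probabilistic lemmas.

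The main subtlety is uniformity over $(Q_1,Q_2)$. The probability bounds of the lemmas must not depend on the contaminating distributions. Moreover, the events are defined on a joint space where $\cD_2$ is shared across the terms and $\cD_0,\cD_1$ are coupled. The union bound does not require independence, so the only real check is that each lemma's failure probability is a $Q$-free constant. I expect this to be immediate from their statements, which are phrased uniformly or involve only $P_0$.
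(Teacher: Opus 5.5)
Your proposal is correct and follows the paper's proof step for step. It uses the same four-term triangle-inequality decomposition and the same four lemmas, then a union bound over the three probabilistic events (failure probability at most $\rho_{n_1}(P_0)+n_1^{-2}+\gamma$) and multiplication by two. Your explicit uniformity check covers the $Q_1$-free Hoeffding event for $K$, the $Q$-free clean event for $\cD_0$, the distribution-free DKW constant, and the slope bound $(1-\delta_2)r_0$ on $F_{P_0,P_2}$ for every $Q_2$; the paper's shorter proof uses all of this implicitly. The \emph{sufficiently large $n_2$} threshold in Lemma~\ref{lem:empirical-quantile-error-under-p2}, namely $\sqrt{\log(2/\gamma)/(2n_2)}+2/n_2<(1-\delta_2)r_0d_{\cI}$, is likewise free of $Q_2$, so one threshold serves all contaminations.
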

At fixed contamination levels, the sampling terms vanish as
$n_1,n_2\to\infty$, while the contamination-dependent terms remain
in the bound.

\begin{remark}[Extensions to alternative nonconformity scores]
\label{rem:alternative-score-extensions}
The training--calibration decomposition extends to conformalized quantile regression (CQR) and adaptive residual scores under score-specific stability and clean-score regularity conditions. For CQR, the interval-length deviation additionally accounts for perturbations of the fitted lower and upper quantile regressors. For adaptive scores, a uniform additive score perturbation need not hold under arbitrary calibration contamination, so the training-side comparison is handled multiplicatively under a positive lower bound on the fitted scale.
Appendix~\ref{ssec:alternative-score-extensions} gives the corresponding assumptions, finite-sample bounds, and proofs.
\end{remark}

\begin{figure}
    \centering
    \includegraphics[width=\textwidth]{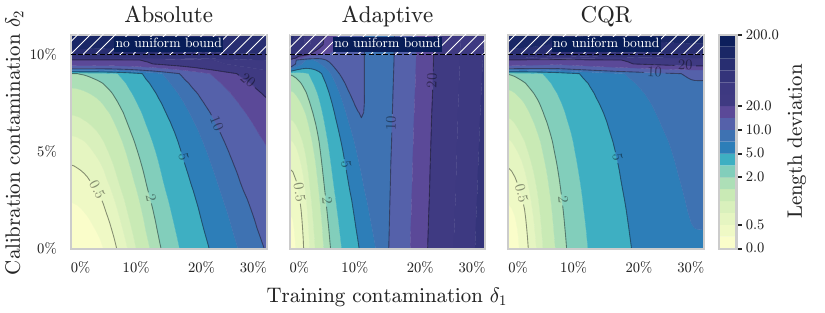}
    \caption{Joint training and calibration contamination for three conformal prediction nonconformity scores: Absolute, Adaptive, and CQR. Both stages use $Q_1=Q_2=\operatorname{Dirac}((-1,-1,-1,-1,+1),-80)$.
}
\label{fig:three_socre}
\end{figure}

\subsubsection{Set-level Discrepancy}
Beyond interval length, the discrepancy between prediction intervals
also depends on their center displacement. We therefore measure the
set-level discrepancy between the fitted interval and the clean oracle
interval by the Lebesgue measure of their symmetric difference.
Define
\[
\begin{aligned}
C_{\mathrm{clean}}(\bbx)
&:=
\left[
\mu(\bbx;P_0)-q(P_0;P_0),
\mu(\bbx;P_0)+q(P_0;P_0)
\right], \quad
C_{\mathrm{cont}}(\bbx)
:=
C_\alpha(\bbx;\cD_1,\cD_2).
\end{aligned}
\]
Let $\operatorname{Leb}$ denote the Lebesgue measure on $\R$ and
$\triangle$ denote the symmetric difference. Comparing interval endpoints
gives
\begin{align*}
&\sup_{\bbx\in\cX}
\operatorname{Leb}\!\left(
C_{\mathrm{cont}}(\bbx)\triangle C_{\mathrm{clean}}(\bbx)
\right)
\le
2\left\|\mu(\cdot;\cD_1)-\mu(\cdot;P_0)\right\|_\infty
+
2\left|
\widehat q(\cD_1;\cD_2)-q(P_0;P_0)
\right|.
\end{align*}
Assumption~\ref{ass:sampling-stability} and
Proposition~\ref{prop:empirical-perturbation-replacement-stability}
give the following bound, uniformly over $Q_1$:
\begin{align*}
\Pb\!\left\{
\left\|\mu(\cdot;\cD_1)-\mu(\cdot;P_0)\right\|_\infty
>
\eta_{n_1}(P_0)
+
L\left(\delta_1+\sqrt{\frac{\log n_1}{n_1}}\right)
\right\}
\le
\rho_{n_1}(P_0)+n_1^{-2}.
\end{align*}

The center bound uses the same training events as Theorem~\ref{thm:delta}. Combining the two bounds therefore yields the following corollary without an additional failure probability.

\begin{corollary}[Set-level discrepancy]
\label{cor:set-level-discrepancy}
Under the assumptions of Theorem~\ref{thm:delta}, for every
$\gamma\in(0,1)$ and all sufficiently large $n_1,n_2$, with probability at
least $1-\rho_{n_1}(P_0)-n_1^{-2}-\gamma$,
\begin{align*}
&\sup_{\bbx\in\cX}
\operatorname{Leb}\!\left(
C_{\mathrm{cont}}(\bbx)\triangle C_{\mathrm{clean}}(\bbx)
\right)
\\
&\quad\le
4\eta_{n_1}(P_0)
+
4L\left(\delta_1+\sqrt{\frac{\log n_1}{n_1}}\right)
+
\frac{2}{(1-\delta_2)r_0}
\left(
\sqrt{\frac{\log(2/\gamma)}{2n_2}}
+
\frac{2}{n_2}
+
\delta_2(1-\alpha)
\right).
\end{align*}
\end{corollary}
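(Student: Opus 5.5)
The plan is to combine the deterministic endpoint comparison stated just before the corollary with the two probabilistic bounds already available: the center bound and Theorem~\ref{thm:delta}. The key point is that both rest on the same training events, so the failure probabilities do not add up.

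\textbf{Step 1: the endpoint inequality.} First I would justify it. For two intervals $[a_1,b_1]$ and $[a_2,b_2]$, the symmetric difference has Lebesgue measure at most $|a_1-a_2|+|b_1-b_2|$. This holds whether the intervals overlap or not; in the disjoint case the measure is $(b_1-a_1)+(b_2-a_2)$, which is also bounded by the endpoint differences. If $\widehat q=+\infty$, the right-hand side is infinite and there is nothing to prove. Applying this with centers $\mu(\bbx;\cD_1)$ and $\mu(\bbx;P_0)$ and radii $\widehat q(\cD_1;\cD_2)$ and $q(P_0;P_0)$ gives
\[
\operatorname{Leb}\bigl(C_{\mathrm{cont}}(\bbx)\triangle C_{\mathrm{clean}}(\bbx)\bigr)\le 2|\mu(\bbx;\cD_1)-\mu(\bbx;P_0)|+2|\widehat q(\cD_1;\cD_2)-q(P_0;P_0)|.
\]
I would then take the supremum over $\bbx$.

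\textbf{Step 2: define the good events.} Let
\begin{itemize}[leftmargin=10pt]
\item $E_1$ be the sampling-stability event $\{\|\mu(\cdot;\cD_0)-\mu(\cdot;P_0)\|_\infty<\eta_{n_1}(P_0)\}$;
\item $E_2$ be the event $\{K\le n_1\delta_1+\sqrt{n_1\log n_1}\}$ from Hoeffding's inequality on $K\sim\operatorname{Binomial}(n_1,\delta_1)$, which has probability at least $1-n_1^{-2}$;
\item $E_3$ be the calibration event of Lemma~\ref{lem:empirical-quantile-error-under-p2}, which has probability at least $1-\gamma$.
\end{itemize}
On $E_2$, uniform replacement stability gives
\[
\|\mu(\cdot;\cD_1)-\mu(\cdot;\cD_0)\|_\infty\le LK/n_1\le L\bigl(\delta_1+\sqrt{\log n_1/n_1}\bigr),
\]
and the triangle inequality with $E_1$ yields the center bound. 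The proofs of Lemma~\ref{lem:training-contamination-quantile-error} and Lemma~\ref{lem:clean-training-sampling-quantile-error} use exactly these predictor-level events. Shifting every score by at most $\|\Delta\mu\|_\infty$ moves each order statistic by at most that amount, so the quantile bounds hold on the same $E_1\cap E_2$. Lemma~\ref{lem:calibration-contamination-quantile} is deterministic.

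\textbf{Step 3: combine.} On $E_1\cap E_2\cap E_3$, whose probability is at least $1-\rho_{n_1}(P_0)-n_1^{-2}-\gamma$ by a union bound, both bounds hold simultaneously.
\begin{itemize}[leftmargin=10pt]
\item The center contributes $2\eta_{n_1}+2L(\delta_1+\sqrt{\log n_1/n_1})$.
\item The radius, by Theorem~\ref{thm:delta}, contributes another $2\eta_{n_1}+2L(\cdot)$ plus the calibration term.
\end{itemize}
Summing these gives the stated constants $4$, $4$ and $2/((1-\delta_2)r_0)$, uniformly over $Q_1,Q_2$ since no step depends on them.

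\textbf{Main obstacle.} The only delicate point is verifying that Theorem~\ref{thm:delta} is realized on the same event $E_1\cap E_2$, rather than on separate events with independent failure probabilities. If the earlier proofs had used, for example, a different concentration event for $K$, I would restate them on $E_1\cap E_2$ via the order-statistic Lipschitz argument in Step 2. That argument is deterministic given the sup-norm predictor bounds, so no extra probability is lost.
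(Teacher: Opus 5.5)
Your proposal is correct and follows the paper's argument. You combine the endpoint comparison $\operatorname{Leb}(C_{\mathrm{cont}}(\bbx)\triangle C_{\mathrm{clean}}(\bbx))\le 2\|\mu(\cdot;\cD_1)-\mu(\cdot;P_0)\|_\infty+2|\widehat q(\cD_1;\cD_2)-q(P_0;P_0)|$ with the center bound from Assumption~\ref{ass:sampling-stability} and Proposition~\ref{prop:empirical-perturbation-replacement-stability} and with Theorem~\ref{thm:delta}, all on the same training events, so only $\rho_{n_1}(P_0)$, $n_1^{-2}$ and $\gamma$ enter the union bound; your check of the interval symmetric-difference inequality, including the disjoint case, fills in a step the paper states without proof.
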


\subsection{A Joint Minimax Lower Bound}
\label{ssec:joint-minimax-lower-bound}
To assess the sharpness of the component rates, we consider the location submodel
\[
\bX=0,
\qquad
Y=\theta+\varepsilon,
\qquad
\theta\in\Theta:=[-h_0/2,h_0/2],
\]
where $h_0>0$ and the continuously differentiable noise density $g$ satisfy Assumption~\ref{ass:least-favorable-location}.
Let $q_\star$ be the $(1-\alpha)$-quantile of $|\varepsilon|$, so that the clean oracle interval has length $2q_\star$. The local information scale $L_\star$ controls the distinguishability of nearby location parameters, while $r_\star$ characterizes the score density near the target quantile.
The proof combines two-point indistinguishability arguments \citep{lecam1973convergence} for the training terms with
order-statistic and coupling arguments for the calibration terms.
\begin{theorem}[Informal joint minimax lower rate]
\label{thm:joint-minimax-lower-rate-informal}
For the regular location submodel above, consider standard absolute-residual split conformal with measurable, possibly randomized training predictors $\widehat\theta(\cD_1)\in\Theta$. {} For a fixed submodel and all sufficiently large $n_1,n_2$, the local minimax interval-length deviation from $2q_\star$ is
\[
\Omega\!\left(
L_\star\bigl(n_1^{-1/2}+\delta_1\bigr)
+
\frac{n_2^{-1/2}+\delta_2}{r_\star}
\right).
\]
This lower bound holds with probability at least some constant $c_0>0$. 
\end{theorem}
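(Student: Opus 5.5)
Since the target is a sum of four terms, I would show that each term is separately a lower bound: for each term, build a pair of distributions in the submodel that no procedure can tell apart with constant probability, yet whose oracle or realized interval lengths differ by that amount. The maximum of four lower bounds is at least a quarter of their sum, which gives the stated $\Omega(\cdot)$ rate.

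Throughout, the interval length is $2\widehat q(\cD_1;\cD_2)$ with scores $|Y_i-\widehat\theta|$. Because $g$ is asymmetric (Assumption~\ref{ass:least-favorable-location}), $\tau\mapsto F^{-1}_{|\varepsilon+\tau|}(1-\alpha)$ has nonzero derivative at $\tau=0$. Consequently a location error $\widehat\theta-\theta$ of size $\Delta$ shifts the population score quantile by order $\Delta$, and $L_\star$ should be defined to absorb this derivative together with the Fisher information.

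\textbf{Training terms.} For $n_1^{-1/2}$, I would use Le Cam's two-point method with $\theta_0$ and $\theta_1=\theta_0+h n_1^{-1/2}$. Regularity of $g$ bounds $\mathrm{TV}(P_{\theta_0}^{n_1},P_{\theta_1}^{n_1})$ by a constant $<1$. Hence, with constant probability under one of the two hypotheses, $|\widehat\theta-\theta|\gtrsim n_1^{-1/2}$. For $\delta_1$, I would pick $\theta_1=\theta_0+c\delta_1$ and contaminations $Q_1^{(0)},Q_1^{(1)}$ such that $(1-\delta_1)P_{\theta_0}+\delta_1Q_1^{(0)}=(1-\delta_1)P_{\theta_1}+\delta_1Q_1^{(1)}$. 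This is possible whenever $\mathrm{TV}(P_{\theta_0},P_{\theta_1})\le\delta_1/(1-\delta_1)$, which holds for $c$ small. The contaminated training laws then coincide exactly, so $\widehat\theta$ has the same law under both hypotheses, and it must miss one $\theta_j$ by $\ge c\delta_1/2$.

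\textbf{Calibration transfer of training terms.} Take the calibration data clean and large. Conditional on $\widehat\theta$, $\widehat q$ concentrates at $q(\widehat\theta;P_\theta)=q_\star+\kappa(\widehat\theta-\theta)+O((\widehat\theta-\theta)^2)$, where $\kappa\ne0$ is the derivative above. So with constant probability $|\widehat q-q_\star|\gtrsim|\kappa|\,|\widehat\theta-\theta|-O(n_2^{-1/2})$. The $O(n_2^{-1/2})$ piece is absorbed by the calibration term in the final maximum.

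\textbf{Calibration terms.} Fix any $\widehat\theta$. For $n_2^{-1/2}$, even with $\widehat\theta=\theta$, the order statistic $R_{(k_{n_2})}$ has an asymptotically normal fluctuation of order $\sqrt{\alpha(1-\alpha)/n_2}/f(q_\star)$. By Berry--Esseen or the binomial representation, $\Pb(|\widehat q-q_\star|\ge c/(r_\star\sqrt{n_2}))\ge c_0$, where $f(q_\star)\asymp r_\star$. For $\delta_2$, I would choose $Q_2$ putting all mass on scores far above $q_\star$. By the mixture identity $F_{P_2}=(1-\delta_2)F_{P_0}+\delta_2F_{Q_2}$, the population quantile then becomes $q_+(\delta_2)$, and $q_+(\delta_2)-q_\star\ge \delta_2\alpha/((1-\delta_2)\sup f)\gtrsim\delta_2/r_\star$ when density upper and lower bounds are comparable on the interval. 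Combining this with concentration of the empirical quantile gives the deviation with probability near $1$ once $n_2^{-1/2}\ll\delta_2$. Otherwise the $n_2^{-1/2}$ term already dominates.

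\textbf{Main obstacle.} The hardest step is the training-to-length transfer. The procedure can use any randomized $\widehat\theta$, and the length depends on $\widehat\theta$ only through the smooth map $\tau\mapsto q(\tau)$. I must ensure that a nonzero derivative $\kappa$ (the asymmetry condition), applied in the local regime $|\widehat\theta-\theta|\le h_0$, converts location error into length error. The delicate case is an estimator that overshoots, so that $|\widehat\theta-\theta|$ is large but the length error cancels via nonmonotonicity of $q(\cdot)$. I would handle this by restricting $\Theta$ small enough that $q(\cdot)$ is strictly monotone on $[-h_0,h_0]$. With that restriction, a location error of size $\Delta$ always yields a length error of order $\Delta$.
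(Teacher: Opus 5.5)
Your training side matches the paper's argument. It uses Le Cam's two-point method via KL and Pinsker for $n_1^{-1/2}$, the Huber-neighborhood intersection for $\delta_1$, and transfer to $\widehat q$ through clean calibration, transversality, and a dominance split against $n_2^{-1/2}$.

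The gap is in the $\delta_2$ term. You evaluate the contaminated quantile at the oracle predictor and obtain $q_+(\delta_2)$, but the radius is computed from the scores $|Y_i-\widehat\theta_1|$. Condition on $H=\widehat\theta_1-\theta=h$. Under your high-score $Q_2$ the radius then concentrates at $F_h^{-1}\bigl((1-\alpha)/(1-\delta_2)\bigr)\approx q_g(h)+s$, with $s\asymp\delta_2/r_\star$, not at $q_+(\delta_2)$. The procedure may depend on the known $\delta_2$. The asymmetry you exploit for the training terms ($q_g'(0)\neq0$), together with continuity, supplies an offset $h^\ast$ of size $O(s/\kappa)$ at which this quantity equals $q_\star$. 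Take an interior $\theta$ such as $\theta=0$ and the estimator that projects $\tilde\theta+h^\ast$ onto $\Theta$, with $\tilde\theta$ a good estimator. It gives $\widehat q\approx q_\star$ up to the training and $n_2^{-1/2}$ errors. So your single experiment, measured against $q_\star$, does not force a deviation of order $\delta_2/r_\star$ for every $\widehat\theta\in\mathcal E_{n_1}(\Theta)$. The real obstacle is therefore not nonmonotonicity of $q_g$, which transversality already rules out. It is cancellation between the training-induced shift $q_g(H)-q_\star$ and the contamination-induced shift.

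The paper closes this with a two-point argument over $Q_2$ at $\theta=0$: a low atom at $z_-$ and a high atom at $z_+$, with effective mass truncated to $\bar\delta_2$. These are coupled through common uniforms as $\widehat q_{h,\pm}=G_{\pm,h}^{-1}(B)$ with $B:=V_{(k_{n_2})}$. On $\mathcal E_B$, which has probability at least $3/4$, these equal $F_h^{-1}\bigl((B-\bar\delta_2)/(1-\bar\delta_2)\bigr)$ and $F_h^{-1}\bigl(B/(1-\bar\delta_2)\bigr)$. They differ by at least $c\,v_{n_2}$ for \emph{every} $|h|\le h_0$. The law of $H$ depends only on $\cD_1$, so one of the two experiments forces a deviation of at least $c\,v_{n_2}/2$ with probability at least $3/8$. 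The coupling also removes the need for $\delta_2\gg n_2^{-1/2}$. Pairing clean calibration with your high atom would work the same way.

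Alternatively, you could keep a single atom by placing $\theta$ at an endpoint of $\Theta$. The constraint $\widehat\theta_1\in\Theta$ then confines $H$ to one side of $0$, where $q_g(H)-q_\star$ has a fixed sign by transversality and continuity, and you choose the atom whose shift has that sign.

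The same implicit assumption appears in your $n_2^{-1/2}$ step (``even with $\widehat\theta=\theta$''), though there it is fixable. You need anti-concentration of $\widehat q_h$ at the fixed point $q_\star$, uniformly over its center $q_g(h)$. The paper gets this from $\sup f_{B_n}\le C_B\sqrt n$ with a case split on $|q_g(h)-q_\star|\le s_0$. Berry--Esseen can supply it too, but only if you state it uniformly in $h$.
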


If $r_\star$ and $r_0$ are comparable up to fixed constant factors and the additional fixed-confidence training conditions in Appendix~\ref{ssec:fixed-confidence-matching} hold, this lower rate matches an attainable upper rate on the same fixed submodel. The comparison establishes local rate sharpness within the absolute-residual split-conformal architecture and the stated algorithm class. A formal version, including the precise assumptions and minimax
quantifiers, is given in Theorem~\ref{thm:joint-minimax-lower-bound}.

\subsection{Calibration-Specific Analysis}
\subsubsection{Exact Worst-Case Inflation, Contraction, and Breakdown}
\label{ssec:worst-case-calibration-contamination}
The preceding analysis provides an upper bound on the population
interval-length deviation induced by calibration contamination.
We now sharpen this analysis by characterizing the exact worst case
over the contamination distribution $Q_2$, and by identifying
contaminating distributions that attain the maximal inflation and contraction.
Define
\begin{equation*}
\mathfrak D_\alpha(\delta_2)
:=
\sup_{Q_2}
2\left|q(P_0;P_2)-q(P_0;P_0)\right|.
\end{equation*}

\begin{restatable}[Exact worst-case calibration deviation]
{proposition}{WorstCaseCalibrationProposition}
\label{prop:worst-case-length-deviation}
Suppose $F_{P_0,P_0}(t)<1$ for every finite $t$ and
$\alpha\in(0,1/2)$. Then, for $\delta_2\in[0,1)$,
\begin{equation}
\mathfrak D_\alpha(\delta_2)
=
\begin{cases}
2\max\left\{
q_+(\delta_2)-q(P_0;P_0),
q(P_0;P_0)-q_-(\delta_2)
\right\},
& 0\le\delta_2<\alpha,\\[0.5em]
+\infty,
& \alpha\le\delta_2<1.
\end{cases}
\appendixonlyequationlabel{eq:worst-case-length-deviation}
\end{equation}
\end{restatable}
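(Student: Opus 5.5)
Write $F:=F_{P_0,P_0}$ and $q_0:=q(P_0;P_0)=F^{-1}(1-\alpha)$. For any $Q_2$ let $G:=F_{P_0,Q_2}$, so that the mixture identity of Section~\ref{sec:contamination} gives
\[
F_{P_0,P_2}=(1-\delta_2)F+\delta_2 G .
\]
The plan is to sandwich the contaminated quantile $q(P_0;P_2)=F_{P_0,P_2}^{-1}(1-\alpha)$ between $q_-(\delta_2)$ and $q_+(\delta_2)$ for every $Q_2$, and then to show that both endpoints are attained by explicit point-mass contaminations. We then take the larger of the two one-sided deviations. The case $\delta_2\ge\alpha$ will be handled by a mass-pushing construction.

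\textbf{Upper bound for $\delta_2<\alpha$.} Since $0\le G\le 1$, we have
\[
(1-\delta_2)F(t)\le F_{P_0,P_2}(t)\le (1-\delta_2)F(t)+\delta_2 .
\]
Take $t=q_+(\delta_2)$. Right-continuity of $F$ gives $F(q_+)\ge (1-\alpha)/(1-\delta_2)$; this level is in $(0,1)$ because $\delta_2<\alpha$, so $q_+$ is finite. Hence $F_{P_0,P_2}(q_+)\ge 1-\alpha$ and $q(P_0;P_2)\le q_+$.

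For the lower endpoint, take any $t<q_-(\delta_2)$. Then $F(t)<(1-\alpha-\delta_2)/(1-\delta_2)$, so $F_{P_0,P_2}(t)<1-\alpha$, and therefore $q(P_0;P_2)\ge q_-$. Monotonicity of $F^{-1}$ gives $q_-\le q_0\le q_+$. Consequently
\[
2|q(P_0;P_2)-q_0|\le 2\max\{q_+-q_0,\;q_0-q_-\}.
\]
The generalized-inverse steps need care when $F$ has flat parts or atoms, but only right-continuity and the defining infimum are used.

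\textbf{Attainment.} For inflation, let $Q_2$ be such that the oracle score $R_{P_0}$ exceeds every finite level. No distribution on $\cX\times\cY$ puts a score at $+\infty$, so we instead take $Q_2$ to be a point mass at $(\bbx_0,y)$ with $|y-\mu(\bbx_0;P_0)|=M$ and $M$ large.

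For $t<M$ we have $G(t)=0$, so $F_{P_0,P_2}(t)=(1-\delta_2)F(t)$. If $M>q_+$, then for $t<q_+$ the definition of $q_+$ gives $(1-\delta_2)F(t)<1-\alpha$, and hence $q(P_0;P_2)=q_+$ exactly. So we only need one $M$ exceeding $q_+$, and the supremum defining $\mathfrak D_\alpha$ is attained.

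For contraction, take $Q_2$ to be a point mass giving score $0$ (choose $y=\mu(\bbx_0;P_0)$), so $G\equiv 1$ on $[0,\infty)$. Then $F_{P_0,P_2}=(1-\delta_2)F+\delta_2$ on $[0,\infty)$, whose $(1-\alpha)$-quantile is exactly $q_-$. Scores are nonnegative, and $q_-\ge 0$ because $(1-\alpha-\delta_2)/(1-\delta_2)>0$. Equality with the upper bound follows.

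\textbf{Breakdown for $\delta_2\ge\alpha$.} Use the same point mass at score $M$. Because $F(t)<1$ for every finite $t$, for all $t<M$ we get
\[
F_{P_0,P_2}(t)=(1-\delta_2)F(t)<1-\delta_2\le 1-\alpha .
\]
Hence $q(P_0;P_2)\ge M$, and letting $M\to\infty$ gives $\mathfrak D_\alpha=+\infty$. The same hypothesis $F<1$ is what guarantees that the score level $M$ can be made arbitrarily large while $\mu(\cdot;P_0)$ stays fixed.

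I expect the main obstacle to be the careful handling of the generalized inverse at the endpoints, namely showing equality rather than just inequalities when $F$ may be flat near $q_\pm$. To handle this I would argue directly from the infimum definition and right-continuity, not from density assumptions, since the proposition does not assume Assumption~\ref{ass:score-density}.
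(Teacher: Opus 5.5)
Your proposal is correct and follows the paper's proof essentially step for step: the mixture sandwich $q_-(\delta_2)\le q(P_0;P_2)\le q_+(\delta_2)$, attainment by the point masses with score $M>q_+(\delta_2)$ and score $0$, and the $M\to\infty$ point-mass construction for $\delta_2\ge\alpha$. One small correction to your closing remark: the hypothesis $F<1$ is needed only for the strict inequality at $\delta_2=\alpha$, since for $\delta_2>\alpha$ one already has $(1-\delta_2)F\le 1-\delta_2<1-\alpha$; the freedom to make $M$ arbitrarily large comes simply from choosing $y\in\R$ freely.
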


For $\delta_2<\alpha$, both the maximal inflation and maximal contraction
are attained by point masses. For any $\bbx_0\in\cX$ and $M\ge 0$, let
\[
Q_M
:=
\operatorname{Dirac}(\bbx_0,\mu(\bbx_0;P_0)+M),
\]
which places all contaminated score mass at $M$. Then $M=0$ attains
maximal contraction, whereas any $M>q_+(\delta_2)$ attains maximal
inflation. For $\delta_2\ge\alpha$, letting $M\to\infty$ yields
arbitrarily large inflation. Hence, the breakdown is driven by interval inflation: contraction remains
bounded, whereas maximal inflation diverges as $\delta_2\uparrow\alpha$.
Thus, $\alpha$ is the exact breakdown threshold.

\subsubsection{Nominal Miscoverage-Level Correction}
\label{ssec:nominal-level-correction}

We now turn from interval-length robustness to coverage under the clean distribution $P_0$. When the calibration sample follows the contaminated distribution $P_2$, we seek a nominal miscoverage level $\alpha'$ that guarantees coverage at least $1-\alpha$ under $P_0$. Let $\alpha\in(0,1)$ be the target miscoverage level under $P_0$, and let
$\alpha'\in(0,1)$ be the nominal miscoverage level used by split conformal prediction with calibration data from $P_2$. Let $\Coverage_P(\alpha')$ denote the marginal coverage of the resulting prediction interval when the test observation follows $P$. \citet{barber2023conformal} give a multiplicative correction of $\alpha'_{\mathrm{tight}} := \alpha(1-\delta_2)$, which guarantees
\[
\Coverage_{P_0}(\alpha'_{\mathrm{tight}})
\ge
1-\alpha.
\]
We further show that this correction is asymptotically sharp.
\begin{restatable}[Asymptotic sharpness of the mixture-based correction]
{proposition}{NominalCorrectionSharpnessProposition}
\label{prop:nominal-correction-sharpness}
For every $\alpha'\in\bigl(\alpha(1-\delta_2),1\bigr)$, there exist a clean
distribution $P_0$, a contamination distribution $Q_2$, and an
absolute-residual split conformal procedure such that, for all sufficiently
large $n_2$,
\[
\Coverage_{P_2}(\alpha')
\ge
1-\alpha',
\qquad
\Coverage_{P_0}(\alpha')
<
1-\alpha.
\]
\end{restatable}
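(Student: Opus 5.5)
We prove the proposition with an explicit construction. We take the training predictor to be trivial, $\mu\equiv 0$, with $\bX$ degenerate. The score is then $R=|Y|$, and it suffices to choose score distributions directly. The idea is to place all contaminated mass at score $0$. This is the maximal-contraction configuration identified after Proposition~\ref{prop:worst-case-length-deviation}, and it pulls the calibration quantile down as far as possible. Concretely, let $Q_2=\operatorname{Dirac}(0,0)$, so that contaminated scores equal $0$. Let $P_0$ have a continuous score distribution, e.g.\ $|Y|\sim\mathrm{Unif}[0,1]$ under $P_0$, so that $F_{P_0,P_0}(t)=t$ on $[0,1]$. Then the score CDF under $P_2$ is $F_{P_2}(t)=\delta_2+(1-\delta_2)t$ for $t\in[0,1]$.

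\textbf{Step 1 (coverage under $P_2$).} Split conformal calibrated on $\cD_2\sim P_2^{n_2}$ and tested on an exchangeable draw from $P_2$ has coverage at least $1-\alpha'$ by the standard exchangeability guarantee recalled in the preliminaries. This gives the first inequality for every $n_2$, with no further work.

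\textbf{Step 2 (limit of the calibration quantile).} The empirical quantile $\widehat q=R_{(k_{n_2})}$ converges almost surely to $q(P_0;P_2)=F_{P_2}^{-1}(1-\alpha')$, since $F_{P_2}$ is strictly increasing near that level. If $1-\alpha'\le\delta_2$, the population quantile is $0$, and a large fraction of scores sit exactly at $0$. Solving $\delta_2+(1-\delta_2)t=1-\alpha'$ gives $t^\ast=(1-\alpha'-\delta_2)/(1-\delta_2)$, truncated at $0$. Clean coverage conditional on $\cD_2$ is $F_{P_0,P_0}(\widehat q)=\widehat q$. Hence $\Coverage_{P_0}(\alpha')=\E[\min(\widehat q,1)]\to t^\ast$.

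\textbf{Step 3 (strict gap).} We need $t^\ast<1-\alpha$. This is equivalent to $1-\alpha'-\delta_2<(1-\alpha)(1-\delta_2)$, i.e.\ $\alpha'>\alpha(1-\delta_2)$, which is exactly the hypothesis. So the limit is strictly below $1-\alpha$, and hence the coverage is below $1-\alpha$ for all sufficiently large $n_2$. The case $t^\ast=0$ is covered by the same argument.

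The main obstacle is making the convergence in Step 2 rigorous, including the finite-sample index $k_{n_2}=\lceil (n_2+1)(1-\alpha')\rceil$. We must check that $k_{n_2}\le n_2$ eventually, which holds since $\alpha'>0$. We also need $\E[\widehat q]\to t^\ast$; we obtain this by bounded convergence, since $\widehat q\in[0,1]$, together with the DKW/quantile consistency already used in Lemma~\ref{lem:empirical-quantile-error-under-p2}. The degenerate case $t^\ast=0$ needs a separate check. There, the count of zero scores exceeds $k_{n_2}$ with probability tending to one, because $\delta_2>1-\alpha'$ strictly. If equality $\delta_2=1-\alpha'$ holds, the quantile is still consistent since $F_{P_2}$ is strictly increasing to the right of $0$.
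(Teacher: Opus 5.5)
Your proposal is correct and follows the paper's proof essentially step by step. The shared steps are degenerate features with a predictor identically zero, a point mass at score zero for $Q_2$, the standard split-conformal guarantee for $P_2$-coverage, consistency of the order statistic to $G_2^{-1}(1-\alpha')$ (including the atom-at-zero case), and bounded convergence to a clean-coverage limit $\max\{0,(1-\alpha'-\delta_2)/(1-\delta_2)\}<1-\alpha$. The only difference is that the paper takes $Y\sim\cN(0,1)$ under $P_0$ rather than $|Y|\sim\mathrm{Unif}[0,1]$, which changes nothing in the argument.
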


\section{Experiments}

We evaluate interval-length deviation under training and calibration contamination on synthetic and real regression data, relative to the clean population oracle and paired clean split-conformal intervals, respectively.
The synthetic experiments isolate the two contamination stages and examine the behavior as $\delta_2$ approaches $\alpha$, while the real-data experiments assess calibration contamination under structured distributional changes. Experimental setups and supplementary results are provided in the appendices.

\subsection{Synthetic Experiments}

\begin{figure}[!t]
\centering
\begin{subfigure}{0.49\linewidth}
    \centering
    \includegraphics[width=\textwidth]{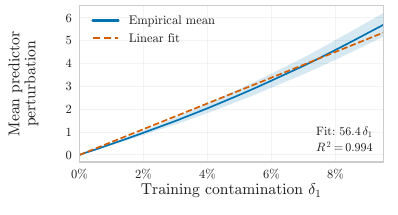}
    \caption{Predictor perturbation.}
    \label{fig:stability_constant}
\end{subfigure}
\begin{subfigure}{0.49\linewidth}
    \centering
    \includegraphics[width=\textwidth]{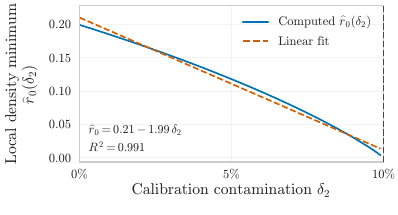}
    \caption{Local density minimum $\widehat r_0$.}
    \label{fig:lower_bound}
\end{subfigure}
\caption{Training and calibration sensitivity. (a) Predictor perturbation along a fixed contamination path. (b) Local score-density minimum $\widehat r_0(\delta_2)$ over the relevant quantile interval.}
\label{fig:Quantities}
\end{figure}

We generate $X=(X_1, \ldots, X_5)$ with $X_j\stackrel{\text{i.i.d.}}{\sim}\operatorname{Unif}([-1,1])$ and $Y=f(X)+\varepsilon$, where
\[
f(x)=\sin(\pi x_1)+x_2^2-\frac{1}{2}x_3
\]
and $\varepsilon\sim N(0,1)$. We fit $\ell_2$-regularized median regression using the 36 fixed features, with $n_1=n_2=5000$, $n_{\mathrm{test}}=10000$, and $\alpha=0.1$. 

Figure~\ref{fig:Quantities} illustrates the two sensitivity mechanisms underlying the interval-length deviation. Figure~\ref{fig:stability_constant} shows that the fitted-predictor perturbation grows approximately linearly with $\delta_1$ under the chosen contamination setting. The fitted slope therefore reflects the observed training-side sensitivity in this setting, rather than estimating the uniform stability constant $L$. Figure~\ref{fig:lower_bound} shows that the computed minimum of the clean score density over the relevant quantile interval, denoted by $\widehat r_0(\delta_2)$, decreases as $\delta_2$ increases, indicating greater calibration sensitivity through the inverse-density factor. Together, these patterns are consistent with the roles of $L$ and $r_0^{-1}$ in Theorem~\ref{thm:delta}.

Figure~\ref{fig:train_calibration} separates the effects of training and calibration contamination. The training-only deviation increases steadily with $\delta_1$, whereas the calibration-only deviation rises sharply as $\delta_2$ approaches $\alpha$, consistent with the breakdown behavior in Proposition~\ref{prop:worst-case-length-deviation}. Figure~\ref{fig:joint_contamination} shows that these effects compound under joint contamination: increasing $\delta_1$ raises the deviation across calibration levels, while the dependence on $\delta_2$ becomes increasingly steep near $\alpha$. These results are consistent with the distinct training- and calibration-side contributions in our theoretical analysis.

\begin{figure}[!t]
\centering
\begin{subfigure}[b]{0.49\linewidth}
    \centering
    \includegraphics[width=\linewidth]{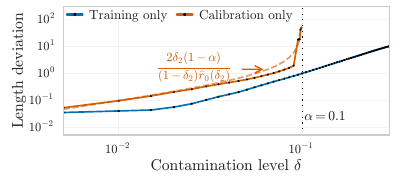}
    \caption{Training-only vs. calibration-only.}
    \label{fig:train_calibration}
\end{subfigure}
\begin{subfigure}[b]{0.49\linewidth}
    \centering
    \includegraphics[width=\linewidth]{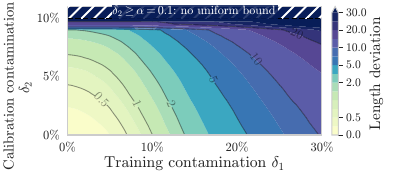}
    \caption{Joint contamination over $(\delta_1,\delta_2)$.}
    \label{fig:joint_contamination}
\end{subfigure}
\caption{Mean interval-length deviation under point-mass contamination: $Q_1=Q_2=\operatorname{Dirac}(x^\star,-80)$, with $x^\star=(-1,-1,-1,-1,+1)$.}
\label{fig:joint_length}
\vspace{-0.5cm}
\end{figure}

\begin{figure}[!t]
\centering
\includegraphics[width=\textwidth]{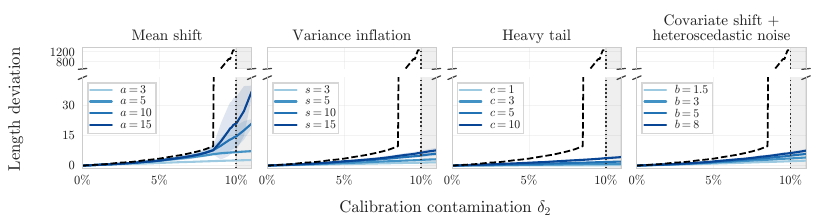}
\caption{Calibration-only interval-length deviation on \texttt{Abalone}
under four structured contamination families using the absolute-residual score.
The dashed curve shows the finite high-score Dirac reference with score $200$ for $\delta_2<\alpha$.}
\label{fig:various_contami}
\end{figure}

\subsection{Real-Data Experiments}

We evaluate calibration contamination on four real-world regression datasets: \texttt{Abalone}, \texttt{cadata}, \texttt{CPU Small}, and \texttt{YearPredictionMSD}~\citep{chang2011libsvm}. At $\alpha=0.1$, we consider four structured contamination families: mean shift, variance inflation, heavy-tailed noise, and covariate shift with heteroscedastic noise.

Figure~\ref{fig:various_contami} presents the \texttt{Abalone} results in the main text; results for the remaining datasets are reported in Appendix~\ref{app:real_data}.Across the four contamination families, the interval-length deviation generally increases with both $\delta_2$ and contamination severity. Several settings exhibit a sharp increase as $\delta_2$ approaches $\alpha$, consistent with the increasing calibration sensitivity predicted by our analysis. The structured contaminations nevertheless remain below the finite-score Dirac curve over most of the pre-breakdown regime, with substantial variation across contamination families.

\section{Conclusion and Future work}
We analyze how training and calibration contamination alter split-conformal interval length relative to the clean population oracle. Under Huber contamination, we establish a finite-sample upper bound uniform over contaminating distributions and a minimax lower bound matching its rate up to constant factors within a specified model class. For calibration contamination, we further characterize the exact worst-case population interval-length deviation. Future work could study contamination effects for conformal procedures that explicitly optimize interval length or prediction-set volume.

\newpage
\subsection*{AI use statement}
We used generative AI tools to assist with language editing, terminology improvement, literature searches, and reference checks. For the mathematical analyses, these tools assisted with developing and checking proofs, reviewing assumptions and derivations, and clarifying notation. For the experiments, they assisted with reviewing the experimental design, implementing and debugging code, running simulations, analyzing results, and preparing figures. The authors take full responsibility for the final manuscript, including its mathematical claims, code, and experimental results.

\subsection*{Ethics statement}
Conformal prediction can help users assess the reliability of regression predictions, particularly in high-stakes applications. Real-world datasets may contain contaminated observations, which can undermine the validity and informativeness of prediction intervals. We study how contamination in the training and calibration samples affects the coverage and interval length of split conformal prediction. Our analysis aims to clarify the reliability, efficiency, and limitations of split conformal prediction under contamination.

The theoretical guarantees rely on the assumptions stated in this paper, and practical use in high-stakes settings requires application-specific validation. Our experiments use synthetic data and publicly available regression benchmarks, and no new data were collected from human participants.
\subsection*{Reproducibility statement}
To support reproducibility, we document the experimental procedures and implementation details.
Appendix~\ref{app:common_protocol} describes the common experimental pipeline, prediction-set construction, evaluation metrics, and aggregation across repetitions.
Appendices~\ref{app:experimental_details} and~\ref{app:real_data} detail the synthetic and real-data experiments, including data generation or preprocessing, data splits, model training, contamination mechanisms, and hyperparameter settings.
Appendix~\ref{app:grids_diagnostics} specifies the parameter grids, diagnostic computations, random seeds, and software versions.
The fixed-fraction replacement experiments are described in Appendix~\ref{app:bernoulli-fixed-fraction}.
We provide the source code in the supplementary material.

\bibliography{ref}
\bibliographystyle{iclr2027_conference}

\newpage
\appendix

\let\addcontentsline\appendixtocaddcontentsline
\begingroup
\renewcommand{\contentsname}{Appendix Contents}
\setcounter{tocdepth}{2}
\tableofcontents
\endgroup
\clearpage

\section{List of Notations} \label{sec:appendix-notation}

    \begin{longtable}{p{4cm} p{9cm}}
    \hline
    \textbf{Symbol} & \textbf{Description} \\
    \hline
    \endfirsthead

    \hline
    \textbf{Symbol} & \textbf{Description} \\
    \hline
    \endhead

    \hline
    \endfoot

    \hline
    \endlastfoot

    \multicolumn{2}{l}{\textbf{Data, distributions, and contamination}} \\
    $(\bX_i,Y_i)$, $\bbx\in\cX\subseteq\R^d$ & Observation pairs and a generic feature value in the $d$-dimensional feature space. \\
    $\cD_0,\cD_1,\cD_2$; $n_1,n_2$ & Coupled clean training sample, contaminated training sample, calibration sample, and their sample sizes. \\
    $P$ & A generic distribution on the observation space; subscripts identify its role. \\
    $P_0$ & Clean distribution. \\
    $Q$, $\delta$, $P_\delta$ & Contamination distribution, contamination level, and Huber mixture $P_\delta=(1-\delta)P_0+\delta Q$. \\
    $P_j,Q_j,\delta_j$, $j\in\{1,2\}$ & Training-side ($j=1$) and calibration-side ($j=2$) contaminated laws, contaminating laws, and contamination levels, with $P_j=(1-\delta_j)P_0+\delta_jQ_j$. \\

    \multicolumn{2}{l}{\textbf{Split conformal construction and clean target}} \\
    $A,B$ & Generic training source $A$ (a population law or realized dataset) and evaluation distribution $B$. \\
    $\mu(\cdot;\cD_n)$, $\mu(\cdot;P)$ & Fitted predictor and its deterministic population target under training law $P$. \\
    $\|g\|_\infty$ & Supremum norm $\sup_{\bbx\in\cX}|g(\bbx)|$. \\
    $\eta_n(P),\rho_n(P)$ & Sampling-stability error and failure-probability sequences in Assumption~\ref{ass:sampling-stability}. \\
    $K$ & Number of contaminated training observations. \\
    $\alpha$ & Target miscoverage level under the clean distribution $P_0$. \\
    $\gamma$ & User-specified failure probability in finite-sample deviation bounds. \\
    $R_A(\bbx,y)$, $R_{A,i}$ & Absolute-residual score and its value on calibration observation $i$. \\
    $F_{A,B}$, $\widehat F_{A,\cD_2}$ & Population and calibration empirical CDFs of the absolute-residual score. \\
    $q(A;B)$, $\widehat q(A;\cD_2)$ & Population $(1-\alpha)$-quantile and conformal empirical quantile. \\
    $k_{n_2}$ & Conformal order-statistic rank $\lceil(n_2+1)(1-\alpha)\rceil$. \\
    $C_\alpha(\cdot;\cD_1,\cD_2)$ & Split-conformal prediction interval; its length is $2\widehat q(\cD_1;\cD_2)$, and the clean oracle length is $2q(P_0;P_0)$. \\
    $q_-(\delta_2),q_+(\delta_2)$, $f_{P_0,P_0}$, $r_0$ & Local quantile endpoints, clean oracle-score density, and its positive lower bound in Assumption~\ref{ass:score-density}. \\
    $\mathfrak D_\alpha^+(\delta_2)$, $\mathfrak D_\alpha^-(\delta_2)$, $\mathfrak D_\alpha(\delta_2)$ & Worst-case interval-length inflation, contraction, and absolute deviation under calibration contamination. \\
    $L$, $k$ & Absolute-score replacement-stability constant and maximum number of replaced training observations. \\
    $B_x$, $\rho_\tau$, $\tau$, $\lambda$, $\omega$, $\hat\omega(\cD_1)$ & Feature-norm bound, pinball loss, quantile level, regularization parameter, coefficient vector, and fitted regularized quantile-regression estimator in Example~\ref{ex:rq-replacement-stability}. \\

    \multicolumn{2}{l}{\textbf{Alternative nonconformity scores}} \\
    $S$ & Score type in $\{\mathrm{Absolute},\mathrm{CQR},\mathrm{Adaptive}\}$. \\
    $\ell(\cdot;A),u(\cdot;A)$ & Lower and upper CQR regressors associated with training source $A$. \\
    $\mu(\cdot;A),\sigma(\cdot;A)$ & Location and scale regressors for the adaptive score; $\mu$ is the same location learner used by the absolute score. \\
    $R^S(A;\bbx,y)$, $R_i^S(A)$ & Score-$S$ nonconformity function and its calibration values. \\
    $F_{P_0,P}^S$, $q_S(P_0;P)$, $\widehat q_S(A;\cD_2)$ & Clean-oracle score CDF, population quantile, and conformal empirical quantile for score $S$. \\
    $\eta_{n_1}^{\mathrm{CQR}}(P_0)$, $\eta_{g,n_1}^{\mathrm{Adaptive}}(P_0)$ & Clean-sampling errors for the CQR regressors and adaptive component $g\in\{\mu,\sigma\}$. \\
    $L_{\mathrm{CQR}}$, $L_g^{\mathrm{Adaptive}}$ & Componentwise replacement-stability constants for the CQR regressors and adaptive component $g\in\{\mu,\sigma\}$. \\
    $\varepsilon_{g,n_1}^{\mathrm{Adaptive}}$ & Clean-anchored adaptive-component perturbation $\eta_{g,n_1}^{\mathrm{Adaptive}}(P_0)+L_g^{\mathrm{Adaptive}}(\delta_1+\sqrt{\log(n_1)/n_1})$.\\
    $[z]_+$ & Positive part $\max\{z,0\}$. \\
    $\varepsilon_{\mathrm{train},n_1}^S$, $\rho_{n_1}^S(P_0)$ & Training contribution to the score-quantile deviation and clean-sampling failure probability for score $S$. \\
    $\varepsilon_{\mathrm{cal},n_2}^S(\gamma)$ & Calibration contamination-and-sampling contribution to the score-quantile deviation. \\
    $\varepsilon_{q,n_1,n_2}^S(\gamma)$ & Total score-quantile deviation bound for score $S$. \\
    $r_0^S$ & Local positive density lower bound for the clean distribution of score $S$. \\
    $\underline\sigma$, $\overline\sigma$, $\kappa_{\sigma,n_1}^{\mathrm{Adaptive}}$, $\beta_{\mu,n_1}^{\mathrm{Adaptive}}$ & Global positive lower scale bound, clean-scale upper bound, relative scale perturbation $\varepsilon_{\sigma,n_1}^{\mathrm{Adaptive}}/\underline\sigma$, and normalized location perturbation $\varepsilon_{\mu,n_1}^{\mathrm{Adaptive}}/\underline\sigma$. \\

    \multicolumn{2}{l}{\textbf{Joint minimax lower bound}} \\
    $\Theta$, $P_\theta$, $\theta$ & Parameter space, clean submodel law, and its location parameter in the minimax lower bound. \\
    $q_\star$, $L_\star$, $h_0$, $s_0$, $r_\star$ & Clean radius, information scale, local radii, and score-density scale of the least-favorable location submodel. \\
    $\widehat\theta$, $\widehat\theta_1$, $\widehat q(u;\cD_2)$ & Estimator, its realized output $\widehat\theta(\cD_1)$, and the calibration quantile for the constant predictor $u\in\Theta$. \\
    $\mathcal E_{n_1}(\Theta)$ & Class of measurable, possibly randomized, $\Theta$-valued estimators in the lower bound. \\
    $\Pb_{\theta,Q_1,Q_2}$ & Joint probability law used in the minimax lower-bound experiments. \\
    $\mathfrak p_{n_1,n_2}^{\mathrm{split}}(t)$ & Minimax tail-probability profile for interval-length deviation. \\
    $a_{n_1}$, $b_{n_2}$ & Training-side and calibration-side minimax difficulty scales. \\
    $x\wedge y$ & Minimum $\min\{x,y\}$. \\

    \multicolumn{2}{l}{\textbf{Set discrepancy and coverage correction}} \\
    $C_{\mathrm{clean}},C_{\mathrm{cont}}$ & Clean oracle and contaminated split-conformal intervals in the set-discrepancy comparison. \\
    $\operatorname{Leb}$, $\triangle$ & Lebesgue measure on $\R$ and symmetric difference. \\
    $\alpha'$, $\alpha'_{\mathrm{tight}}$, $\alpha'_{\mathrm{naive}}$ & Nominal miscoverage level and its mixture-based and naive corrections. \\
    $\Coverage_P(\alpha')$ & Marginal coverage at nominal miscoverage $\alpha'$ under test law $P\in\{P_0,P_2,Q_2\}$. \\
    $O_{\Pb}(\cdot)$ & Big-O in probability. \\
\end{longtable}

\section{Proofs}

\subsection{Proofs of Lemmas for Theorem~\ref{thm:delta}}
\label{ssec:proof_lemmas_thm_delta}

\begin{lemma}[Order-statistic perturbation]
\label{lem:order-statistic-perturbation}
Let $a_1,\ldots,a_m$ and $b_1,\ldots,b_m$ be two collections of real
numbers. If
\[
\max_{1\le i\le m}|a_i-b_i|
\le c,
\]
then, for every $j\in\{1,\ldots,m\}$,
\[
|a_{(j)}-b_{(j)}|
\le c,
\]
where $a_{(j)}$ and $b_{(j)}$ denote the corresponding $j$-th order
statistics.
\end{lemma}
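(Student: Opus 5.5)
The plan is to prove the claim through a counting characterization of order statistics. For any real collection $a_1,\ldots,a_m$ and any $t\in\R$, the $j$-th order statistic satisfies
\[
a_{(j)}\le t \iff \#\{i: a_i\le t\}\ge j .
\]
This holds because $a_{(j)}$ is the smallest value $t$ at which at least $j$ of the $a_i$ are $\le t$. Ties do not cause trouble: the counting function $t\mapsto\#\{i:a_i\le t\}$ is right-continuous and nondecreasing, so the equivalence holds for every $t$.

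To get the upper bound $a_{(j)}\le b_{(j)}+c$, fix $j$ and set $t:=b_{(j)}+c$.
\begin{itemize}
\item By the characterization applied to the $b_i$, at least $j$ indices satisfy $b_i\le b_{(j)}$.
\item For each such index, the hypothesis $|a_i-b_i|\le c$ gives $a_i\le b_i+c\le b_{(j)}+c=t$.
\item Hence $\#\{i:a_i\le t\}\ge j$, and the characterization applied to the $a_i$ yields $a_{(j)}\le b_{(j)}+c$.
\end{itemize}

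The hypothesis is symmetric in the two collections. Exchanging their roles therefore gives $b_{(j)}\le a_{(j)}+c$, and the two inequalities together give $|a_{(j)}-b_{(j)}|\le c$.

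An alternative route is to write $a_{(j)}=\min_{|S|=m-j+1}\max_{i\in S}a_i$ and note that min and max are $1$-Lipschitz in the sup norm. I would use the counting argument instead, since it is more elementary.

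There is no real obstacle here. The only point needing care is justifying the counting characterization when ties are present, which the right-continuity remark above handles.
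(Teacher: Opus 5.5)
Your proposal is correct and follows essentially the same argument as the paper. Both count at least $j$ indices with $b_i\le b_{(j)}$, deduce that at least $j$ of the $a_i$ are at most $b_{(j)}+c$, conclude $a_{(j)}\le b_{(j)}+c$, and finish by symmetry. The counting characterization you state explicitly is exactly the step the paper uses implicitly.
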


\begin{proof}
At least $j$ indices satisfy $b_i\le b_{(j)}$. For every such index,
$a_i\le b_i+c\le b_{(j)}+c$. Hence at least $j$ elements of the first
collection are no greater than $b_{(j)}+c$, which implies
$a_{(j)}\le b_{(j)}+c$. Interchanging the two collections gives
$b_{(j)}\le a_{(j)}+c$, and the result follows.
\end{proof}

\begin{proposition}[Training-contamination perturbation]
\label{prop:empirical-perturbation-replacement-stability}
If $\mu$ is $L$-uniformly replacement stable as in
Definition~\ref{def:replacement-stability}, then, for $n_1\ge2$, with
probability at least $1-n_1^{-2}$,
\begin{equation}
\left\|
\mu(\cdot;\cD_1)-\mu(\cdot;\cD_0)
\right\|_\infty
\le
L\left(
\delta_1+\sqrt{\frac{\log n_1}{n_1}}
\right),
\label{eq:empirical-perturbation-high-probability}
\end{equation}
uniformly over the contamination distribution $Q_1$.
\end{proposition}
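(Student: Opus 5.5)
The proof combines uniform replacement stability with a binomial tail bound on the number of replaced observations. Under the Bernoulli replacement coupling of Section~\ref{sec:contamination}, $\cD_0$ and $\cD_1$ differ in at most $K$ positions, where $K\sim\operatorname{Binomial}(n_1,\delta_1)$. The indicators of which positions are replaced are independent Bernoulli$(\delta_1)$ variables. Their law does not depend on $Q_1$; the replacement draws from $Q_1$ enter only through the values placed at those positions. Applying Definition~\ref{def:replacement-stability} pointwise on the sample space, with $k=K$, gives deterministically
\[
\left\|\mu(\cdot;\cD_1)-\mu(\cdot;\cD_0)\right\|_\infty\le \frac{LK}{n_1}.
\]
This holds whatever values $Q_1$ produces, since the stability constant is uniform over all dataset pairs. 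It therefore suffices to bound $\Pb(K/n_1>\delta_1+\sqrt{\log n_1/n_1})$ by $n_1^{-2}$. Because this probability involves only the Bernoulli indicators, the resulting bound is automatically uniform over $Q_1$.

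For the tail bound I will use Hoeffding's inequality for a sum of $n_1$ independent $[0,1]$-valued variables with mean $\delta_1$:
\[
\Pb\!\left(\frac{K}{n_1}-\delta_1\ge t\right)\le \exp(-2n_1t^2).
\]
Taking $t=\sqrt{\log n_1/n_1}$ gives $\exp(-2\log n_1)=n_1^{-2}$, which is exactly the stated failure probability. The restriction $n_1\ge2$ only ensures $\log n_1>0$, so that $t>0$. The degenerate case $\delta_1=0$ needs no separate treatment: then $K=0$ almost surely and the bound is trivial.

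On the complementary event, $LK/n_1\le L(\delta_1+\sqrt{\log n_1/n_1})$, which yields \eqref{eq:empirical-perturbation-high-probability}. The two points to verify carefully are measurability and uniformity. The event is defined through $K$ alone, so its probability is the same for every $Q_1$. The only genuine step is the choice of Hoeffding's constant, which produces exactly $n_1^{-2}$; there is no substantive obstacle beyond that.
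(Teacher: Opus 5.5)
Your proposal is correct and follows the paper's proof essentially step for step. You use the Bernoulli coupling with uniform replacement stability to get the almost-sure bound $LK/n_1$. Hoeffding's inequality at $t=\sqrt{\log n_1/n_1}$ then gives the failure probability $n_1^{-2}$, which is uniform over $Q_1$ because the event depends only on the replacement indicators.
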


\begin{proof}
For $i=1,\ldots,n_1$, independently draw
\[
Z_i^{(0)}\sim P_0,\qquad
Z_i^{(Q)}\sim Q_1,\qquad
U_i\sim\operatorname{Bernoulli}(\delta_1),
\]
and set
\[
Z_i^{(1)}
:=
\begin{cases}
Z_i^{(0)},&U_i=0,\\
Z_i^{(Q)},&U_i=1.
\end{cases}
\]
Then
\[
\cD_0:=(Z_i^{(0)})_{i=1}^{n_1}\sim P_0^{n_1},
\qquad
\cD_1:=(Z_i^{(1)})_{i=1}^{n_1}\sim P_1^{n_1}.
\]
The coupled datasets $\cD_1$ and $\cD_0$ differ in at most
$K:=\sum_{i=1}^{n_1}U_i$ observations, where
$K\sim\operatorname{Binomial}(n_1,\delta_1)$. The uniform
replacement-stability bound in
Definition~\ref{def:replacement-stability} therefore gives
\begin{equation}
\|\mu(\cdot;\cD_1)-\mu(\cdot;\cD_0)\|_\infty
\le L K/n_1
\label{eq:empirical-perturbation-replacement-stability}
\end{equation}
almost surely, proving \eqref{eq:empirical-perturbation-replacement-stability}.
Hoeffding's inequality yields
\[
\Pb\!\left\{
K/n_1>
\delta_1+\sqrt{\frac{\log n_1}{n_1}}
\right\}
\le \exp(-2\log n_1)=n_1^{-2}.
\]
Combining the two displays proves
\eqref{eq:empirical-perturbation-high-probability}, uniformly over $Q_1$.
\end{proof}

\TrainingContaminationQuantileLemma*
\begin{proof}[Proof of Lemma~\ref{lem:training-contamination-quantile-error}]
On the event in \eqref{eq:empirical-perturbation-high-probability}, the reverse
triangle inequality gives, for every calibration observation,
\[
\left|R_{\cD_1,i}-R_{\cD_0,i}\right|
\le
\left|\mu(\bX_i;\cD_1)-\mu(\bX_i;\cD_0)\right|
\le L\left(\delta_1+\sqrt{\frac{\log n_1}{n_1}}\right).
\]
Applying Lemma~\ref{lem:order-statistic-perturbation} at rank
$k_{n_2}$ proves \eqref{eq:training-contamination-quantile-error}. The event
has probability at least $1-n_1^{-2}$ uniformly over $Q_1$ by
Proposition~\ref{prop:empirical-perturbation-replacement-stability}.
\end{proof}

\CleanTrainingSamplingQuantileLemma*
\begin{proof}[Proof of Lemma~\ref{lem:clean-training-sampling-quantile-error}]
By Assumption~\ref{ass:sampling-stability}, with probability at least
$1-\rho_{n_1}(P_0)$,
\[
\left\|\mu(\cdot;\cD_0)-\mu(\cdot;P_0)\right\|_\infty
\le\eta_{n_1}(P_0).
\]
On this event, for every calibration observation,
\[
\left|R_{\cD_0,i}-R_{P_0,i}\right|
\le
\left|\mu(\bX_i;\cD_0)-\mu(\bX_i;P_0)\right|
\le\eta_{n_1}(P_0).
\]
Lemma~\ref{lem:order-statistic-perturbation} at rank $k_{n_2}$ proves
\eqref{eq:clean-training-sampling-quantile-error}.
\end{proof}

\CalibrationContaminationQuantileLemma*
For reference, the fixed-source mixture identity used in the proof is
\begin{equation}
\FixedScoreMixtureIdentity.
\label{eq:fixed-score-mixture}
\end{equation}
\begin{proof}[Proof of Lemma~\ref{lem:calibration-contamination-quantile}]
Since $0\le F_{P_0,Q_2}(t)\le1$, the mixture identity
\eqref{eq:fixed-score-mixture} gives
\[
(1-\delta_2)F_{P_0,P_0}(t)
\le
F_{P_0,P_2}(t)
\le
(1-\delta_2)F_{P_0,P_0}(t)
+\delta_2.
\]
Taking generalized inverses at level $1-\alpha$ yields
\begin{equation}
F_{P_0,P_0}^{-1}
\left(
\frac{1-\alpha-\delta_2}{1-\delta_2}
\right)
\le q(P_0;P_2)
\le
F_{P_0,P_0}^{-1}
\left(
\frac{1-\alpha}{1-\delta_2}
\right).
\label{eq:calibration-quantile-sandwich}
\end{equation}
The density lower bound makes $F_{P_0,P_0}^{-1}$
$1/r_0$-Lipschitz on this quantile interval. The probability-level
distances from $1-\alpha$ to the lower and upper endpoints in
\eqref{eq:calibration-quantile-sandwich} are
\[
\frac{\alpha\delta_2}{1-\delta_2}
\qquad\text{and}\qquad
\frac{(1-\alpha)\delta_2}
{1-\delta_2},
\]
respectively. Since $\alpha<1/2$, both are bounded by $(1-\alpha)\delta_2/(1-\delta_2)$, which proves
\eqref{eq:population-quantile-error-from-delta2-contamination}.
\end{proof}

\EmpiricalQuantilePtwoLemma*
\begin{proof}[Proof of Lemma~\ref{lem:empirical-quantile-error-under-p2}]
For any $s<t$ in $\cI$, the mixture identity, monotonicity of
$F_{P_0,Q_2}$, and the density lower bound imply
\begin{equation}
F_{P_0,P_2}(t)-F_{P_0,P_2}(s)
\ge
(1-\delta_2)r_0(t-s).
\label{eq:specific-calibration-cdf-growth}
\end{equation}
Consequently, the generalized inverse of $F_{P_0,P_2}$ is
$1/((1-\delta_2)r_0)$-Lipschitz whenever the relevant
quantiles lie in $\cI$.

Let
\[
\widehat F_{P_0,\cD_2}(t)
:=
\frac{1}{n_2}\sum_{i=1}^{n_2}
\bbone\{R_{P_0}(\bX_i,Y_i)\le t\}
\]
be the empirical CDF of the oracle calibration scores, and set
\[
\varepsilon_{n_2}(\gamma)
:=
\sqrt{\frac{\log(2/\gamma)}{2n_2}},
\qquad
u_{n_2}:=\frac{k_{n_2}}{n_2}.
\]
The Dvoretzky--Kiefer--Wolfowitz inequality gives, with probability at
least $1-\gamma$,
\[
\sup_t
\left|\widehat F_{P_0,\cD_2}(t)-F_{P_0,P_2}(t)\right|
\le
\varepsilon_{n_2}(\gamma).
\]
On this event,
\[
F_{P_0,P_2}^{-1}
\left(u_{n_2}-\varepsilon_{n_2}(\gamma)\right)
\le
\widehat q(P_0;\cD_2)
\le
F_{P_0,P_2}^{-1}
\left(u_{n_2}+\varepsilon_{n_2}(\gamma)\right).
\]
Since $k_{n_2}=\lceil(n_2+1)(1-\alpha)\rceil$ and
$k_{n_2}\le n_2$ for all sufficiently large $n_2$,
\[
0\le u_{n_2}-(1-\alpha)\le\frac{2}{n_2}.
\]
Because $\cI$ is open and contains the compact interval
$[q_-(\delta_2),q_+(\delta_2)]$, choose $d_{\cI}>0$ such that
\[
[q_-(\delta_2)-d_{\cI},q_+(\delta_2)+d_{\cI}]\subset\cI.
\]
The sandwich in Lemma~\ref{lem:calibration-contamination-quantile} and
\eqref{eq:specific-calibration-cdf-growth} show that the inverse images of
$u_{n_2}\pm\varepsilon_{n_2}(\gamma)$ remain in this enlarged interval once
\[
\varepsilon_{n_2}(\gamma)+2/n_2
<(1-\delta_2)r_0d_{\cI}.
\]
Thus the relevant quantiles lie in $\cI$ for all sufficiently large $n_2$,
and the inverse-Lipschitz bound following from
\eqref{eq:specific-calibration-cdf-growth} proves
\eqref{eq:empirical-quantile-error-under-p2}.

\end{proof}

\subsection{Proof of Theorem~\ref{thm:delta} }\label{ssec:proof_thm_delta}
\JointContaminationTheorem*
For reference, the quantile-error decomposition used in the proof is
\begin{align}
\QuantileErrorDecomposition.
\label{eq:quantile-error-decomposition}
\end{align}
\begin{proof}

Fix $\gamma\in(0,1)$. On the intersection of the events in
Lemmas~\ref{lem:training-contamination-quantile-error},
\ref{lem:clean-training-sampling-quantile-error}, and
\ref{lem:empirical-quantile-error-under-p2}, combining
\eqref{eq:quantile-error-decomposition} with
Lemma~\ref{lem:calibration-contamination-quantile} gives
\begin{align}
&\left|\widehat q(\cD_1;\cD_2)-q(P_0;P_0)\right|
\nonumber\\
&\quad\le
\eta_{n_1}(P_0)
+L\left(\delta_1+\sqrt{\frac{\log n_1}{n_1}}\right)
+
\frac{1}{(1-\delta_2)r_0}
\left(
\sqrt{\frac{\log(2/\gamma)}{2n_2}}
+
\frac{2}{n_2}
+
\delta_2(1-\alpha)
\right).
\label{eq:proof-quantile-direct-bound}
\end{align}
The intersection has probability at least
$1-\rho_{n_1}(P_0)-n_1^{-2}-\gamma$. Multiplying
\eqref{eq:proof-quantile-direct-bound} by two yields, on the same event,
\begin{align*}
&2\left|
\widehat q(\cD_1;\cD_2)-q(P_0;P_0)
\right|
\nonumber\\
&\quad\le
2\eta_{n_1}(P_0)
+2L\left(\delta_1+\sqrt{\frac{\log n_1}{n_1}}\right)
+
\frac{2}{(1-\delta_2)r_0}
\left(
\sqrt{\frac{\log(2/\gamma)}{2n_2}}
+\frac{2}{n_2}
\right)
\nonumber\\
&\qquad+
\frac{2\delta_2(1-\alpha)}
{(1-\delta_2)r_0}.
\end{align*}
This is \eqref{eq:joint-contamination-finite-sample-bound}.

\end{proof}

\paragraph{Asymptotic consequence.}
For completeness, the finite-sample decomposition also yields
\begin{align}
2\left|
\widehat q(\cD_1;\cD_2)-q(P_0;P_0)
\right|
=O_{\Pb}\Bigg(
&\eta_{n_1}(P_0)
+L\left(\delta_1+n_1^{-1/2}\right)
\!+\!\frac{1}{(1\!-\!\delta_2)r_0\sqrt{n_2}}
\!+\!
\frac{\delta_2(1\!-\!\alpha)}
{(1\!-\!\delta_2)r_0}
\Bigg).
\label{eq:joint-contamination-op-bound}
\end{align}
Indeed, use the almost-sure bound
\eqref{eq:empirical-perturbation-replacement-stability} directly instead of
its $1-n_1^{-2}$ envelope. Since
$K/n_1=\delta_1+O_{\Pb}(n_1^{-1/2})$ and
$\rho_{n_1}(P_0)=o(1)$, the two training terms in
\eqref{eq:quantile-error-decomposition} are jointly
$O_{\Pb}(\eta_{n_1}(P_0)+L(\delta_1+n_1^{-1/2}))$. The DKW inequality gives
an $O_{\Pb}(n_2^{-1/2}/((1-\delta_2)r_0))$ calibration-sampling term, and
Lemma~\ref{lem:calibration-contamination-quantile} supplies the remaining
deterministic term. Multiplying by two proves
\eqref{eq:joint-contamination-op-bound}.

\subsection{Proof of Proposition~\ref{prop:worst-case-length-deviation}}
\label{ssec:proof_worst_case_length_deviation}
\WorstCaseCalibrationProposition*
\begin{proof}
Write $F_0:=F_{P_0,P_0}$ and $q_0:=F_0^{-1}(1-\alpha)$, and define
\[
\Delta_{\delta_2}(Q_2)
:=
q\!\left(P_0;(1-\delta_2)P_0+\delta_2Q_2\right)-q_0.
\]
For the proof, introduce the one-sided worst-case deviations
\[
\mathfrak D_\alpha^+(\delta_2)
:=2\sup_{Q_2}\Delta_{\delta_2}(Q_2),
\qquad
\mathfrak D_\alpha^-(\delta_2)
:=2\sup_{Q_2}\{-\Delta_{\delta_2}(Q_2)\}.
\]
Then $\mathfrak D_\alpha(\delta_2)
=2\sup_{Q_2}|\Delta_{\delta_2}(Q_2)|$. Fix any
$\bbx_0\in\cX$ and $M\ge 0$, and let $Q_M$ be the point mass at
$(\bbx_0,\mu(\bbx_0;P_0)+M)$, whose oracle score equals $M$ almost surely.

First suppose that $0\le\delta_2<\alpha$. For every $Q_2$, the quantile
sandwich \eqref{eq:calibration-quantile-sandwich} gives
\[
q_-(\delta_2)
\le q(P_0;P_2)\le
q_+(\delta_2).
\]
Consequently,
\[
\mathfrak D_\alpha^+(\delta_2)
\le 2\bigl(q_+(\delta_2)-q_0\bigr),
\qquad
\mathfrak D_\alpha^-(\delta_2)
\le 2\bigl(q_0-q_-(\delta_2)\bigr).
\]
For any $M>q_+(\delta_2)$, the contamination-score CDF under $Q_M$ is
zero below $M$. Hence, below $M$, the mixture CDF equals
$(1-\delta_2)F_0$, and its $(1-\alpha)$-quantile is
$q_+(\delta_2)$. Thus, $Q_M$ attains the upper bound for
$\mathfrak D_\alpha^+(\delta_2)$. Under $Q_0$, all contaminated scores
equal zero, so for $t\ge0$ the mixture CDF is
\[
\delta_2+(1-\delta_2)F_0(t).
\]
Since $1-\alpha>\delta_2$, its $(1-\alpha)$-quantile is
$q_-(\delta_2)$, and $Q_0$ attains the upper bound for
$\mathfrak D_\alpha^-(\delta_2)$. Therefore,
\begin{equation}
\mathfrak D_\alpha^+(\delta_2)
=2\bigl(q_+(\delta_2)-q_0\bigr),
\qquad
\mathfrak D_\alpha^-(\delta_2)
=2\bigl(q_0-q_-(\delta_2)\bigr).
\label{eq:worst-case-one-sided-deviations}
\end{equation}
Finally,
\[
\sup_{Q_2}|a(Q_2)|
=\max\left\{\sup_{Q_2}a(Q_2),\sup_{Q_2}[-a(Q_2)]\right\},
\]
applied to the contaminated-minus-clean quantile difference, gives
\eqref{eq:worst-case-length-deviation}.

Now suppose that $\delta_2\ge\alpha$ and take $M$ arbitrarily large. For
every $t<M$,
\[
F_{P_0,P_2}(t)
=(1-\delta_2)F_0(t)
<1-\alpha.
\]
When $\delta_2=\alpha$, strictness follows from the unbounded-upper-support
assumption; when $\delta_2>\alpha$, it follows from
$1-\delta_2<1-\alpha$. For all sufficiently large $M$,
$F_{P_0,P_2}(M)\ge1-\alpha$, so the $(1-\alpha)$-quantile equals $M$.
Letting $M\to\infty$ proves that both
$\mathfrak D_\alpha^+(\delta_2)$ and $\mathfrak D_\alpha(\delta_2)$ are
infinite.
\end{proof}

\subsection{Derivation of the Mixture-Based Correction}
\label{ssec:proof_mixture_nominal_level_correction}
\begin{corollary}[Mixture-based nominal-level correction]
\label{cor:nominal-level-correction}
Define $\alpha'_{\mathrm{tight}}:=\alpha(1-\delta_2)$. If
\[
\Coverage_{P_2}(\alpha'_{\mathrm{tight}})
\ge 1-\alpha'_{\mathrm{tight}},
\]
then
\[
\Coverage_{P_0}(\alpha'_{\mathrm{tight}})
\ge 1-\alpha.
\]
\end{corollary}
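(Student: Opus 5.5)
The plan is to decompose the coverage event under the contaminated test law $P_2$ into its clean and contaminating components, using the same mixture structure as the fixed-source identity \eqref{eq:fixed-score-mixture}. Fix the nominal level $\alpha'=\alpha'_{\mathrm{tight}}$ and the resulting split-conformal procedure; its construction depends only on $(\cD_1,\cD_2)$ and is the same whatever law the test point follows. Write $\Coverage_P(\alpha')=\Pb(Y_{\mathrm{new}}\in C_{\alpha'}(X_{\mathrm{new}};\cD_1,\cD_2))$ with $(X_{\mathrm{new}},Y_{\mathrm{new}})\sim P$ independent of the data.

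First I will condition on $(\cD_1,\cD_2)$, which makes the interval a fixed measurable set-valued map. For any such map, the coverage probability is linear in the test distribution. Since $P_2=(1-\delta_2)P_0+\delta_2Q_2$, this gives
\[
\Pb_{P_2}(Y\in C\mid\cD_1,\cD_2)=(1-\delta_2)\,\Pb_{P_0}(Y\in C\mid\cD_1,\cD_2)+\delta_2\,\Pb_{Q_2}(Y\in C\mid\cD_1,\cD_2).
\]
Taking expectations over $(\cD_1,\cD_2)$, which is legitimate because the test point is independent of the data and all quantities lie in $[0,1]$, yields
\[
\Coverage_{P_2}(\alpha')=(1-\delta_2)\Coverage_{P_0}(\alpha')+\delta_2\Coverage_{Q_2}(\alpha').
\]

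Next I will bound $\Coverage_{Q_2}(\alpha')\le1$. Combining this with the hypothesis $\Coverage_{P_2}(\alpha')\ge1-\alpha'$ gives $(1-\delta_2)\Coverage_{P_0}(\alpha')\ge1-\alpha'-\delta_2$. Substituting $\alpha'=\alpha(1-\delta_2)$ makes the right-hand side $(1-\delta_2)-\alpha(1-\delta_2)=(1-\alpha)(1-\delta_2)$. Dividing by $1-\delta_2>0$, since $\delta_2<1$, gives $\Coverage_{P_0}(\alpha')\ge1-\alpha$.

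The only step requiring care is making the mixture decomposition rigorous. The conditioning argument needs the test point to be independent of the training and calibration samples. It must also be applied to the joint law of the data, because the interval depends on $\cD_2\sim P_2^{n_2}$ and not on the test law; that is why the decomposition is in the test distribution only. Everything else is algebra. The hypothesis itself holds by exchangeability of the calibration data with a $P_2$ test point, but the corollary takes it as an assumption, so it does not need to be proved here.
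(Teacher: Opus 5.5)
Your proposal is correct and takes essentially the same route as the paper. You condition on $(\cD_1,\cD_2)$ to obtain $\Coverage_{P_2}(\alpha'_{\mathrm{tight}})=(1-\delta_2)\Coverage_{P_0}(\alpha'_{\mathrm{tight}})+\delta_2\Coverage_{Q_2}(\alpha'_{\mathrm{tight}})$, bound the $Q_2$ term by $1$, and solve for the clean coverage; your notes on test-point independence and on mixing only in the test law make explicit what the paper leaves implicit.
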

\begin{proof}
Conditional on $\cD_1$ and $\cD_2$, the mixture identity for the new test
observation holds for the coverage event. Averaging over the two datasets
gives
\[
\Coverage_{P_2}(\alpha'_{\mathrm{tight}})
=
(1-\delta_2)\Coverage_{P_0}(\alpha'_{\mathrm{tight}})
+\delta_2\Coverage_{Q_2}(\alpha'_{\mathrm{tight}}).
\]
Since $\Coverage_{Q_2}(\alpha'_{\mathrm{tight}})\le1$, the assumed
$P_2$-coverage guarantee implies
\[
\Coverage_{P_0}(\alpha'_{\mathrm{tight}})
\ge
\frac{1-\alpha'_{\mathrm{tight}}-\delta_2}{1-\delta_2}
=
1-\alpha.
\]
\end{proof}

\subsection{Proof of Proposition~\ref{prop:nominal-correction-sharpness}}
\label{ssec:proof_nominal_correction_sharpness}
\NominalCorrectionSharpnessProposition*
\begin{proof}
Fix $\bar\alpha\in(\alpha(1-\delta_2),1)$. Let the feature space
contain only the origin, so that $\bX=0$ almost surely, and define the
clean distribution by
\[
Y\mid \bX=0
\sim
\cN(0,1)
\qquad\text{under }P_0.
\]
Thus $\E_{P_0}[Y\mid\bX=0]=0$, and the clean noise is independent of $\bX$, has a
symmetric density, and is nonincreasing on $[0,\infty)$. Let the
contamination distribution satisfy
\[
Q_2(\bX=0,Y=0)=1.
\]

Fit a no-intercept linear regression estimator
$\mu(\bbx;\cD_1)=\hat\beta^{\top}\bbx$ on the training sample. Since every
feature vector is zero, the fitted predictor satisfies
$\mu(0;\cD_1)=0$ for every training sample. Hence the calibration and test
scores are
\[
R=|Y-\mu(\bX;\cD_1)|=|Y|.
\]
Let $G_0$ and $G_2$ denote the CDFs of $R$ under $P_0$ and
$P_2=(1-\delta_2)P_0+\delta_2Q_2$, respectively. For every $t\ge0$,
\begin{equation}
G_2(t)
=
\delta_2+(1-\delta_2)G_0(t).
\label{eq:sharpness-score-mixture}
\end{equation}

Let
\[
q_{\bar\alpha}
:=
G_2^{-1}(1-\bar\alpha).
\]
Because $G_0$ is continuous and strictly increasing on $[0,\infty)$,
\eqref{eq:sharpness-score-mixture} gives
\begin{equation*}
G_0(q_{\bar\alpha})
=
\max\left\{
0,
1-\frac{\bar\alpha}{1-\delta_2}
\right\}
<
1-\alpha,
\end{equation*}
where the strict inequality follows from
$\bar\alpha>\alpha(1-\delta_2)$.

Now let the calibration size be $n_2=m$, set
\[
k_m
:=
\left\lceil(m+1)(1-\bar\alpha)\right\rceil,
\]
and let $\hat q_m$ be the $k_m$-th order statistic of the calibration
scores. For every $\varepsilon>0$, the definition of
$q_{\bar\alpha}$ and the strict increase of $G_0$ give
\[
G_2(q_{\bar\alpha}-\varepsilon)
<
1-\bar\alpha
<
G_2(q_{\bar\alpha}+\varepsilon).
\]
The same inequalities hold when $q_{\bar\alpha}=0$, despite the atom of
$G_2$ at zero. Applying the law of large numbers to the proportions of
calibration scores below the two endpoints, and using
$k_m/m\to1-\bar\alpha$, yields
\begin{equation}
\hat q_m
\longrightarrow
q_{\bar\alpha}
\quad\text{in probability}.
\label{eq:sharpness-empirical-quantile-limit}
\end{equation}

The resulting interval is
$C_{\bar\alpha}=[-\hat q_m,\hat q_m]$. Since the calibration sample is
i.i.d. from $P_2$, the usual finite-sample split conformal guarantee
gives, for all sufficiently large $m$,
\begin{equation}
\Coverage_{P_2}(\bar\alpha)
\ge
1-\bar\alpha.
\label{eq:sharpness-p2-coverage}
\end{equation}
Moreover, $Y=0$ almost surely under $Q_2$, and hence
$\Coverage_{Q_2}(\bar\alpha)=1$ for every $m$.
On the other hand, independence of the calibration and clean test
samples gives
\[
\Coverage_{P_0}(\bar\alpha)
=
\E\left[G_0(\hat q_m)\right].
\]
By \eqref{eq:sharpness-empirical-quantile-limit} and continuity of
$G_0$, we have $G_0(\hat q_m)\to G_0(q_{\bar\alpha})$ in probability.
Because this sequence is bounded, its expectations converge as well.
Therefore,
\begin{equation*}
\Coverage_{P_0}(\bar\alpha)
\longrightarrow
G_0(q_{\bar\alpha})
<
1-\alpha.
\end{equation*}
Therefore, the clean coverage is below $1-\alpha$ for all sufficiently
large $m$, while \eqref{eq:sharpness-p2-coverage} continues to hold.
This proves Proposition~\ref{prop:nominal-correction-sharpness}.

\end{proof}

\subsection{Extensions to CQR and Adaptive Residual Scores}
\label{ssec:alternative-score-extensions}

This subsection develops the alternative-score extensions outlined in
Remark~\ref{rem:alternative-score-extensions}. We introduce the
score-specific conditions and establish finite-sample bounds for score
quantiles and interval lengths, uniformly over the training and calibration
contamination distributions. Absolute residuals are retained as a baseline
to make the connection with Theorem~\ref{thm:delta} explicit.

\subsubsection{Score Definitions and Stability Conditions}

For $A\in\{\cD_1,\cD_0,P_0\}$, define
\begin{align*}
R^{\mathrm{Absolute}}(A;\bbx,y)
&:=|y-\mu(\bbx;A)|,\\
R^{\mathrm{CQR}}(A;\bbx,y)
&:=\max\{\ell(\bbx;A)-y,\,y-u(\bbx;A)\},\\
R^{\mathrm{Adaptive}}(A;\bbx,y)
&:=\frac{|y-\mu(\bbx;A)|}{\sigma(\bbx;A)}.
\end{align*}
Here $\ell(\cdot;A)$ and $u(\cdot;A)$ are the lower and upper CQR
regressors, and $\sigma(\cdot;A)>0$ is the adaptive scale regressor. The
choices $A=\cD_1$ and $A=\cD_0$ denote fits on the coupled contaminated and
clean samples, whereas $A=P_0$ denotes the clean population target.

For
$S\in\{\mathrm{Absolute},\mathrm{CQR},\mathrm{Adaptive}\}$ and
$P\in\{P_0,P_2,Q_2\}$, define
\[
F_{P_0,P}^S(t)
:=
\Pb_{(\bX,Y)\sim P}\{R^S(P_0;\bX,Y)\le t\},
\qquad
q_S(P_0;P):=(F_{P_0,P}^S)^{-1}(1-\alpha).
\]
For $(\bX_i,Y_i)\in\cD_2$, write
$R_i^S(A):=R^S(A;\bX_i,Y_i)$. For all sufficiently large $n_2$ such
that $k_{n_2}=\lceil(n_2+1)(1-\alpha)\rceil\le n_2$, let
\[
\widehat q_S(A;\cD_2):=R_{(k_{n_2})}^S(A).
\]

The corresponding CQR and adaptive prediction intervals are
\begin{align*}
C_\alpha^{\mathrm{CQR}}(\bbx;A,\cD_2)
&:=\left[
\ell(\bbx;A)-\widehat q_{\mathrm{CQR}}(A;\cD_2),\,
u(\bbx;A)+\widehat q_{\mathrm{CQR}}(A;\cD_2)
\right],\\
C_\alpha^{\mathrm{Adaptive}}(\bbx;A,\cD_2)
&:=\left[
\mu(\bbx;A)-\sigma(\bbx;A)\widehat q_{\mathrm{Adaptive}}(A;\cD_2),\,
\right.\\[-0.4em]
&\hspace{3em}\left.
\mu(\bbx;A)+\sigma(\bbx;A)\widehat q_{\mathrm{Adaptive}}(A;\cD_2)
\right].
\end{align*}
An interval whose lower endpoint exceeds its upper endpoint is interpreted
as the empty set. Thus the CQR interval has length
$[u(\bbx;A)-\ell(\bbx;A)+2\widehat q_{\mathrm{CQR}}(A;\cD_2)]_+$,
where $[z]_+:=\max\{z,0\}$, whereas the adaptive interval has length
$2\sigma(\bbx;A)\widehat q_{\mathrm{Adaptive}}(A;\cD_2)$.
The fitted intervals use $A=\cD_1$. Their clean oracle counterparts use
$A=P_0$ and replace each empirical quantile $\widehat q_S(A;\cD_2)$
by $q_S(P_0;P_0)$.

For the absolute score, this score-indexed notation agrees with the
notation introduced in the preliminaries:
\[
R^{\mathrm{Absolute}}(A;\bbx,y)=R_A(\bbx,y),\qquad
F_{P_0,P}^{\mathrm{Absolute}}=F_{P_0,P},
\]
\[
q_{\mathrm{Absolute}}(P_0;P)=q(P_0;P),\qquad
\widehat q_{\mathrm{Absolute}}(A;\cD_2)=\widehat q(A;\cD_2).
\]
The adaptive score intentionally uses the same location learner
$\mu(\cdot;A)$ as the absolute score and differs through its scale
normalization by $\sigma(\cdot;A)$.

Define the clean-sampling events
\begin{align*}
\cE_{0,n_1}^{\mathrm{Absolute}}
&:=\left\{
\|\mu(\cdot;\cD_0)-\mu(\cdot;P_0)\|_\infty
\le\eta_{n_1}(P_0)
\right\},\\
\cE_{0,n_1}^{\mathrm{CQR}}
&:=\left\{
\max_{g\in\{\ell,u\}}
\|g(\cdot;\cD_0)-g(\cdot;P_0)\|_\infty
\le\eta_{n_1}^{\mathrm{CQR}}(P_0)
\right\},\\
\cE_{0,n_1}^{\mathrm{Adaptive}}
&:=\bigcap_{g\in\{\mu,\sigma\}}
\left\{
\|g(\cdot;\cD_0)-g(\cdot;P_0)\|_\infty
\le\eta_{g,n_1}^{\mathrm{Adaptive}}(P_0)
\right\}.
\end{align*}
Assume
\begin{equation*}
\Pb(\cE_{0,n_1}^S)\ge1-\rho_{n_1}^S(P_0),
\qquad
S\in\{\mathrm{Absolute},\mathrm{CQR},\mathrm{Adaptive}\}.
\end{equation*}
For the absolute score, use the conventions
$\eta_{n_1}^{\mathrm{Absolute}}(P_0):=\eta_{n_1}(P_0)$ and
$\rho_{n_1}^{\mathrm{Absolute}}(P_0):=\rho_{n_1}(P_0)$.
Because the adaptive score uses the same location learner, also set
$\eta_{\mu,n_1}^{\mathrm{Adaptive}}(P_0):=\eta_{n_1}(P_0)$ and
$L_\mu^{\mathrm{Adaptive}}:=L$.
For the absolute score, suppose that $\mu$ is $L$-uniformly replacement
stable as in Definition~\ref{def:replacement-stability}. For the CQR and
adaptive scores, assume componentwise replacement stability: whenever
$\cD,\cD'$ differ in at most $k$ observations,
\begin{align}
\max_{g\in\{\ell,u\}}
\|g(\cdot;\cD)-g(\cdot;\cD')\|_\infty
&\le L_{\mathrm{CQR}}k/n_1,
\nonumber\\
\|g(\cdot;\cD)-g(\cdot;\cD')\|_\infty
&\le L_g^{\mathrm{Adaptive}}k/n_1,
\qquad g\in\{\mu,\sigma\}.
\label{eq:alternative-componentwise-replacement-stability}
\end{align}
Set $h_{n_1}:=\delta_1+\sqrt{\log(n_1)/n_1}$ and define
\begin{align*}
\varepsilon_{\mathrm{train},n_1}^{\mathrm{Absolute}}
&:=\eta_{n_1}(P_0)+Lh_{n_1},\\
\varepsilon_{\mathrm{train},n_1}^{\mathrm{CQR}}
&:=\eta_{n_1}^{\mathrm{CQR}}(P_0)+L_{\mathrm{CQR}}h_{n_1},\\
\varepsilon_{\mu,n_1}^{\mathrm{Adaptive}}
&:=\eta_{n_1}(P_0)+Lh_{n_1},\\
\varepsilon_{\sigma,n_1}^{\mathrm{Adaptive}}
&:=\eta_{\sigma,n_1}^{\mathrm{Adaptive}}(P_0)
+L_{\sigma}^{\mathrm{Adaptive}}h_{n_1}.
\end{align*}
Define the contaminated-training events
\begin{align*}
\cE_{n_1}^{\mathrm{Absolute}}
&:=\left\{
\|\mu(\cdot;\cD_1)-\mu(\cdot;P_0)\|_\infty
\le\varepsilon_{\mathrm{train},n_1}^{\mathrm{Absolute}}
\right\},\\
\cE_{n_1}^{\mathrm{CQR}}
&:=\left\{
\max_{g\in\{\ell,u\}}
\|g(\cdot;\cD_1)-g(\cdot;P_0)\|_\infty
\le\varepsilon_{\mathrm{train},n_1}^{\mathrm{CQR}}
\right\},\\
\cE_{n_1}^{\mathrm{Adaptive}}
&:=\bigcap_{g\in\{\mu,\sigma\}}
\left\{
\|g(\cdot;\cD_1)-g(\cdot;P_0)\|_\infty
\le\varepsilon_{g,n_1}^{\mathrm{Adaptive}}
\right\}.
\end{align*}
Under the Huber coupling used in
Proposition~\ref{prop:empirical-perturbation-replacement-stability}, the
coupled samples $\cD_1$ and $\cD_0$ differ in at most
$K\sim\operatorname{Binomial}(n_1,\delta_1)$ observations. On
$\{K/n_1\le h_{n_1}\}$, the componentwise bounds in
\eqref{eq:alternative-componentwise-replacement-stability}, combined with
the corresponding clean-sampling event $\cE_{0,n_1}^S$, imply
$\cE_{n_1}^S$. Hoeffding's inequality and a union bound therefore give,
uniformly over $Q_1$,
\begin{equation}
\Pb(\cE_{n_1}^S)
\ge1-\rho_{n_1}^S(P_0)-n_1^{-2}.
\label{eq:alternative-contaminated-training-events}
\end{equation}
Assume that all clean-sampling radii and failure probabilities above are
$o(1)$.

For the adaptive score, assume that, on
$\cE_{n_1}^{\mathrm{Adaptive}}$, the fitted and clean population scales
obey the global lower bound
\begin{equation*}
\inf_{\bbx\in\cX}
\min\{\sigma(\bbx;\cD_1),\sigma(\bbx;P_0)\}
\ge\underline\sigma>0.
\end{equation*}
This condition can, for example, be enforced by flooring the output of the
scale learner at $\underline\sigma$. Because it holds on all of $\cX$, it
does not restrict the support of $Q_2$.
For the uniform interval-length conclusion only, additionally define and
assume finite the global clean-scale upper bound
\begin{equation*}
\overline\sigma(P_0)
:=\sup_{\bbx\in\cX}\sigma(\bbx;P_0)<\infty,
\qquad
\overline\sigma:=\overline\sigma(P_0).
\end{equation*}

No bound on the clean or contaminating residuals is imposed. The adaptive
argument below controls the relevant order statistic directly rather than
the largest pointwise score perturbation.

For the local-deviation bounds below, suppose
$\alpha\in(0,1/2)$ and $0\le\delta_2<\alpha$, and set
\[
p_-:=\frac{1-\alpha-\delta_2}{1-\delta_2},
\qquad
p_+:=\frac{1-\alpha}{1-\delta_2}.
\]
For each score $S$, assume that $F_{P_0,P_0}^S$ is absolutely continuous
on an open interval containing
\[
\left[
(F_{P_0,P_0}^S)^{-1}(p_-),
(F_{P_0,P_0}^S)^{-1}(p_+)
\right]
\]
and that its density satisfies $f_{P_0,P_0}^S(t)\ge r_0^S>0$ throughout
this interval, with the convention
$r_0^{\mathrm{Absolute}}:=r_0$.

\subsubsection{Score-Quantile Deviation Bounds}

For the adaptive score, define
\begin{align*}
\kappa_{\sigma,n_1}^{\mathrm{Adaptive}}
&:=\frac{\varepsilon_{\sigma,n_1}^{\mathrm{Adaptive}}}
{\underline\sigma},
&
\beta_{\mu,n_1}^{\mathrm{Adaptive}}
&:=\frac{\varepsilon_{\mu,n_1}^{\mathrm{Adaptive}}}
{\underline\sigma},
\nonumber\\
\varepsilon_{\mathrm{train},n_1}^{\mathrm{Adaptive}}
&:=\beta_{\mu,n_1}^{\mathrm{Adaptive}}
+\kappa_{\sigma,n_1}^{\mathrm{Adaptive}}
q_{\mathrm{Adaptive}}(P_0;P_0).
\end{align*}
For the proofs below, define the score-specific calibration errors by
\begin{equation*}
\varepsilon_{\mathrm{cal},n_2}^S(\gamma)
:=\frac{\delta_2(1-\alpha)
+\sqrt{\log(2/\gamma)/(2n_2)}+2/n_2}
{(1-\delta_2)r_0^S},
\qquad
S\in\{\mathrm{Absolute},\mathrm{CQR},\mathrm{Adaptive}\}.
\end{equation*}

Define the total score-quantile errors by
\begin{equation}
\varepsilon_{q,n_1,n_2}^S(\gamma)
:=
\begin{cases}
\varepsilon_{\mathrm{train},n_1}^S
+\varepsilon_{\mathrm{cal},n_2}^S(\gamma),
&S\in\{\mathrm{Absolute},\mathrm{CQR}\},\\[0.35em]
\varepsilon_{\mathrm{train},n_1}^{\mathrm{Adaptive}}
+\bigl(1+\kappa_{\sigma,n_1}^{\mathrm{Adaptive}}\bigr)
\varepsilon_{\mathrm{cal},n_2}^{\mathrm{Adaptive}}(\gamma),
&S=\mathrm{Adaptive}.
\end{cases}
\label{eq:alternative-total-score-quantile-error}
\end{equation}
\begin{lemma}[Adaptive empirical-quantile perturbation]
\label{lem:adaptive-empirical-quantile-perturbation}
On $\cE_{n_1}^{\mathrm{Adaptive}}$, for every calibration sample
$\cD_2$,
\begin{equation}
\left|
\widehat q_{\mathrm{Adaptive}}(\cD_1;\cD_2)
-\widehat q_{\mathrm{Adaptive}}(P_0;\cD_2)
\right|
\le
\beta_{\mu,n_1}^{\mathrm{Adaptive}}
+\kappa_{\sigma,n_1}^{\mathrm{Adaptive}}
\widehat q_{\mathrm{Adaptive}}(P_0;\cD_2).
\label{eq:adaptive-empirical-quantile-perturbation}
\end{equation}
In particular, the bound depends on the relevant empirical quantile rather
than on the largest calibration residual.
\end{lemma}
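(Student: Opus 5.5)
The plan is to replace the additive comparison of Lemma~\ref{lem:order-statistic-perturbation} with a pair of monotone affine sandwiches, one on each side, that hold pointwise for every calibration observation. Order statistics are then pushed through these monotone maps. No bound on $|y-\mu(\bbx;P_0)|$ is needed, which is why the result involves the empirical quantile $\widehat q_{\mathrm{Adaptive}}(P_0;\cD_2)$ rather than the largest residual. Throughout, work on $\cE_{n_1}^{\mathrm{Adaptive}}$, abbreviate $\mu_1=\mu(\bbx;\cD_1)$, $\mu_0=\mu(\bbx;P_0)$, $\sigma_1=\sigma(\bbx;\cD_1)$, $\sigma_0=\sigma(\bbx;P_0)$, and use $\min\{\sigma_0,\sigma_1\}\ge\underline\sigma$, $|\mu_1-\mu_0|\le\varepsilon_{\mu,n_1}^{\mathrm{Adaptive}}$ and $|\sigma_1-\sigma_0|\le\varepsilon_{\sigma,n_1}^{\mathrm{Adaptive}}$.

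\emph{Step 1 (pointwise upper sandwich).} For any $(\bbx,y)$ the triangle inequality gives $|y-\mu_1|\le|y-\mu_0|+\varepsilon_{\mu,n_1}^{\mathrm{Adaptive}}$. Dividing by $\sigma_1\ge\underline\sigma$ yields
\[
R^{\mathrm{Adaptive}}(\cD_1;\bbx,y)\le\frac{\sigma_0}{\sigma_1}R^{\mathrm{Adaptive}}(P_0;\bbx,y)+\beta_{\mu,n_1}^{\mathrm{Adaptive}}.
\]
Since $\sigma_0/\sigma_1\le 1+|\sigma_0-\sigma_1|/\sigma_1\le1+\kappa_{\sigma,n_1}^{\mathrm{Adaptive}}$, we obtain $R^{\mathrm{Adaptive}}(\cD_1)\le\phi_+(R^{\mathrm{Adaptive}}(P_0))$ with $\phi_+(s):=(1+\kappa_{\sigma,n_1}^{\mathrm{Adaptive}})s+\beta_{\mu,n_1}^{\mathrm{Adaptive}}$.

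\emph{Step 2 (pointwise lower sandwich).} Similarly $|y-\mu_1|\ge|y-\mu_0|-\varepsilon_{\mu,n_1}^{\mathrm{Adaptive}}$, so $R^{\mathrm{Adaptive}}(\cD_1)\ge(\sigma_0/\sigma_1)R^{\mathrm{Adaptive}}(P_0)-\beta_{\mu,n_1}^{\mathrm{Adaptive}}$. This holds even when the right-hand side is negative, because scores are nonnegative. Next, $\sigma_0/\sigma_1\ge\sigma_0/(\sigma_0+\varepsilon_{\sigma,n_1}^{\mathrm{Adaptive}})=1-\varepsilon_{\sigma,n_1}^{\mathrm{Adaptive}}/(\sigma_0+\varepsilon_{\sigma,n_1}^{\mathrm{Adaptive}})\ge1-\kappa_{\sigma,n_1}^{\mathrm{Adaptive}}$. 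Hence $R^{\mathrm{Adaptive}}(\cD_1)\ge\phi_-(R^{\mathrm{Adaptive}}(P_0))$ with $\phi_-(s):=(1-\kappa_{\sigma,n_1}^{\mathrm{Adaptive}})s-\beta_{\mu,n_1}^{\mathrm{Adaptive}}$. If $\kappa_{\sigma,n_1}^{\mathrm{Adaptive}}>1$, replace $\phi_-$ by the nondecreasing map $s\mapsto-\beta_{\mu,n_1}^{\mathrm{Adaptive}}$; this is still a valid lower bound and only helps.

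\emph{Step 3 (transfer to order statistics).} I would prove a monotone analogue of Lemma~\ref{lem:order-statistic-perturbation}: if $a_i\le\phi(b_i)$ for all $i$ with $\phi$ nondecreasing, then $a_{(j)}\le\phi(b_{(j)})$. The argument is the same counting as before. At least $j$ indices have $b_i\le b_{(j)}$, and for each of them $a_i\le\phi(b_i)\le\phi(b_{(j)})$. The symmetric statement $a_i\ge\phi(b_i)\Rightarrow a_{(j)}\ge\phi(b_{(j)})$ follows by counting the indices with $b_i\ge b_{(j)}$. Applying this at rank $k_{n_2}$ with $a_i=R_i^{\mathrm{Adaptive}}(\cD_1)$ and $b_i=R_i^{\mathrm{Adaptive}}(P_0)$ gives
\[
\phi_-\bigl(\widehat q_{\mathrm{Adaptive}}(P_0;\cD_2)\bigr)\le\widehat q_{\mathrm{Adaptive}}(\cD_1;\cD_2)\le\phi_+\bigl(\widehat q_{\mathrm{Adaptive}}(P_0;\cD_2)\bigr).
\]
Subtracting $\widehat q_{\mathrm{Adaptive}}(P_0;\cD_2)$ yields \eqref{eq:adaptive-empirical-quantile-perturbation}.

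I expect the only delicate step to be Step 2, because the ratio $\sigma_0/\sigma_1$ must be lower-bounded using $\underline\sigma$ in the correct place. The point is that the pointwise perturbation is multiplicative in the score, so the argument cannot reduce to a uniform additive bound. The monotone order-statistic transfer is what lets the bound depend only on the $k_{n_2}$-th clean score.
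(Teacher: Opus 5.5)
Your proposal is correct and is essentially the paper's argument: pointwise affine comparisons of the two adaptive scores, with $\underline\sigma$ controlling both the location term and the scale ratio, pushed through the $k_{n_2}$-th order statistic by monotonicity. The only difference is the lower direction. The paper uses the symmetric reverse inequality $R^{\mathrm{Adaptive}}(P_0)\le(1+\kappa)R^{\mathrm{Adaptive}}(\cD_1)+\beta$ and divides by $1+\kappa$, which gives the slightly sharper $\widehat q_{\mathrm{fit}}\ge(\widehat q_{\mathrm{clean}}-\beta)/(1+\kappa)$ and needs no separate $\kappa>1$ case; equivalently, you could sharpen your ratio bound to $\sigma_0/\sigma_1\ge 1/(1+\kappa)$ to drop that patch.
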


\begin{proof}
Fix a calibration observation $(\bbx,y)$ and write
\begin{align*}
r_{\mathrm{fit}}&:=|y-\mu(\bbx;\cD_1)|,
&r_{\mathrm{clean}}&:=|y-\mu(\bbx;P_0)|,\\
\sigma_{\mathrm{fit}}&:=\sigma(\bbx;\cD_1),
&\sigma_{\mathrm{clean}}&:=\sigma(\bbx;P_0),\\
\varepsilon_\mu&:=\varepsilon_{\mu,n_1}^{\mathrm{Adaptive}},
&\varepsilon_\sigma&:=\varepsilon_{\sigma,n_1}^{\mathrm{Adaptive}}.
\end{align*}
On $\cE_{n_1}^{\mathrm{Adaptive}}$,
\[
|r_{\mathrm{fit}}-r_{\mathrm{clean}}|\le\varepsilon_\mu,
\quad
|\sigma_{\mathrm{fit}}-\sigma_{\mathrm{clean}}|
\le\varepsilon_\sigma,
\quad
\sigma_{\mathrm{fit}},\sigma_{\mathrm{clean}}\ge\underline\sigma.
\]
Set $\kappa:=\varepsilon_\sigma/\underline\sigma$ and
$\beta:=\varepsilon_\mu/\underline\sigma$. Since
$\sigma_{\mathrm{clean}}\le\sigma_{\mathrm{fit}}+\varepsilon_\sigma$,
\begin{align*}
R^{\mathrm{Adaptive}}(\cD_1;\bbx,y)
&=\frac{r_{\mathrm{fit}}}{\sigma_{\mathrm{fit}}}\\
&\le
\frac{r_{\mathrm{clean}}+\varepsilon_\mu}
{\sigma_{\mathrm{fit}}}\\
&\le
(1+\kappa)
R^{\mathrm{Adaptive}}(P_0;\bbx,y)+\beta.
\end{align*}
Interchanging the fitted and clean quantities gives the reverse comparison
\[
R^{\mathrm{Adaptive}}(P_0;\bbx,y)
\le
(1+\kappa)
R^{\mathrm{Adaptive}}(\cD_1;\bbx,y)+\beta.
\]
Both comparisons hold for every $(\bbx,y)$, without bounding the residual.
Monotonicity of order statistics therefore yields, with
$\widehat q_{\mathrm{fit}}
:=\widehat q_{\mathrm{Adaptive}}(\cD_1;\cD_2)$ and
$\widehat q_{\mathrm{clean}}
:=\widehat q_{\mathrm{Adaptive}}(P_0;\cD_2)$,
\[
\widehat q_{\mathrm{fit}}
\le(1+\kappa)\widehat q_{\mathrm{clean}}+\beta,
\qquad
\widehat q_{\mathrm{clean}}
\le(1+\kappa)\widehat q_{\mathrm{fit}}+\beta.
\]
If $\widehat q_{\mathrm{fit}}\ge\widehat q_{\mathrm{clean}}$, the first
inequality gives a difference of at most
$\kappa\widehat q_{\mathrm{clean}}+\beta$. Otherwise, the second gives
\[
\widehat q_{\mathrm{clean}}-\widehat q_{\mathrm{fit}}
\le
\frac{\kappa\widehat q_{\mathrm{clean}}+\beta}{1+\kappa}
\le
\kappa\widehat q_{\mathrm{clean}}+\beta.
\]
This proves \eqref{eq:adaptive-empirical-quantile-perturbation}.
\end{proof}

\begin{proposition}[Unified score-quantile deviation bounds]
\label{prop:alternative-score-quantile-bounds}
For every $\gamma\in(0,1)$ and all sufficiently large $n_2$, each
$S\in\{\mathrm{Absolute},\mathrm{CQR},\mathrm{Adaptive}\}$ satisfies,
with probability at least
$1-\rho_{n_1}^S(P_0)-n_1^{-2}-\gamma$,
\begin{align}
&\left|
\widehat q_S(\cD_1;\cD_2)-q_S(P_0;P_0)
\right|
\nonumber\\[-0.25em]
&\qquad\le
\varepsilon_{q,n_1,n_2}^S(\gamma).
\label{eq:alternative-score-quantile-bound}
\end{align}
The probability statement is uniform over the contamination distributions
$Q_1,Q_2$. Consequently, if $\rho_{n_1}^S(P_0)=o(1)$, then, for
$S\in\{\mathrm{Absolute},\mathrm{CQR}\}$,
\begin{equation*}
\left|
\widehat q_S(\cD_1;\cD_2)-q_S(P_0;P_0)
\right|
=
O_{\Pb}\left(
\varepsilon_{\mathrm{train},n_1}^S
+
\frac{n_2^{-1/2}+\delta_2(1-\alpha)}
{(1-\delta_2)r_0^S}
\right).
\end{equation*}
For the adaptive score, the corresponding statement is
\begin{align*}
&\left|
\widehat q_{\mathrm{Adaptive}}(\cD_1;\cD_2)
-q_{\mathrm{Adaptive}}(P_0;P_0)
\right|
\nonumber\\
&\quad=O_{\Pb}\left(
\beta_{\mu,n_1}^{\mathrm{Adaptive}}
+\kappa_{\sigma,n_1}^{\mathrm{Adaptive}}
q_{\mathrm{Adaptive}}(P_0;P_0)
\right.
\nonumber\\[-0.25em]
&\hspace{8em}\left.
+\bigl(1+\kappa_{\sigma,n_1}^{\mathrm{Adaptive}}\bigr)
\frac{n_2^{-1/2}+\delta_2(1-\alpha)}
{(1-\delta_2)r_0^{\mathrm{Adaptive}}}
\right).
\end{align*}
\end{proposition}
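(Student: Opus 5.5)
We follow the proof of Theorem~\ref{thm:delta}, but split the error into only three pieces: the training perturbation $|\widehat q_S(\cD_1;\cD_2)-\widehat q_S(P_0;\cD_2)|$, the calibration sampling term $|\widehat q_S(P_0;\cD_2)-q_S(P_0;P_2)|$, and the calibration bias $|q_S(P_0;P_2)-q_S(P_0;P_0)|$. The first piece is handled on the contaminated-training event $\cE_{n_1}^S$, which has probability at least $1-\rho_{n_1}^S(P_0)-n_1^{-2}$ uniformly over $Q_1$ by \eqref{eq:alternative-contaminated-training-events}. The last two pieces hold on a DKW event of probability at least $1-\gamma$, uniformly over $Q_2$. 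A union bound then gives the claimed probability. Because $\cE_{n_1}^S$ already compares $\cD_1$ directly with $P_0$, we do not pass through $\cD_0$.

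\emph{Calibration side.} For the absolute and CQR scores, Lemmas~\ref{lem:calibration-contamination-quantile} and~\ref{lem:empirical-quantile-error-under-p2} apply almost verbatim. We replace $R_{P_0}$ by $R^S(P_0;\cdot)$ and $r_0$ by $r_0^S$. Those proofs use only three facts: the mixture identity $F^S_{P_0,P_2}=(1-\delta_2)F^S_{P_0,P_0}+\delta_2F^S_{P_0,Q_2}$, the local density lower bound on the interval around $[(F^S_{P_0,P_0})^{-1}(p_-),(F^S_{P_0,P_0})^{-1}(p_+)]$, and DKW. All three hold for any score computed with the fixed oracle source $P_0$. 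Adding the two bounds gives $|\widehat q_S(P_0;\cD_2)-q_S(P_0;P_0)|\le\varepsilon^S_{\mathrm{cal},n_2}(\gamma)$ for all sufficiently large $n_2$.

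\emph{Training side.} For the absolute score, $|R^S_i(\cD_1)-R^S_i(P_0)|\le\|\mu(\cdot;\cD_1)-\mu(\cdot;P_0)\|_\infty\le\varepsilon^{\mathrm{Absolute}}_{\mathrm{train},n_1}$ on $\cE_{n_1}^{\mathrm{Absolute}}$. For CQR, the map $(\ell,u)\mapsto\max\{\ell-y,y-u\}$ is $1$-Lipschitz in sup norm, so the same perturbation bound holds with $\varepsilon^{\mathrm{CQR}}_{\mathrm{train},n_1}$. Lemma~\ref{lem:order-statistic-perturbation} at rank $k_{n_2}$ then bounds the first piece, and the triangle inequality finishes these two cases.

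For the adaptive score, Lemma~\ref{lem:adaptive-empirical-quantile-perturbation} bounds the first piece by $\beta+\kappa\,\widehat q_{\mathrm{Adaptive}}(P_0;\cD_2)$. The random quantile appearing here is controlled by the calibration step on the same DKW event: $\widehat q_{\mathrm{Adaptive}}(P_0;\cD_2)\le q_{\mathrm{Adaptive}}(P_0;P_0)+\varepsilon^{\mathrm{Adaptive}}_{\mathrm{cal},n_2}(\gamma)$. Substituting and adding the calibration error once more gives
\[
\beta+\kappa q_{\mathrm{Adaptive}}(P_0;P_0)+(1+\kappa)\varepsilon^{\mathrm{Adaptive}}_{\mathrm{cal},n_2}(\gamma),
\]
which is exactly \eqref{eq:alternative-total-score-quantile-error}. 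This coupling, where the multiplicative training error multiplies a random calibration quantile, is the only real obstacle. It is resolved by bounding that quantile on the same event rather than through a worst-case residual bound, so no extra failure probability is incurred.

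\emph{$O_{\Pb}$ statements.} Take $\gamma$ fixed but arbitrary. Since $\rho^S_{n_1}(P_0)=o(1)$ and $n_1^{-2}\to0$, the failure probability is eventually at most $2\gamma$. Moreover $\sqrt{\log(2/\gamma)/(2n_2)}+2/n_2=O(n_2^{-1/2})$ for fixed $\gamma$, which yields the stated rates. For $\gamma\in(0,1)$, $1-\delta_2>0$ and $r_0^S>0$ are fixed, so every factor in these bounds is finite.
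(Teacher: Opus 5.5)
Your proposal is correct and follows the paper's proof essentially step for step. It uses the same three-piece decomposition comparing $\cD_1$ directly with $P_0$ on $\cE_{n_1}^S$, the reverse-triangle and $1$-Lipschitz-maximum perturbation bounds with Lemma~\ref{lem:order-statistic-perturbation} for the absolute and CQR scores, the score-indexed mixture sandwich and DKW bounds on the calibration side, control of $\widehat q_{\mathrm{Adaptive}}(P_0;\cD_2)$ by $q_{\mathrm{Adaptive}}(P_0;P_0)+\varepsilon^{\mathrm{Adaptive}}_{\mathrm{cal},n_2}(\gamma)$ on the same calibration event, and a final union bound, while your derivation of the $O_{\Pb}$ statements by fixing $\gamma$ is just a more explicit version of the paper's closing remark.
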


\begin{proof}
We first bound the training-induced score perturbation. For the absolute
score, the reverse triangle inequality gives, on
$\cE_{n_1}^{\mathrm{Absolute}}$,
\begin{align*}
\left|
R_i^{\mathrm{Absolute}}(\cD_1)
-R_i^{\mathrm{Absolute}}(P_0)
\right|
&\le
|\mu(\bX_i;\cD_1)-\mu(\bX_i;P_0)|\\
&\le \varepsilon_{\mathrm{train},n_1}^{\mathrm{Absolute}}.
\end{align*}
For CQR, the triangle inequality first gives, on
$\cE_{n_1}^{\mathrm{CQR}}$,
\[
\max_{g\in\{\ell,u\}}
\|g(\cdot;\cD_1)-g(\cdot;P_0)\|_\infty
\le \varepsilon_{\mathrm{train},n_1}^{\mathrm{CQR}}.
\]
The maximum of two coordinates is one-Lipschitz under the maximum norm, so
\begin{align*}
&\left|
R_i^{\mathrm{CQR}}(\cD_1)-R_i^{\mathrm{CQR}}(P_0)
\right|\\
&\quad\le
\max\left\{
|\ell(\bX_i;\cD_1)-\ell(\bX_i;P_0)|,
|u(\bX_i;\cD_1)-u(\bX_i;P_0)|
\right\}\\
&\quad\le \varepsilon_{\mathrm{train},n_1}^{\mathrm{CQR}}.
\end{align*}
Thus, for $S\in\{\mathrm{Absolute},\mathrm{CQR}\}$,
Lemma~\ref{lem:order-statistic-perturbation} implies
\begin{equation}
\left|
\widehat q_S(\cD_1;\cD_2)-\widehat q_S(P_0;\cD_2)
\right|
\le \varepsilon_{\mathrm{train},n_1}^S
\label{eq:alternative-score-training-part}
\end{equation}
on $\cE_{n_1}^S$.

For the calibration contribution, fixing the score at its clean population
value gives
\[
F_{P_0,P_2}^S(t)
=
(1-\delta_2)F_{P_0,P_0}^S(t)
+\delta_2F_{P_0,Q_2}^S(t).
\]
The quantile-sandwich argument of
Lemma~\ref{lem:calibration-contamination-quantile} yields
\begin{equation}
\left|q_S(P_0;P_2)-q_S(P_0;P_0)\right|
\le
\frac{\delta_2(1-\alpha)}{(1-\delta_2)r_0^S}.
\label{eq:alternative-score-calibration-population-part}
\end{equation}
The Dvoretzky--Kiefer--Wolfowitz inequality and the conformal rank
correction further give, with probability at least $1-\gamma$,
\begin{equation}
\left|
\widehat q_S(P_0;\cD_2)-q_S(P_0;P_2)
\right|
\le
\frac{
\sqrt{\log(2/\gamma)/(2n_2)}+2/n_2
}{(1-\delta_2)r_0^S}.
\label{eq:alternative-score-calibration-empirical-part}
\end{equation}
For $S\in\{\mathrm{Absolute},\mathrm{CQR}\}$, combining
\eqref{eq:alternative-score-training-part} with the two calibration bounds
by the triangle inequality gives the first branch of
\eqref{eq:alternative-total-score-quantile-error}.

For the adaptive score, abbreviate
\[
\widehat q_0
:=\widehat q_{\mathrm{Adaptive}}(P_0;\cD_2),
\qquad
q_0:=q_{\mathrm{Adaptive}}(P_0;P_0),
\]
and write $\kappa_\sigma
:=\kappa_{\sigma,n_1}^{\mathrm{Adaptive}}$ and
$\beta_\mu:=\beta_{\mu,n_1}^{\mathrm{Adaptive}}$.
On the calibration event given by
\eqref{eq:alternative-score-calibration-population-part} and
\eqref{eq:alternative-score-calibration-empirical-part},
\[
|\widehat q_0-q_0|
\le\varepsilon_{\mathrm{cal},n_2}^{\mathrm{Adaptive}}(\gamma),
\qquad
\widehat q_0
\le q_0+\varepsilon_{\mathrm{cal},n_2}^{\mathrm{Adaptive}}(\gamma).
\]
Lemma~\ref{lem:adaptive-empirical-quantile-perturbation} and the triangle
inequality now give
\begin{align*}
&\left|
\widehat q_{\mathrm{Adaptive}}(\cD_1;\cD_2)-q_0
\right|\\
&\quad\le
\beta_\mu+\kappa_\sigma\widehat q_0
+\varepsilon_{\mathrm{cal},n_2}^{\mathrm{Adaptive}}(\gamma)\\
&\quad\le
\beta_\mu+\kappa_\sigma q_0
+(1+\kappa_\sigma)
\varepsilon_{\mathrm{cal},n_2}^{\mathrm{Adaptive}}(\gamma),
\end{align*}
which is the adaptive branch of
\eqref{eq:alternative-total-score-quantile-error}. The training event is
uniform over $Q_1$ by \eqref{eq:alternative-contaminated-training-events},
while the mixture sandwich and DKW inequality are uniform over $Q_2$.
Taking a union bound proves \eqref{eq:alternative-score-quantile-bound}.
The $O_{\Pb}$ statements follow because
$n_2^{-1}=O(n_2^{-1/2})$.
\end{proof}

\subsubsection{Interval-Length Deviation Bounds}

The score-quantile bounds yield the following deviations of the fitted
interval lengths from their clean oracle counterparts.

\begin{proposition}[Alternative-score interval-length deviation bounds]
\label{prop:alternative-score-length-bounds}
Under the score-specific conditions of this subsection, for every
$\gamma\in(0,1)$ and all sufficiently large $n_1,n_2$, the following bounds
hold with probability at least
$1-\rho_{n_1}^S(P_0)-n_1^{-2}-\gamma$ for the corresponding score $S$:
\begin{align*}
\sup_{\bbx\in\cX}
\left|2\widehat q_{\mathrm{Absolute}}(\cD_1;\cD_2)
-2q_{\mathrm{Absolute}}(P_0;P_0)\right|
&\le 2\varepsilon_{q,n_1,n_2}^{\mathrm{Absolute}}(\gamma),\\
\sup_{\bbx\in\cX}
\Big|[u(\bbx;\cD_1)-\ell(\bbx;\cD_1)
+2\widehat q_{\mathrm{CQR}}(\cD_1;\cD_2)]_+
&\nonumber\\[-0.4em]
{}-[u(\bbx;P_0)-\ell(\bbx;P_0)
+2q_{\mathrm{CQR}}(P_0;P_0)]_+\Big|
&\le
4\varepsilon_{\mathrm{train},n_1}^{\mathrm{CQR}}
+2\varepsilon_{\mathrm{cal},n_2}^{\mathrm{CQR}}(\gamma),\\
\sup_{\bbx\in\cX}
\Bigl|2\sigma(\bbx;\cD_1)
\widehat q_{\mathrm{Adaptive}}(\cD_1;\cD_2)
&\nonumber\\[-0.4em]
{}-2\sigma(\bbx;P_0)q_{\mathrm{Adaptive}}(P_0;P_0)\Bigr|
&\le
2\bigl(\overline\sigma
+\varepsilon_{\sigma,n_1}^{\mathrm{Adaptive}}\bigr)
\varepsilon_{q,n_1,n_2}^{\mathrm{Adaptive}}(\gamma)
\nonumber\\[-0.4em]
&\quad
+2q_{\mathrm{Adaptive}}(P_0;P_0)
\varepsilon_{\sigma,n_1}^{\mathrm{Adaptive}}.
\end{align*}
The absolute-score bound recovers Theorem~\ref{thm:delta}. The CQR bound
also accounts for perturbations of the fitted lower and upper regressors,
while the adaptive bound includes both quantile and scale perturbations.
\end{proposition}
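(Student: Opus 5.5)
The plan is to work on the same high-probability event as Proposition~\ref{prop:alternative-score-quantile-bounds} for each score $S$, namely $\cE_{n_1}^S$ intersected with the calibration DKW event. By \eqref{eq:alternative-contaminated-training-events} and the union bound, this event has probability at least $1-\rho_{n_1}^S(P_0)-n_1^{-2}-\gamma$, uniformly over $Q_1,Q_2$. On it we have \eqref{eq:alternative-score-quantile-bound} together with the componentwise regressor bounds encoded in $\cE_{n_1}^S$. Each length bound then follows deterministically from elementary Lipschitz manipulations, with no new probabilistic work.

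\textbf{Absolute score.} The length $2\widehat q_{\mathrm{Absolute}}(\cD_1;\cD_2)$ does not depend on $\bbx$, so the supremum is trivial. Multiplying \eqref{eq:alternative-score-quantile-bound} by two gives the first bound, which is exactly Theorem~\ref{thm:delta}.

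\textbf{CQR score.} The map $z\mapsto[z]_+$ is $1$-Lipschitz, so the deviation at each $\bbx$ is at most
\[
|u(\bbx;\cD_1)-u(\bbx;P_0)|+|\ell(\bbx;\cD_1)-\ell(\bbx;P_0)|+2|\widehat q_{\mathrm{CQR}}(\cD_1;\cD_2)-q_{\mathrm{CQR}}(P_0;P_0)|.
\]
On $\cE_{n_1}^{\mathrm{CQR}}$ the first two terms together are at most $2\varepsilon_{\mathrm{train},n_1}^{\mathrm{CQR}}$. The last term is at most $2\varepsilon_{\mathrm{train},n_1}^{\mathrm{CQR}}+2\varepsilon_{\mathrm{cal},n_2}^{\mathrm{CQR}}(\gamma)$ by the first branch of \eqref{eq:alternative-total-score-quantile-error}. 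Summing gives $4\varepsilon_{\mathrm{train}}+2\varepsilon_{\mathrm{cal}}$, uniformly in $\bbx$.

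\textbf{Adaptive score.} Write $\sigma_1=\sigma(\bbx;\cD_1)$, $\sigma_0=\sigma(\bbx;P_0)$, $\widehat q=\widehat q_{\mathrm{Adaptive}}(\cD_1;\cD_2)$ and $q_0=q_{\mathrm{Adaptive}}(P_0;P_0)$, and decompose
\[
\sigma_1\widehat q-\sigma_0q_0=\sigma_1(\widehat q-q_0)+(\sigma_1-\sigma_0)q_0 .
\]
On $\cE_{n_1}^{\mathrm{Adaptive}}$ we have $|\sigma_1-\sigma_0|\le\varepsilon_{\sigma,n_1}^{\mathrm{Adaptive}}$, hence $\sigma_1\le\overline\sigma+\varepsilon_{\sigma,n_1}^{\mathrm{Adaptive}}$ for every $\bbx$. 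Combined with the adaptive branch of \eqref{eq:alternative-score-quantile-bound} and $q_0\ge0$, doubling yields the stated bound uniformly in $\bbx$.

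\textbf{Main obstacle.} The only genuinely delicate point is the adaptive case. The quantile error there was already handled multiplicatively in Lemma~\ref{lem:adaptive-empirical-quantile-perturbation}. It remains to keep the supremum over $\bbx$ finite, which is precisely what the clean-scale bound $\overline\sigma$ provides; no residual bound is needed. I will also verify that "all sufficiently large $n_1,n_2$" covers the requirements $k_{n_2}\le n_2$ and the DKW localization used in Lemma~\ref{lem:empirical-quantile-error-under-p2}.
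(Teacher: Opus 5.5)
Your proposal is correct and follows essentially the same route as the paper. Both arguments intersect $\cE_{n_1}^S$ with the calibration event of Proposition~\ref{prop:alternative-score-quantile-bounds}, then argue deterministically: for CQR, the $1$-Lipschitz property of $[\cdot]_+$ with a $2\varepsilon_{\mathrm{train},n_1}^{\mathrm{CQR}}$ bound on the band-width perturbation; for the adaptive score, the decomposition $\sigma_1(\widehat q-q_0)+(\sigma_1-\sigma_0)q_0$ with $\sigma_1\le\overline\sigma+\varepsilon_{\sigma,n_1}^{\mathrm{Adaptive}}$.
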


\begin{proof}[Proof of Proposition~\ref{prop:alternative-score-length-bounds}]
Fix a score $S$ and intersect its training event $\cE_{n_1}^S$ with the
calibration event in
Proposition~\ref{prop:alternative-score-quantile-bounds}. On this event, write
\[
E_q^S:=\varepsilon_{q,n_1,n_2}^S(\gamma),
\qquad
\left|\widehat q_S(\cD_1;\cD_2)-q_S(P_0;P_0)\right|\le E_q^S.
\]
For the absolute score,
\begin{align*}
&\sup_{\bbx\in\cX}
\left|2\widehat q_{\mathrm{Absolute}}(\cD_1;\cD_2)
-2q_{\mathrm{Absolute}}(P_0;P_0)\right|\\
&\qquad
\le2E_q^{\mathrm{Absolute}}.
\end{align*}

For CQR, set
$b_A(\bbx):=u(\bbx;A)-\ell(\bbx;A)$. The definition of
$\cE_{n_1}^{\mathrm{CQR}}$ and the triangle inequality give
\[
\sup_{\bbx\in\cX}|b_{\cD_1}(\bbx)-b_{P_0}(\bbx)|
\le2\varepsilon_{\mathrm{train},n_1}^{\mathrm{CQR}}.
\]
Because $z\mapsto[z]_+$ is one-Lipschitz,
\begin{align*}
&\sup_{\bbx\in\cX}
\Big|[
u(\bbx;\cD_1)-\ell(\bbx;\cD_1)
+2\widehat q_{\mathrm{CQR}}(\cD_1;\cD_2)
]_+\\
&\hspace{8em}
-[
u(\bbx;P_0)-\ell(\bbx;P_0)
+2q_{\mathrm{CQR}}(P_0;P_0)
]_+\Big|\\
&\le
2\varepsilon_{\mathrm{train},n_1}^{\mathrm{CQR}}
+2E_q^{\mathrm{CQR}}\\
&=
4\varepsilon_{\mathrm{train},n_1}^{\mathrm{CQR}}
+2\varepsilon_{\mathrm{cal},n_2}^{\mathrm{CQR}}(\gamma).
\end{align*}

For the adaptive score, abbreviate
$\varepsilon_\sigma:=\varepsilon_{\sigma,n_1}^{\mathrm{Adaptive}}$.
On $\cE_{n_1}^{\mathrm{Adaptive}}$,
\[
\sup_{\bbx\in\cX}
|\sigma(\bbx;\cD_1)-\sigma(\bbx;P_0)|
\le\varepsilon_\sigma,
\qquad
\sup_{\bbx\in\cX}\sigma(\bbx;\cD_1)
\le\overline\sigma+\varepsilon_\sigma.
\]
Since adaptive scores and hence $q_{\mathrm{Adaptive}}(P_0;P_0)$ are
nonnegative,
uniformly over $\bbx\in\cX$,
\begin{align*}
&\left|
2\sigma(\bbx;\cD_1)\widehat q_{\mathrm{Adaptive}}(\cD_1;\cD_2)
-2\sigma(\bbx;P_0)q_{\mathrm{Adaptive}}(P_0;P_0)
\right|\\
&\quad\le
2\sigma(\bbx;\cD_1)E_q^{\mathrm{Adaptive}}
+2q_{\mathrm{Adaptive}}(P_0;P_0)
|\sigma(\bbx;\cD_1)-\sigma(\bbx;P_0)|\\
&\quad\le
2(\overline\sigma+\varepsilon_\sigma)
E_q^{\mathrm{Adaptive}}
+2q_{\mathrm{Adaptive}}(P_0;P_0)\varepsilon_\sigma.
\end{align*}
Taking the supremum proves the adaptive row. The common event has probability
at least
$1-\rho_{n_1}^S(P_0)-n_1^{-2}-\gamma$ by
Proposition~\ref{prop:alternative-score-quantile-bounds}.
\end{proof}

\subsection{Derivation for Example~\ref{ex:rq-replacement-stability}}
\label{ssec:proof-rq-replacement-stability}

\RQReplacementStabilityExample*
\begin{proof}
Let $\cD_1'$ be a training dataset of size $n_1$ that differs from
$\cD_1$ in at most $k$ observations, and write
$\hat\omega_1:=\hat\omega(\cD_1)$ and
$\hat\omega_1':=\hat\omega(\cD_1')$. For each
$\cA\in\{\cD_1,\cD_1'\}$, write $J_{\cA}$ for the objective in
\eqref{eq:rq-objective} formed from $\cA$. This
objective is $\lambda$-strongly convex, so the minimizer is unique. For a
single observation $(\bX_i,Y_i)$ and any $\omega$, every subgradient of
$\rho_\tau(Y_i-\omega^\T\bX_i)$ with respect to $\omega$ has Euclidean
norm at most
\[
G:=B_x\max\{\tau,1-\tau\}.
\]
This bound does not require bounded responses.

Set
\[
I:=\{i:(\bX_i,Y_i)\ne(\bX_i',Y_i')\},
\qquad |I|\le k.
\]
Because $0\in\partial J_{\cD_1'}(\hat\omega_1')$, choose subgradients
$\xi_i'\in\partial_\omega
\rho_\tau(Y_i'-(\hat\omega_1')^\T\bX_i')$ such that
\[
\frac{1}{n_1}\sum_{i=1}^{n_1}\xi_i'
+\lambda\hat\omega_1'=0.
\]
For every $i\notin I$, the two observations are identical, and we choose
the corresponding subgradient for $J_{\cD_1}$ to be $\xi_i=\xi_i'$.
For $i\in I$, choose any
$\xi_i\in\partial_\omega
\rho_\tau(Y_i-(\hat\omega_1')^\T\bX_i)$. Then
\[
g:=
\frac{1}{n_1}\sum_{i=1}^{n_1}\xi_i
+\lambda\hat\omega_1'
\in\partial J_{\cD_1}(\hat\omega_1')
\]
satisfies
\[
\|g\|_2
=
\left\|
\frac{1}{n_1}\sum_{i\in I}(\xi_i-\xi_i')
\right\|_2
\le
\frac{2Gk}{n_1}.
\]
Since $0\in\partial J_{\cD_1}(\hat\omega_1)$ and $J_{\cD_1}$ is
$\lambda$-strongly convex, its subdifferential is
$\lambda$-strongly monotone:
\[
\lambda\|\hat\omega_1'-\hat\omega_1\|_2^2
\le
\langle g,\hat\omega_1'-\hat\omega_1\rangle
\le
\|g\|_2\,\|\hat\omega_1'-\hat\omega_1\|_2.
\]
If $\hat\omega_1'=\hat\omega_1$, there is nothing to prove. Otherwise,
division by $\|\hat\omega_1'-\hat\omega_1\|_2$ gives
\[
\|\hat\omega_1'-\hat\omega_1\|_2
\le
\frac{2G}{\lambda}\frac{k}{n_1}.
\]
Therefore,
\[
\sup_{\|\bX\|_2\le B_x}
\left|\bX^\T(\hat\omega_1'-\hat\omega_1)\right|
\le
B_x\|\hat\omega_1'-\hat\omega_1\|_2
\le
\frac{2B_xG}{\lambda}\frac{k}{n_1},
\]
which is the claimed bound.
\end{proof}

\subsection{Formal Joint Minimax Lower Bound and Proof}
\label{ssec:proof_joint_minimax_lower_bound}

\paragraph{Lower-bound model.}
The following assumptions are used only for the lower bound; they are not
consequences of sampling or replacement stability.
To distinguish the density scale of this least-favorable submodel from the
upper-bound constant $r_0$, we denote it by $r_\star$. The two quantities are
related only through the constant-factor comparison used in the informal rate
discussion in Section~\ref{ssec:joint-minimax-lower-bound}.

\begin{assumption}[Regular location submodel]
\label{ass:least-favorable-location}
There exist constants
$L_\star,h_0,s_0,r_\star,\kappa>0$ and
$C_{\mathrm{KL}},C_{\mathrm{TV}},C_f<\infty$ with the following
properties. Let $\Theta=[-h_0/2,h_0/2]$. The model contains the
intercept-only location family
\[
\bX=0,\qquad Y=\theta+\varepsilon,\qquad \theta\in\Theta,
\]
where $\varepsilon$ has a continuously differentiable density $g$ satisfying
the conditions below; denote the law of $(\bX,Y)$ by $P_\theta$. On this
submodel, the clean
population predictor is $\mu(0;P_\theta)=\theta$. 
Here $\theta$ is the population target defining the lower-bound submodel; the
proof does not require it to coincide with the conditional mean of $Y$ under
the asymmetric noise distribution used below.
Write
\[
F_h(t):=\Pb(|\varepsilon-h|\le t),\qquad
q_g(h):=F_h^{-1}(1-\alpha),\qquad q_\star:=q_g(0).
\]
Thus $q(P_\theta;P_\theta)=q_\star$ for every $\theta\in\Theta$. For $t>0$,
let
\[
f_h(t):=F_h'(t)=g(h+t)+g(h-t)
\]
denote the density of the absolute-residual score $|\varepsilon-h|$.
For every $\theta,\theta'\in\Theta$,
\begin{align}
\operatorname{KL}(P_\theta,P_{\theta'})
&\le
C_{\mathrm{KL}}\frac{(\theta-\theta')^2}{L_\star^2},
\label{eq:lower-local-kl}\\
\operatorname{TV}(P_\theta,P_{\theta'})
&\le
C_{\mathrm{TV}}\frac{|\theta-\theta'|}{L_\star}.
\label{eq:lower-local-tv}
\end{align}
For every $|h|\le h_0$, the score density $f_h$ satisfies
\begin{equation}
r_\star\le f_h(t)\le C_f r_\star
\quad\text{whenever}\quad
|t-q_g(h)|\le 2s_0,
\label{eq:lower-two-sided-score-density}
\end{equation}
and the population score quantile is locally identifiable:
\begin{equation}
|q_g(h)-q_\star|\ge\kappa|h|.
\label{eq:lower-quantile-transversality}
\end{equation}
Finally, there exist $z_-,z_+\in\R$ such that, for all $|h|\le h_0$,
\begin{equation}
|z_--h|\le q_g(h)-2s_0,
\qquad
|z_+-h|\ge q_g(h)+2s_0.
\label{eq:lower-extreme-score-points}
\end{equation}
\end{assumption}

The assumption is non-vacuous. For example, fix $\alpha$ and start from
a $\mathcal N(m,1)$ density with $m\ne0$. At the $(1-\alpha)$-quantile
$q_{\mathrm{base}}>0$ of its absolute value, the densities at $q_{\mathrm{base}}$ and
$-q_{\mathrm{base}}$ are unequal. Adding a sufficiently small Gaussian component
with a different mean and variance preserves this inequality at the
perturbed distribution's own absolute-value quantile and produces an
asymmetric two-Gaussian mixture $g_\star$. For $\sigma>0$, the scale family
$g_\sigma(x)=\sigma^{-1}g_\star(x/\sigma)$ supplies examples for the
assumption: taking $g=g_\sigma$, the local KL and TV bounds hold with
$L_\star$ comparable to $\sigma$ up to constant factors. On compact shift
intervals, for some finite
constant $C_g$,
\[
\operatorname{KL}(P_\theta,P_{\theta'})
\le C_g\frac{(\theta-\theta')^2}{\sigma^2},
\qquad
\operatorname{TV}(P_\theta,P_{\theta'})
\le\frac{\|g_\star'\|_1|\theta-\theta'|}{2\sigma}.
\]
Here $\|g_\star'\|_1:=\int_{\R}|g_\star'(x)|\,\mathrm dx$.
Moreover, implicit differentiation of $F_h(q_g(h))=1-\alpha$ gives
\begin{equation*}
q_g'(0)
=
\frac{g(-q_\star)-g(q_\star)}{g(-q_\star)+g(q_\star)}
\ne0.
\end{equation*}
Continuity yields \eqref{eq:lower-two-sided-score-density} and
\eqref{eq:lower-quantile-transversality} after choosing $h_0,s_0$ small
enough. Taking $h_0+2s_0<\inf_{|h|\le h_0}q_g(h)$ permits $z_-=0$, and any
sufficiently large $|z_+|$ supplies the high-score point.

\paragraph{Formal criterion and theorem.}
Let $\mathcal E_{n_1}(\Theta)$ be the class of measurable, possibly randomized
estimators $\widehat\theta:\cD_1\mapsto\Theta$, and write
$\widehat\theta_1:=\widehat\theta(\cD_1)$. For $u\in\Theta$, let
$\widehat q(u;\cD_2)$ denote the conformal calibration quantile based on the
scores $|Y_i-u|$. Fix $\delta_1,\delta_2\in[0,1)$. The contamination
distributions $Q_1,Q_2$ range over all probability measures on the observation
space, and the training and calibration samples are independent with laws
\[
\cD_1\sim\bigl((1-\delta_1)P_\theta+\delta_1Q_1\bigr)^{n_1},
\qquad
\cD_2\sim\bigl((1-\delta_2)P_\theta+\delta_2Q_2\bigr)^{n_2}.
\]
$\Pb_{\theta,Q_1,Q_2}$ denotes probability over this joint law and any
independent internal randomization of the estimator. Define the minimax
tail-probability profile
\begin{equation*}
\mathfrak p_{n_1,n_2}^{\mathrm{split}}(t)
:=
\inf_{\widehat\theta\in\mathcal E_{n_1}(\Theta)}
\sup_{\theta\in\Theta}\sup_{Q_1,Q_2}
\Pb_{\theta,Q_1,Q_2}
\left(
\left|2\widehat q(\widehat\theta_1;\cD_2)-2q_\star\right|
\ge t
\right).
\end{equation*}
Also define the training- and calibration-side difficulty scales
\begin{equation}
a_{n_1}
:=
\left\{
L_\star\bigl(n_1^{-1/2}+\delta_1\bigr)
\right\}\wedge h_0,
\qquad
b_{n_2}
:=
\left\{
\frac{n_2^{-1/2}+\delta_2}{r_\star}
\right\}\wedge s_0,
\label{eq:lower-rate-components}
\end{equation}
where $x\wedge y:=\min\{x,y\}$. The truncations restrict the construction to
the local regime of Assumption~\ref{ass:least-favorable-location}.

\begin{theorem}[Joint minimax lower bound for interval-length deviation]
\label{thm:joint-minimax-lower-bound}
Under Assumption~\ref{ass:least-favorable-location}, there exist constants
$c,c_0>0$ and $n_0<\infty$, depending only on $\alpha$ and the fixed
lower-bound submodel constants, such that, for all $n_1,n_2\ge n_0$,
\begin{equation}
\mathfrak p_{n_1,n_2}^{\mathrm{split}}
\left(c\bigl(a_{n_1}+b_{n_2}\bigr)\right)
\ge c_0.
\label{eq:joint-minimax-lower-bound}
\end{equation}
\end{theorem}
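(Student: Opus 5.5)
Since $a_{n_1}+b_{n_2}\le 2\max\{a_{n_1},b_{n_2}\}$, it suffices to prove two separate statements, each uniformly over $\widehat\theta\in\mathcal E_{n_1}(\Theta)$. The first is a training-side bound: $\sup_{\theta,Q_1,Q_2}\Pb(|2\widehat q-2q_\star|\ge c\,a_{n_1})\ge c_0$ whenever $a_{n_1}\ge C_1 b_{n_2}$. The second is a calibration-side bound: $\sup_{\theta,Q_1,Q_2}\Pb(|2\widehat q-2q_\star|\ge c\,b_{n_2})\ge c_0$ holds always. Whichever of $a_{n_1},b_{n_2}$ dominates then gives \eqref{eq:joint-minimax-lower-bound}, after shrinking $c$ by the factor $C_1$. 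Throughout, I condition on $\cD_1$ and exploit independence of $\cD_2$. I write $u=\widehat\theta_1$ and $h=u-\theta$, so that the oracle score of a calibration point is $|\varepsilon-h|$, with CDF $F_h$ and quantile $q_g(h)$.

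\emph{Calibration side.} Fix any realized $u$. Take $\theta=0$, so $|h|\le h_0$, and let $Q_1$ be arbitrary.

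\textbf{Case $\delta_2\ge n_2^{-1/2}$.} Compare $Q_2^\pm$, the point masses at $(0,z_\pm)$. By \eqref{eq:lower-extreme-score-points}, all contaminating scores lie below $q_g(h)-2s_0$ under $Q_2^-$ and above $q_g(h)+2s_0$ under $Q_2^+$. The mixture CDF near $q_g(h)$ is therefore $(1-\delta_2)F_h+\delta_2$ under $Q_2^-$ and $(1-\delta_2)F_h$ under $Q_2^+$. Combining the upper density bound $C_fr_\star$ in \eqref{eq:lower-two-sided-score-density} with the truncation $b_{n_2}\le s_0$, the two population quantiles lie within the window and differ by at least $c\,\delta_2/r_\star$. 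By the DKW argument of Lemma~\ref{lem:empirical-quantile-error-under-p2}, each empirical quantile lies within $C n_2^{-1/2}/r_\star$ of its population value with probability $\ge 1/2$. Hence, for at least one of $Q_2^\pm$, the deviation $|\widehat q-q_\star|$ exceeds a constant multiple of $\delta_2/r_\star$ with probability $\ge c_0$; if the $n_2^{-1/2}$ term interferes, this is absorbed into the next case.

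\textbf{Case $\delta_2<n_2^{-1/2}$.} Here I use anti-concentration of the $k_{n_2}$-th order statistic. For any fixed target $x$ (take $x=q_\star$), the event $\{|\widehat q-x|<c\,n_2^{-1/2}/r_\star\}$ requires the binomial count of scores below $x\pm c n_2^{-1/2}/r_\star$ to straddle $k_{n_2}$. The window has probability mass at most $2C_f c\,n_2^{-1/2}$ by the upper density bound. By the binomial CLT, or Berry--Esseen, this event has probability at most $1-c_0$ once $c$ is small. Both cases hold conditionally for every $u$, so they integrate over $\cD_1$.

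\emph{Training side.} I use Le Cam's two-point method with $\theta_0,\theta_1\in\Theta$ separated by $\Delta\asymp a_{n_1}$. For the $n_1^{-1/2}$ part, \eqref{eq:lower-local-kl} with $\Delta=cL_\star n_1^{-1/2}$ gives $\operatorname{KL}(P_{\theta_0}^{n_1},P_{\theta_1}^{n_1})\le C_{\mathrm{KL}}c^2$, so the product laws are TV-close. For the $\delta_1$ part, \eqref{eq:lower-local-tv} with $\Delta=cL_\star\delta_1$ gives $\operatorname{TV}(P_{\theta_0},P_{\theta_1})\le\delta_1/(1-\delta_1)$. The standard construction then yields $Q_1^0,Q_1^1$ such that the two contaminated training laws coincide exactly. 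In either regime, any estimator satisfies $|\widehat\theta_1-\theta_j|\ge\Delta/2$ with probability $\ge c_0$ under some $j$. On that event, set $Q_2=P_{\theta_j}$, so the calibration data are effectively clean. The identifiability condition \eqref{eq:lower-quantile-transversality} then gives $|q_g(h)-q_\star|\ge\kappa\Delta/2$, and DKW together with the lower density bound $r_\star$ places $\widehat q$ within $Cn_2^{-1/2}/r_\star\le C b_{n_2}$ of $q_g(h)$ with probability $\ge1-c_0/2$. Since $a_{n_1}\ge C_1b_{n_2}$ with $C_1$ large, the deviation is at least $c\,a_{n_1}$.

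The main obstacle is this last transfer step: the calibration sample is drawn independently of the random $h$, yet the concentration must hold uniformly over $|h|\le h_0$, and the scale truncations $\wedge h_0$ and $\wedge s_0$ must keep all relevant quantiles inside the window where \eqref{eq:lower-two-sided-score-density} applies. I would handle this by conditioning on $\cD_1$, applying DKW conditionally, and fixing $h_0,s_0$-dependent constants so that $\Delta\le h_0$ and each quantile shift is at most $s_0$.
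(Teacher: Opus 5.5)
Your architecture matches the paper's proof: Le Cam two-point arguments for the training side (KL plus Pinsker for the $n_1^{-1/2}$ term, exact overlap of Huber neighborhoods for the $\delta_1$ term), transfer to the empirical radius under clean calibration, anti-concentration of the conformal order statistic, and two point-mass contaminations at $z_\pm$. The reduction to $\max\{a_{n_1},b_{n_2}\}$, the training side, and your DKW transfer are sound.

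\textbf{Main gap: calibration contamination for non-small $\delta_2$.} The theorem covers every fixed $\delta_2\in[0,1)$ with constants independent of $\delta_2$. The truncation $b_{n_2}\le s_0$ caps only the target rate, not $\delta_2$ itself. With undiluted point masses, the mixed quantiles are $F_h^{-1}\bigl((1-\alpha-\delta_2)/(1-\delta_2)\bigr)$ and $F_h^{-1}\bigl((1-\alpha)/(1-\delta_2)\bigr)$, or the atoms themselves. Their levels move by order $\delta_2$. Once $\delta_2/(1-\delta_2)$ exceeds a fixed multiple of $r_\star s_0$, they leave the band $|t-q_g(h)|\le 2s_0$; for $\delta_2>\alpha$, the quantile under $Q_2^+$ is the atom $|z_+-h|$ itself.

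So your claim that both quantiles lie in the band and differ by at least $c\,\delta_2/r_\star$ is false. The two requirements are even incompatible when $\delta_2/r_\star\gg s_0$. Your DKW-to-quantile conversion, which needs the lower density bound near the population quantile, is likewise unjustified.

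The missing idea is the paper's dilution. Take $Q_{2,\pm}=(\bar\delta_2/\delta_2)\operatorname{Dirac}(0,z_\pm)+(1-\bar\delta_2/\delta_2)P_{\theta=0}$ with $\bar\delta_2=\delta_2\wedge(c_3r_\star s_0)$. Then the effective contaminating mass is $\bar\delta_2$, every relevant quantile stays in the band, and the separation is still of order $(\delta_2/r_\star)\wedge s_0$. Alternatively, replace two-sided closeness by one-sided monotone bounds $\widehat q_+\ge q_g(h)+c\{(\delta_2/r_\star)\wedge s_0\}$ and $\widehat q_-\le q_g(h)-c\{(\delta_2/r_\star)\wedge s_0\}$. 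These hold on the DKW events once $\delta_2\ge C'n_2^{-1/2}$ with $C'$ large.

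\textbf{Quantifier order in the point-mass case.} Conditionally on $u$, you conclude that one of $Q_2^\pm$ is bad. But which one depends on $u$, whereas $Q_2$ must be fixed before $\cD_1$ is drawn. What integrates is the sum $\Pb_{Q_2^+}(E\mid u)+\Pb_{Q_2^-}(E\mid u)\ge c_0$. You can average it over $u$ because the law of $\widehat\theta_1$ does not depend on $Q_2$, which gives $\max\ge c_0/2$. The paper gets this from a common-uniform coupling of the two calibration samples. That coupling also makes the separation hold whatever the size of $\delta_2$ relative to $n_2^{-1/2}$. With your case split, the threshold must be $C'n_2^{-1/2}$ for a large $C'$, not $n_2^{-1/2}$.

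\textbf{Missing case in the anti-concentration step.} You apply the upper density bound on a window around $x=q_\star$. But the two-sided density condition controls $f_h$ only on $|t-q_g(h)|\le 2s_0$. Since $h=\widehat\theta_1$ comes from an arbitrary estimator, and transversality bounds $|q_g(h)-q_\star|$ only from below, $q_\star$ can lie outside that band. You need the paper's second case: if $|q_g(h)-q_\star|>s_0$, then $\widehat q$ concentrates within $s_0/2$ of $q_g(h)$. This gives $|\widehat q-q_\star|\ge s_0/2\ge b_{n_2}/2$.
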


When the truncations are inactive, the theorem gives the lower-bound scale
\[
L_\star\bigl(n_1^{-1/2}+\delta_1\bigr)
+
\frac{n_2^{-1/2}+\delta_2}{r_\star}.
\]
The theorem's constants may depend on the fixed submodel, so the displayed
inverse dependence on $r_\star$ is not a uniform claim over a family in which
$r_\star$ varies.

\paragraph{Criterion and scope.}
When $\theta=0$, we write the corresponding clean law explicitly as
$P_{\theta=0}$. A probability
criterion is essential here. If $\delta_2>0$ and $Q_2$ places its mass at
an arbitrarily large score $M$, the event that all $n_2$ calibration draws
come from the contaminating component has probability $\delta_2^{n_2}>0$
and makes the conformal radius arbitrarily large. Hence the supremum over
$Q_2$ of the untruncated expected interval-length loss is infinite, whereas
the fixed-probability profile is non-degenerate and comparable to the
fixed-confidence upper bound.

The restriction $\widehat\theta_1\in\Theta$ is part of the estimator class in
the theorem. It guarantees that
$|\widehat\theta_1-\theta|\le h_0$, so the local transversality and score-density
conditions apply uniformly throughout the proof. Without an additional
global condition on $q_g$ or a result showing that projection onto $\Theta$
cannot increase the interval-length loss, the theorem does not extend to
arbitrary real-valued constant predictors.

The four terms in \eqref{eq:lower-rate-components} arise from clean
training-sample indistinguishability, overlap of training Huber
neighborhoods, fluctuation of the calibration order statistic, and
separation of low- and high-score calibration contaminations. The
supremum may select a different least-favorable distribution for each
component; the theorem does not claim one distribution simultaneously
realizes all four mechanisms. It is also restricted to the standard split
architecture. Finally, $L_\star$ is a local information scale, whereas
$L$ in Definition~\ref{def:replacement-stability} is an algorithmic
stability constant. Thus the theorem establishes the structural orders,
not universal optimality of the algorithm-dependent constant $L$. Likewise,
$r_\star$ is the local score-density scale of the lower-bound
submodel and need not equal the upper-bound constant $r_0$ outside a common
model class.

A rate comparison must use the same confidence level on both sides. Fix a failure probability $\beta<c_0$, where $c_0$ is the lower-bound probability in Theorem~\ref{thm:joint-minimax-lower-bound}. On the same local model, require
\[
\eta_{n_1}(P_\theta;\beta/3)=O_\beta(L_\star n_1^{-1/2})
\quad\text{uniformly over }\theta\in\Theta,
\qquad L\asymp L_\star,\quad r_0\asymp r_\star.
\]
Here $\eta_{n_1}(P_\theta;\beta/3)$ is the fixed-confidence training radius defined in Appendix~\ref{ssec:fixed-confidence-matching}; it is distinct from the radius paired with a vanishing failure probability in Assumption~\ref{ass:sampling-stability}. The density interval must have a uniform margin, and the local truncations in the lower bound must be inactive. The same appendix proves the corresponding upper bound and gives a learner meeting these conditions on the fixed location submodel. Without these conditions, the lower bound identifies four structural components rather than a universal match of the algorithm-dependent upper bound.

We first record the elementary overlap fact used for the two contamination
constructions.

\begin{lemma}[Intersection of two Huber neighborhoods]
\label{lem:huber-neighborhood-intersection}
Let $P$ and $P'$ be two probability measures and let
$\delta\in(0,1)$. If
\begin{equation}
\operatorname{TV}(P,P')
\le
\frac{\delta}{1-\delta},
\label{eq:huber-overlap-condition}
\end{equation}
then there exist probability measures $Q$ and $Q'$ such that
\begin{equation*}
(1-\delta)P+\delta Q
=
(1-\delta)P'+\delta Q'.
\end{equation*}
\end{lemma}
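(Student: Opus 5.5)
The plan is a direct construction from the overlap of $P$ and $P'$. Write $\tau:=\operatorname{TV}(P,P')$. If $\tau=0$, then $P=P'$ and taking $Q=Q'$ arbitrary (e.g.\ $Q=Q'=P$) finishes the proof. Otherwise, let $\nu:=P\wedge P'$ be the common part, defined via densities with respect to $\lambda:=P+P'$: if $p=dP/d\lambda$ and $p'=dP'/d\lambda$, set $d\nu:=\min\{p,p'\}\,d\lambda$. Then $\nu(\text{whole space})=1-\tau$, and we can write
\[
P=\nu+(P-P')_+,\qquad P'=\nu+(P'-P)_+,
\]
where $(P-P')_+$ and $(P'-P)_+$ are nonnegative measures each of total mass $\tau$.

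The idea is to make both contaminated mixtures equal to one common target measure $M$, chosen so that it dominates $(1-\delta)P$ and $(1-\delta)P'$ simultaneously. The natural choice is
\[
M:=(1-\delta)\bigl(\nu+(P-P')_++(P'-P)_+\bigr)+c\,R,
\]
which is the rescaled maximum $(1-\delta)(P\vee P')$ plus a filler term. Here $R$ is any probability measure (say $R=P$) and $c\ge0$ is chosen so that $M$ has total mass one. The mass of $(1-\delta)(P\vee P')$ is $(1-\delta)(1+\tau)$, so we need
\[
c=1-(1-\delta)(1+\tau)=\delta-(1-\delta)\tau.
\]
This is nonnegative exactly under the hypothesis $\tau\le\delta/(1-\delta)$ in \eqref{eq:huber-overlap-condition}. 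Identifying where the hypothesis enters is the only real step.

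Next, define the two contaminating measures
\[
Q:=\delta^{-1}\bigl(M-(1-\delta)P\bigr)=\delta^{-1}\bigl((1-\delta)(P'-P)_++cR\bigr),
\]
\[
Q':=\delta^{-1}\bigl(M-(1-\delta)P'\bigr)=\delta^{-1}\bigl((1-\delta)(P-P')_++cR\bigr).
\]
Both are nonnegative measures, and each has mass $\delta^{-1}\bigl((1-\delta)\tau+c\bigr)=1$, so each is a probability measure. By construction,
\[
(1-\delta)P+\delta Q=M=(1-\delta)P'+\delta Q',
\]
which is the desired identity.

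I expect no real obstacle here. The only care needed is measure-theoretic: the positive and negative parts and $\nu$ should be defined via densities with respect to $\lambda$, so that the decomposition is valid on general spaces and the total-variation convention $\operatorname{TV}=\sup_A|P(A)-P'(A)|=(P-P')_+(\text{whole space})$ matches.
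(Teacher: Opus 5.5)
Your construction is correct and is essentially the paper's proof. The paper likewise takes the common target density $m=(1-\delta)\max\{p,p'\}+\zeta r$ with $\zeta=1-(1-\delta)\int\max\{p,p'\}\ge 0$ (using $\int\max\{p,p'\}=1+\operatorname{TV}(P,P')$), and sets $q=(m-(1-\delta)p)/\delta$ and $q'=(m-(1-\delta)p')/\delta$. Your decomposition into $\nu$ and the two positive parts only makes the nonnegativity and unit mass of $Q,Q'$ more explicit, and the separate $\tau=0$ case is unnecessary but harmless.
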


\begin{proof}
Let $p$ and $p'$ be densities of $P$ and $P'$ with respect to a common
dominating measure. Since
\[
\int\max\{p,p'\}
=
1+\operatorname{TV}(P,P'),
\]
condition~\eqref{eq:huber-overlap-condition} implies
\[
(1-\delta)\int\max\{p,p'\}\le1.
\]
Choose any probability density $r$, set
\[
\zeta
:=
1-(1-\delta)\int\max\{p,p'\}\ge0,
\]
and define
\[
m:=(1-\delta)\max\{p,p'\}+\zeta r.
\]
Then $m$ is a probability density and
$m\ge(1-\delta)p,(1-\delta)p'$ pointwise. Therefore
\[
q:=\frac{m-(1-\delta)p}{\delta},
\qquad
q':=\frac{m-(1-\delta)p'}{\delta}
\]
are probability densities and satisfy
\[
(1-\delta)p+\delta q
=m
=(1-\delta)p'+\delta q'.
\]
\end{proof}

\begin{lemma}[Uniform anti-concentration of the conformal rank]
\label{lem:beta-rank-anti-concentration}
Let $p=1-\alpha$ and $k_n=\lceil(n+1)p\rceil$. For every $n$ such that
$k_n\le n$, let
\[
B_n\sim\operatorname{Beta}(k_n,n+1-k_n).
\]
Denote the density of $B_n$ by $f_{B_n}$.
For fixed $\alpha\in(0,1)$, there are constants
$c_B,C_B>0$ and $n_B<\infty$ such that, for every $n\ge n_B$,
\begin{align}
\sup_{x\in[0,1]} f_{B_n}(x)
&\le C_B\sqrt n,
\label{eq:beta-density-upper-bound}\\
\inf_{u\in[0,1]}
\Pb\left(
|B_n-u|\ge\frac{c_B}{\sqrt n}
\right)
&\ge\frac12.
\label{eq:beta-uniform-anti-concentration}
\end{align}
Moreover, for every fixed $\epsilon>0$,
\begin{equation}
\Pb\left(|B_n-p|\le\epsilon\right)\ge\frac34
\label{eq:beta-central-concentration}
\end{equation}
for all sufficiently large $n$.
\end{lemma}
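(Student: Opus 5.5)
The plan is to prove the three claims in order, since \eqref{eq:beta-uniform-anti-concentration} follows directly from \eqref{eq:beta-density-upper-bound}. Write $a_n:=k_n$ and $b_n:=n+1-k_n$. The first step is to check that both parameters grow linearly in $n$. We have $a_n\ge(n+1)p$ and $a_n\le(n+1)p+1$, so $b_n\ge(n+1)(1-p)-1$. Hence for $n\ge n_B$, with $n_B$ depending only on $\alpha$, both $a_n,b_n\ge2$ and
\[
a_n\asymp b_n\asymp a_n+b_n=n+1,
\]
with constants depending only on $p$. Because $a_n,b_n\ge2$, the density
\[
f_{B_n}(x)=\frac{x^{a_n-1}(1-x)^{b_n-1}}{B(a_n,b_n)}
\]
is unimodal and bounded on $[0,1]$. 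Its maximum is attained at $x^\ast=(a_n-1)/(a_n+b_n-2)$.

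For \eqref{eq:beta-density-upper-bound}, set $a=a_n$, $b=b_n$, $m:=a+b-2$. The maximum value is
\[
f_{B_n}(x^\ast)=\frac{(a-1)^{a-1}(b-1)^{b-1}}{m^{m}}\cdot\frac{\Gamma(a+b)}{\Gamma(a)\Gamma(b)}.
\]
I will use the identity
\[
\frac{\Gamma(a+b)}{\Gamma(a)\Gamma(b)}=(m+1)\binom{m}{a-1},
\]
which turns the maximum into $(m+1)$ times the largest term $\binom{m}{j}x^{*j}(1-x^*)^{m-j}$ of a Binomial$(m,x^\ast)$ pmf. That term equals the probability of the mode of the Binomial$(m,x^\ast)$ law. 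By Stirling's formula with Robbins' explicit two-sided error bounds, it is at most
\[
C\big/\sqrt{m\,x^\ast(1-x^\ast)}.
\]
Since $x^\ast\to p\in(0,1)$, this gives $\sup_x f_{B_n}(x)\le C(m+1)/\sqrt{m}\le C_B\sqrt n$. Getting a clean, uniform-in-$n$ constant from Stirling is the only mildly technical step, and I expect it to be the main obstacle. If the Robbins route is awkward, the fallback is the standard local bound $\max_j\binom{m}{j}x^j(1-x)^{m-j}\le 1/\sqrt{2\pi m x(1-x)}\cdot(1+o(1))$ for $x$ bounded away from $0$ and $1$, which suffices for $n\ge n_B$.

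For \eqref{eq:beta-uniform-anti-concentration}, fix any $u\in[0,1]$ and bound the probability of the complementary event by the density bound:
\[
\Pb\bigl(|B_n-u|<c_B/\sqrt n\bigr)\le \frac{2c_B}{\sqrt n}\,C_B\sqrt n=2c_BC_B.
\]
Choosing $c_B:=1/(4C_B)$ makes this at most $1/2$. The bound does not depend on $u$, so the infimum over $u$ is at least $1/2$.

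For \eqref{eq:beta-central-concentration}, use the explicit moments of the Beta law:
\[
\E B_n=\frac{k_n}{n+1},\qquad \operatorname{Var}(B_n)=\frac{a_nb_n}{(n+1)^2(n+2)}\le\frac{1}{4(n+2)}.
\]
Since $|k_n/(n+1)-p|\le1/(n+1)$, for $n$ large enough that $1/(n+1)\le\epsilon/2$, Chebyshev's inequality gives
\[
\Pb(|B_n-p|>\epsilon)\le\Pb\bigl(|B_n-\E B_n|>\epsilon/2\bigr)\le\frac{1}{\epsilon^2(n+2)},
\]
which is at most $1/4$ once $n$ is large enough, depending on $\epsilon$.
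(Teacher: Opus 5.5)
Your proposal is correct and follows the paper's proof essentially step for step: a density bound from evaluating the Beta density at its mode via Stirling, the interval-length argument with $c_B\le(4C_B)^{-1}$, and Chebyshev using $\E B_n=k_n/(n+1)$ and $\operatorname{Var}(B_n)\le 1/(4(n+2))$. Your identity $\Gamma(a+b)/(\Gamma(a)\Gamma(b))=(m+1)\binom{m}{a-1}$ spells out the step the paper only calls ``Stirling's inequalities''; because $x^\ast=(a_n-1)/m$ exactly, Robbins' bounds apply directly to $\binom{m}{j}j^j(m-j)^{m-j}/m^m$, so you do not need the fallback.
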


\begin{proof}
Increase $n_B$, if necessary, so that $k_n\le n$ for $n\ge n_B$.
Because $k_n/(n+1)\to p\in(0,1)$, both beta shape parameters are
bounded below by a fixed positive multiple of $n$. Evaluating the beta
density $f_{B_n}$ at its mode and applying Stirling's inequalities gives
\eqref{eq:beta-density-upper-bound}. Hence, uniformly in $u\in[0,1]$,
\[
\Pb\left(
|B_n-u|<\frac{c_B}{\sqrt n}
\right)
\le 2C_Bc_B.
\]
Taking $c_B\le(4C_B)^{-1}$ proves
\eqref{eq:beta-uniform-anti-concentration}. Finally,
\[
\E B_n=\frac{k_n}{n+1}=p+O(n^{-1}),
\qquad
\operatorname{Var}(B_n)\le\frac{1}{4(n+2)}.
\]
Equation~\eqref{eq:beta-central-concentration} follows from Chebyshev's
inequality after increasing $n_B$.
\end{proof}

\begin{proof}[Proof of Theorem~\ref{thm:joint-minimax-lower-bound}]
Fix an arbitrary
$\widehat\theta\in\mathcal E_{n_1}(\Theta)$, set
$\widehat\theta_1:=\widehat\theta(\cD_1)$, and write
\[
H:=\widehat\theta_1-\theta,
\qquad
\widehat q_H:=\widehat q(\widehat\theta_1;\cD_2).
\]
Throughout the proof, $c>0$ denotes a generic constant that may decrease
from one display to the next and depends only on $\alpha$ and the constants
in Assumption~\ref{ass:least-favorable-location}. Every constant carrying
a descriptive subscript is fixed when introduced.
For a randomized estimator, adjoin its independent random seed to both
experiments below; this leaves every KL and TV comparison unchanged.
Because both $\theta$ and $\widehat\theta_1$ belong to $\Theta$, we always
have $|H|\le h_0$. We prove training and calibration lower bounds
separately and then combine them. It is useful to write
\begin{equation*}
u_{n_2}:=\left(\frac{1}{r_\star\sqrt{n_2}}\right)\wedge s_0,
\qquad
v_{n_2}:=\left(\frac{\delta_2}{r_\star}\right)\wedge s_0.
\end{equation*}
Elementary truncation inequalities give
\begin{equation}
\max\{u_{n_2},v_{n_2}\}
\le b_{n_2}
\le u_{n_2}+v_{n_2}
\le2\max\{u_{n_2},v_{n_2}\}.
\label{eq:lower-calibration-component-comparison}
\end{equation}

\paragraph{Step 1: clean training-sample difficulty.}
Choose a sufficiently small constant $c_1>0$ and let
\[
d_1
:=
c_1\left(\frac{L_\star}{\sqrt{n_1}}\wedge h_0\right)
\]
and take $\theta_-=-d_1/2$ and $\theta_+=d_1/2$. For this part choose
$Q_{1,\pm}=Q_{2,\pm}=P_{\theta_\pm}$, so the observed training laws are the clean
laws $P_{\theta_\pm}^{n_1}$. By
\eqref{eq:lower-local-kl},
\[
\operatorname{KL}
\left(P_{\theta_-}^{n_1},P_{\theta_+}^{n_1}\right)
\le
n_1 C_{\mathrm{KL}}\frac{d_1^2}{L_\star^2}.
\]
Choosing $c_1$ small enough and applying Pinsker's inequality gives
\begin{equation*}
\operatorname{TV}
\left(P_{\theta_-}^{n_1},P_{\theta_+}^{n_1}\right)
\le\frac14.
\end{equation*}

For any $z\in\Theta$, transversality and the triangle inequality imply
\begin{align}
&|q_g(z-\theta_-)-q_\star|
+|q_g(z-\theta_+)-q_\star|
\nonumber\\
&\qquad\ge
\kappa\left(
|z-\theta_-|+|z-\theta_+|
\right)
\ge
\kappa d_1.
\label{eq:lower-two-point-radius-separation}
\end{align}
For every $z$, at least one of the two losses in
\eqref{eq:lower-two-point-radius-separation} is at least
$\kappa d_1/2$. Integrating this statement with respect to the common
part of the two product measures, whose mass is
$1-\operatorname{TV}(P_{\theta_-}^{n_1},P_{\theta_+}^{n_1})$, yields
\begin{equation}
\max_{\vartheta\in\{\theta_-,\theta_+\}}
\Pb_{\vartheta,P_\vartheta,P_\vartheta}\left(
\left|q_g(\widehat\theta_1-\vartheta)-q_\star\right|
\ge\frac{\kappa d_1}{2}
\right)
\ge
\frac38.
\label{eq:lower-clean-training-probability}
\end{equation}

\paragraph{Step 2: training-contamination difficulty.}
Choose a sufficiently small constant $c_2>0$ and let
\[
d_2
:=
c_2\left(L_\star\delta_1\wedge h_0\right)
\]
and again take $\theta_\pm=\pm d_2/2$. If $\delta_1=0$, skip this
construction. Suppose henceforth in this step that $\delta_1>0$. By
\eqref{eq:lower-local-tv}, after reducing $c_2$ if necessary,
\[
\operatorname{TV}(P_{\theta_-},P_{\theta_+})
\le
C_{\mathrm{TV}}\frac{d_2}{L_\star}
\le
\frac{\delta_1}{1-\delta_1}.
\]
Lemma~\ref{lem:huber-neighborhood-intersection} supplies
$Q_{1,-},Q_{1,+}$ for which the two observed training distributions are
identical:
\[
(1-\delta_1)P_{\theta_-}+\delta_1Q_{1,-}
=
(1-\delta_1)P_{\theta_+}+\delta_1Q_{1,+}.
\]
For $\vartheta\in\{\theta_-,\theta_+\}$, write
$Q_{1,\vartheta}:=Q_{1,-}$ when $\vartheta=\theta_-$ and
$Q_{1,\vartheta}:=Q_{1,+}$ when $\vartheta=\theta_+$.
Also set $Q_{2,\pm}=P_{\theta_\pm}$, so the corresponding calibration
laws are clean.
Consequently, $\widehat\theta_1$ has the same law under the two
hypotheses. The pointwise separation
\eqref{eq:lower-two-point-radius-separation}, now with $d_2$ in place
of $d_1$, gives
\begin{equation}
\max_{\vartheta\in\{\theta_-,\theta_+\}}
\Pb_{\vartheta,Q_{1,\vartheta},P_\vartheta}\left(
\left|q_g(\widehat\theta_1-\vartheta)-q_\star\right|
\ge\frac{\kappa d_2}{2}
\right)
\ge
\frac12.
\label{eq:lower-contaminated-training-probability}
\end{equation}
Since $\max\{d_1,d_2\}\ge c a_{n_1}$, use the clean construction alone
when $\delta_1=0$ and the harder of the two constructions otherwise.
Equations~\eqref{eq:lower-clean-training-probability}
and~\eqref{eq:lower-contaminated-training-probability} then imply, as
applicable, that there exist $\theta,Q_1$ and a constant
$c_{\mathrm{tr}}>0$ such that
\begin{equation}
\Pb_{\theta,Q_1,P_\theta}\left(
\left|q_g(H)-q_\star\right|
\ge c a_{n_1}
\right)
\ge
c_{\mathrm{tr}}
\label{eq:lower-combined-training-probability}
\end{equation}

We next pass from the population radius to the empirical conformal
radius. In the preceding construction $Q_2=P_\theta$, so the observed
calibration sample is clean. Under the regular conditional law given
$H=h$, let $\widehat q_h$
denote the $k_{n_2}$-th order statistic of $n_2$ independent scores with
CDF $F_h$, and put $B_{n_2,h}:=F_h(\widehat q_h)$. Continuity of $F_h$
gives
\[
B_{n_2,h}\sim
\operatorname{Beta}(k_{n_2},n_2+1-k_{n_2}),
\]
uniformly in $|h|\le h_0$. In particular,
\[
\E B_{n_2,h}=\frac{k_{n_2}}{n_2+1}=1-\alpha+O(n_2^{-1}),
\qquad
\operatorname{Var}(B_{n_2,h})\le\frac{1}{4(n_2+2)}.
\]
Chebyshev's inequality therefore supplies a constant $C_0<\infty$ such
that, for all sufficiently large $n_2$,
\[
\sup_{|h|\le h_0}
\Pb\left(
|B_{n_2,h}-(1-\alpha)|>\frac{C_0}{\sqrt{n_2}}
\ \middle|\ H=h
\right)
\le\frac{c_{\mathrm{tr}}}{2}.
\]
After increasing $n_2$ further so that
$C_0/(r_\star\sqrt{n_2})\le2s_0$, monotonicity of $F_h$ and the local lower
density bound in \eqref{eq:lower-two-sided-score-density} imply
\begin{equation}
\sup_{|h|\le h_0}
\Pb\left(
\left|\widehat q_h-q_g(h)\right|
>
\frac{C_0}{r_\star\sqrt{n_2}}
\ \middle|\ H=h
\right)
\le
\frac{c_{\mathrm{tr}}}{2}
\label{eq:lower-empirical-quantile-concentration}
\end{equation}
for all sufficiently large $n_2$. Indeed, if $\widehat q_h$ lies farther
than the displayed threshold from $q_g(h)$, the CDF difference is at least
$C_0/\sqrt{n_2}$, including when $\widehat q_h$ lies outside the local
band. Increase $n_0$ if necessary so
that $u_{n_2}=1/(r_\star\sqrt{n_2})$ throughout the remainder of the proof.
Choose a sufficiently large constant $C_{\mathrm{dom}}>0$. If
$a_{n_1}\ge C_{\mathrm{dom}}u_{n_2}$,
then \eqref{eq:lower-combined-training-probability} and
\eqref{eq:lower-empirical-quantile-concentration}, followed by the
reverse triangle inequality, give, for some fixed
$c_{\mathrm{train}}>0$,
\begin{equation}
\sup_{\theta,Q_1,Q_2}
\Pb_{\theta,Q_1,Q_2}
\left(
\left|\widehat q_H-q_\star\right|
\ge c a_{n_1}
\right)
\ge
c_{\mathrm{train}}.
\label{eq:lower-large-training-probability}
\end{equation}

\paragraph{Step 3: clean calibration-sample difficulty.}
Fix $\theta=0$ and choose $Q_1=P_{\theta=0}$ and $Q_2=P_{\theta=0}$. Conditional on
$H=h$, the calibration scores are i.i.d.\ with continuous CDF $F_h$.
The probability integral transform for the $k_{n_2}$-th order statistic
gives
\begin{equation}
F_h(\widehat q_h)
\sim
\operatorname{Beta}
\left(k_{n_2},n_2+1-k_{n_2}\right).
\label{eq:lower-beta-order-statistic}
\end{equation}
Under this conditional law, write
$B_{n_2}:=F_h(\widehat q_h)$ for the beta random variable in
\eqref{eq:lower-beta-order-statistic}.
We now derive a uniform probability lower bound, rather than an expected
loss. First suppose that $|q_\star-q_g(h)|\le s_0$ and put
$u_h:=F_h(q_\star)$. By
Lemma~\ref{lem:beta-rank-anti-concentration}, with conditional
probability at least $1/2$,
\[
|B_{n_2}-u_h|\ge\frac{c_B}{\sqrt{n_2}}.
\]
If $\widehat q_h$ lies outside
$[q_g(h)-2s_0,q_g(h)+2s_0]$, then
$|\widehat q_h-q_\star|\ge s_0$. Otherwise, absolute continuity and the
upper density bound in \eqref{eq:lower-two-sided-score-density} give
\[
|B_{n_2}-u_h|
=
|F_h(\widehat q_h)-F_h(q_\star)|
\le
C_f r_\star|\widehat q_h-q_\star|.
\]
Thus, in this first case,
\begin{equation*}
\Pb\left(
|\widehat q_h-q_\star|\ge c u_{n_2}
\ \middle|\ H=h
\right)
\ge\frac12.
\end{equation*}

If $|q_\star-q_g(h)|>s_0$, use
\eqref{eq:beta-central-concentration} with
$\epsilon=r_\star s_0/4$. The lower density bound then implies that, with
conditional probability at least $3/4$ for all sufficiently large
$n_2$, $|\widehat q_h-q_g(h)|\le s_0/2$. Hence
$|\widehat q_h-q_\star|\ge s_0/2$ on that event. Combining the two cases,
uniformly over $|h|\le h_0$, and then integrating over $H$, gives, for
some fixed $c_{\mathrm{cal},0}>0$,
\begin{equation}
\sup_{\theta,Q_1,Q_2}
\Pb_{\theta,Q_1,Q_2}
\left(
|\widehat q_H-q_\star|
\ge c u_{n_2}
\right)
\ge
c_{\mathrm{cal},0}.
\label{eq:lower-clean-calibration}
\end{equation}

\paragraph{Step 4: calibration-contamination difficulty.}
Choose $c_3>0$ small enough that $c_3r_\star s_0\le\alpha/2$ and that
\[
\frac{\bar\delta}{1-\bar\delta}\le \frac{r_\star s_0}{2}
\qquad\text{whenever}\qquad
0\le\bar\delta\le c_3r_\star s_0.
\]
Set
\[
\bar\delta_2
:=
\delta_2\wedge(c_3r_\star s_0).
\]
When $\delta_2>0$, define
contamination distributions so that the effective non-clean mass is
$\bar\delta_2$: let
\[
\widetilde Q_{2,-}:=\operatorname{Dirac}(0,z_-),
\qquad
\widetilde Q_{2,+}:=\operatorname{Dirac}(0,z_+),
\]
and set
\[
Q_{2,\pm}
:=
\frac{\bar\delta_2}{\delta_2}\widetilde Q_{2,\pm}
+
\left(1-\frac{\bar\delta_2}{\delta_2}\right)P_{\theta=0}.
\]
Then
\[
(1-\delta_2)P_{\theta=0}+\delta_2Q_{2,\pm}
=
(1-\bar\delta_2)P_{\theta=0}
+\bar\delta_2\widetilde Q_{2,\pm}.
\]
The case $\delta_2=0$ is already covered by
\eqref{eq:lower-clean-calibration}; hence assume $\delta_2>0$ for the
remainder of this step.

Conditional on $H=h$, set
\begin{align*}
q_{\mathrm{low}}(h)
&=
F_h^{-1}\left(
\frac{1-\alpha-\bar\delta_2}{1-\bar\delta_2}
\right),
\\
q_{\mathrm{high}}(h)
&=
F_h^{-1}\left(
\frac{1-\alpha}{1-\bar\delta_2}
\right).
\end{align*}
Both probability levels differ from $1-\alpha$ by at most
$\bar\delta_2/(1-\bar\delta_2)\le r_\star s_0/2$. The lower density bound in
\eqref{eq:lower-two-sided-score-density} therefore gives, uniformly over
$|h|\le h_0$,
\[
\max\bigl\{|q_{\mathrm{low}}(h)-q_g(h)|,
|q_{\mathrm{high}}(h)-q_g(h)|\bigr\}\le s_0/2.
\]
Condition~\eqref{eq:lower-extreme-score-points} now places the low atom
strictly below both quantiles and the high atom strictly above both; hence
$q_{\mathrm{low}}(h)$ and $q_{\mathrm{high}}(h)$ are precisely the two mixed
population quantiles.
Their probability levels differ by
$\bar\delta_2/(1-\bar\delta_2)$. The density upper bound therefore
implies, uniformly in $|h|\le h_0$,
\begin{equation*}
q_{\mathrm{high}}(h)-q_{\mathrm{low}}(h)
\ge
\frac{\bar\delta_2}
{(1-\bar\delta_2)C_f r_\star}
\ge
c\left(\frac{\delta_2}{r_\star}\wedge s_0\right).
\end{equation*}

For $|h|\le h_0$, define the two mixed-score CDFs by
\[
G_{\pm,h}(t)
:=
(1-\bar\delta_2)F_h(t)
+\bar\delta_2\bbone\{|z_\pm-h|\le t\}.
\]
The distribution of $H$ is the same under $Q_{2,-}$ and $Q_{2,+}$,
because $\widehat\theta_1$ is a function only of $\cD_1$. Let
$V_1,\ldots,V_{n_2}$ be i.i.d.\ $\operatorname{Unif}(0,1)$ variables,
independent of $H$, and let $V_{(k_{n_2})}$ be their $k_{n_2}$-th order
statistic. Set
\[
B_{n_2}^{\mathrm{cpl}}:=V_{(k_{n_2})}
\sim\operatorname{Beta}(k_{n_2},n_2+1-k_{n_2}).
\]
Conditional on $H=h$, couple the two score samples as
$G_{\pm,h}^{-1}(V_1),\ldots,G_{\pm,h}^{-1}(V_{n_2})$. Their conformal
order statistics are
\[
\widehat q_{h,\pm}
:=G_{\pm,h}^{-1}(V_{(k_{n_2})})
=G_{\pm,h}^{-1}(B_{n_2}^{\mathrm{cpl}}).
\]
Finally, define
$\widehat q_{H,\pm}:=G_{\pm,H}^{-1}(B_{n_2}^{\mathrm{cpl}})$.
For all sufficiently large $n_2$, choose $c_4>0$ small enough that
$c_4r_\star s_0\le\alpha/4$ and, whenever
$|b-(1-\alpha)|\le c_4r_\star s_0$, both transformed levels
\[
\frac{b-\bar\delta_2}{1-\bar\delta_2},
\qquad
\frac{b}{1-\bar\delta_2}
\]
differ from $1-\alpha$ by at most $r_\star s_0$. The
central-concentration part of
Lemma~\ref{lem:beta-rank-anti-concentration} shows that the event
\[
\mathcal E_B
:=
\left\{
\left|B_{n_2}^{\mathrm{cpl}}-(1-\alpha)\right|
\le c_4r_\star s_0
\right\}
\]
has probability at least $3/4$. On this event, the lower density bound
places both inverses in
$[q_g(h)-s_0,q_g(h)+s_0]$. By
\eqref{eq:lower-extreme-score-points}, the low atom lies strictly below this
interval and the high atom lies strictly above it. Thus the generalized
inverses satisfy
\begin{align*}
G_{-,h}^{-1}(B_{n_2}^{\mathrm{cpl}})
&=
F_h^{-1}\left(
\frac{B_{n_2}^{\mathrm{cpl}}-\bar\delta_2}{1-\bar\delta_2}
\right),\\
G_{+,h}^{-1}(B_{n_2}^{\mathrm{cpl}})
&=
F_h^{-1}\left(
\frac{B_{n_2}^{\mathrm{cpl}}}{1-\bar\delta_2}
\right).
\end{align*}
The two probability levels differ by
$\bar\delta_2/(1-\bar\delta_2)$. The upper density bound therefore gives,
on $\mathcal E_B$,
\[
\widehat q_{h,+}-\widehat q_{h,-}
\ge
c v_{n_2}.
\]
For every coupled realization in $\mathcal E_B$, at least one of the two
deviations from $q_\star$ is at least $cv_{n_2}/2$. Taking probabilities under
the coupling, whose marginals are the two contaminated experiments, gives
\[
\Pb_{0,P_{\theta=0},Q_{2,+}}\left(
|\widehat q_{H,+}-q_\star|\ge cv_{n_2}/2
\right)
+
\Pb_{0,P_{\theta=0},Q_{2,-}}\left(
|\widehat q_{H,-}-q_\star|\ge cv_{n_2}/2
\right)
\ge
\Pb(\mathcal E_B)
\ge\frac34.
\]
Consequently,
\begin{equation}
\sup_{\theta,Q_1,Q_2}
\Pb_{\theta,Q_1,Q_2}
\left(
|\widehat q_H-q_\star|\ge c v_{n_2}
\right)
\ge
\frac38.
\label{eq:lower-calibration-contamination}
\end{equation}
Combining \eqref{eq:lower-calibration-contamination} with the clean
calibration bound \eqref{eq:lower-clean-calibration} and
\eqref{eq:lower-calibration-component-comparison} gives, for some fixed
$c_{\mathrm{cal}}>0$,
\begin{equation}
\sup_{\theta,Q_1,Q_2}
\Pb_{\theta,Q_1,Q_2}
\left(
|\widehat q_H-q_\star|\ge c b_{n_2}
\right)
\ge
c_{\mathrm{cal}}.
\label{eq:lower-combined-calibration}
\end{equation}

\paragraph{Step 5: combining training and calibration.}
If $a_{n_1}\ge C_{\mathrm{dom}}u_{n_2}$, then
\eqref{eq:lower-large-training-probability} and
\eqref{eq:lower-combined-calibration} show that the larger of
$a_{n_1}$ and $b_{n_2}$ is incurred with probability bounded away from
zero under one of their least-favorable experiments. Since their maximum
is at least half their sum, the desired radius-deviation statement
follows. If $a_{n_1}<C_{\mathrm{dom}}u_{n_2}$, then
$u_{n_2}\le b_{n_2}$ by
\eqref{eq:lower-calibration-component-comparison}, so
$a_{n_1}\le C_{\mathrm{dom}}b_{n_2}$, and
\eqref{eq:lower-combined-calibration} alone controls the sum. Therefore,
for constants $c,c_0>0$ independent of the estimator,
\begin{equation*}
\sup_{\theta,Q_1,Q_2}
\Pb_{\theta,Q_1,Q_2}
\left(
|\widehat q_H-q_\star|
\ge c(a_{n_1}+b_{n_2})
\right)
\ge c_0.
\end{equation*}
Finally,
\[
\left|2\widehat q_H-2q_\star\right|
=
2\left|\widehat q_H-q_\star\right|.
\]
The preceding argument holds for every
$\widehat\theta\in\mathcal E_{n_1}(\Theta)$. Multiplying by
two and taking the infimum over this class proves
\eqref{eq:joint-minimax-lower-bound}.
\end{proof}

\subsection{Fixed-Confidence Upper Bound and Local Rate Matching}
\label{ssec:fixed-confidence-matching}
Sampling stability in Assumption~\ref{ass:sampling-stability} uses a failure probability tending to zero. A fixed-confidence comparison with the lower bound instead uses a training radius indexed by its failure probability. For a clean law $P$ and $\gamma\in(0,1)$, let $\eta_n(P;\gamma)$ be a deterministic radius satisfying
\[
\Pb_{\cD_n\sim P^n}\!\left\{\|\mu(\cdot;\cD_n)-\mu(\cdot;P)\|_\infty>\eta_n(P;\gamma)\right\}\le\gamma.
\]
This notation does not assert that the radius in Assumption~\ref{ass:sampling-stability} is $O(n^{-1/2})$. The distinction matters because a fixed confidence level allows a constant multiple of $n^{-1/2}$, whereas a failure probability tending to zero may require a larger radius.

\begin{proposition}[Fixed-confidence interval-length bound]
\label{prop:fixed-confidence-length-bound}
Suppose Assumption~\ref{ass:score-density} holds and $\mu$ is $L$-uniformly replacement stable. Fix $\beta\in(0,1)$ and contamination levels $\delta_1\in[0,1)$, $\delta_2\in[0,\alpha)$, with $\alpha\in(0,1/2)$. If the clean training radius $\eta_{n_1}(P_0;\beta/3)$ satisfies the preceding probability bound, then, for all sufficiently large $n_2$, with probability at least $1-\beta$, uniformly over $Q_1,Q_2$,
\begin{align}
2|\widehat q(\cD_1;\cD_2)-q(P_0;P_0)|
\le{}&2\eta_{n_1}(P_0;\beta/3)
+2L\left(\delta_1+\sqrt{\frac{\log(3/\beta)}{2n_1}}\right)\nonumber\\
&+\frac{2}{(1-\delta_2)r_0}
\left(\sqrt{\frac{\log(6/\beta)}{2n_2}}+\frac{2}{n_2}+\delta_2(1-\alpha)\right).
\label{eq:fixed-confidence-length-bound}
\end{align}
\end{proposition}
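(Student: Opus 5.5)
We reuse the four-term quantile-error decomposition \eqref{eq:quantile-error-decomposition} from the proof of Theorem~\ref{thm:delta}. The only change is that each random term is controlled at a fixed failure probability of $\beta/3$, and the three failure events are combined by a union bound. The calibration-contamination term $|q(P_0;P_2)-q(P_0;P_0)|$ is deterministic. Lemma~\ref{lem:calibration-contamination-quantile} bounds it by $\delta_2(1-\alpha)/((1-\delta_2)r_0)$ uniformly over $Q_2$, using only Assumption~\ref{ass:score-density} and $\delta_2<\alpha<1/2$.

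\textbf{Training terms.} For the contamination term we use the Bernoulli coupling of Proposition~\ref{prop:empirical-perturbation-replacement-stability}. There, $\cD_0$ and $\cD_1$ differ in at most $K\sim\operatorname{Binomial}(n_1,\delta_1)$ observations, so $\|\mu(\cdot;\cD_1)-\mu(\cdot;\cD_0)\|_\infty\le LK/n_1$ almost surely. Instead of the $\sqrt{\log n_1/n_1}$ envelope we apply Hoeffding at level $\beta/3$:
\[
\Pb\bigl\{K/n_1>\delta_1+\sqrt{\log(3/\beta)/(2n_1)}\bigr\}\le\beta/3 .
\]
This bound does not depend on $Q_1$, since $K$ depends only on the Bernoulli indicators. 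On the complementary event, the reverse triangle inequality and Lemma~\ref{lem:order-statistic-perturbation} at rank $k_{n_2}$ give $|\widehat q(\cD_1;\cD_2)-\widehat q(\cD_0;\cD_2)|\le L(\delta_1+\sqrt{\log(3/\beta)/(2n_1)})$. This holds simultaneously for every calibration sample. For the clean sampling term, the defining property of $\eta_{n_1}(P_0;\beta/3)$ gives $\|\mu(\cdot;\cD_0)-\mu(\cdot;P_0)\|_\infty\le\eta_{n_1}(P_0;\beta/3)$ with probability at least $1-\beta/3$. The same order-statistic argument, as in Lemma~\ref{lem:clean-training-sampling-quantile-error}, then yields $|\widehat q(\cD_0;\cD_2)-\widehat q(P_0;\cD_2)|\le\eta_{n_1}(P_0;\beta/3)$.

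\textbf{Calibration sampling term.} We invoke Lemma~\ref{lem:empirical-quantile-error-under-p2} with $\gamma=\beta/3$. This yields $\sqrt{\log(2/\gamma)/(2n_2)}=\sqrt{\log(6/\beta)/(2n_2)}$, and the lemma applies for all sufficiently large $n_2$, with a threshold depending on $\beta$, $r_0$, $\delta_2$ and the margin $d_{\cI}$ but not on $Q_2$. We must confirm that this threshold is uniform over $Q_2$. It is, because the inverse-image containment uses only the sandwich \eqref{eq:calibration-quantile-sandwich} and the growth bound \eqref{eq:specific-calibration-cdf-growth}, both of which are $Q_2$-free. The DKW event depends only on $\cD_2$, so it is independent of the training events, although a union bound does not need independence.

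\textbf{Combining and the main obstacle.} A union bound gives total failure probability at most $\beta$. On the intersection of the three good events, summing the four bounds and multiplying by two gives \eqref{eq:fixed-confidence-length-bound}. The only delicate point is uniformity over $Q_1$, and the coupling handles it. The event $\{K/n_1\le\cdot\}$ involves only the replacement indicators, and replacement stability is a deterministic worst-case bound over all replaced observations, so the training events have probabilities free of $Q_1$. The main care needed is bookkeeping. The training-contamination bound must hold for every $\cD_2$, so we should not condition on $\cD_2$ when applying it. We should also check that "sufficiently large $n_2$" includes $k_{n_2}\le n_2$, so that the order statistic is finite.
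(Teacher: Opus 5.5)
Your proposal is correct and follows the paper's proof. The steps are the same: Hoeffding on $K$ at level $\beta/3$, the clean radius $\eta_{n_1}(P_0;\beta/3)$, Lemma~\ref{lem:empirical-quantile-error-under-p2} with $\gamma=\beta/3$, the deterministic bias bound of Lemma~\ref{lem:calibration-contamination-quantile}, a union bound, and multiplication by two. The only difference is cosmetic: the paper merges the two training terms into one sup-norm bound on $\mu(\cdot;\cD_1)-\mu(\cdot;P_0)$ and transfers it through the order-statistic lemma once, whereas you transfer each training term separately.
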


\begin{proof}
Under the Huber coupling, $K\sim\operatorname{Binomial}(n_1,\delta_1)$. Hoeffding's inequality gives
\[
\Pb\!\left\{\frac K{n_1}>\delta_1+\sqrt{\frac{\log(3/\beta)}{2n_1}}\right\}\le\beta/3.
\]
On this event's complement and the clean training event, replacement stability bounds the predictor error by $\eta_{n_1}(P_0;\beta/3)+L(\delta_1+\sqrt{\log(3/\beta)/(2n_1)})$. The absolute-score and order-statistic inequalities transfer this bound to the empirical quantile on the same calibration sample. Lemma~\ref{lem:empirical-quantile-error-under-p2} with $\gamma=\beta/3$ bounds calibration sampling error, and Lemma~\ref{lem:calibration-contamination-quantile} bounds population calibration bias. A union bound for the three failure events and multiplication by two prove \eqref{eq:fixed-confidence-length-bound}.
\end{proof}

\paragraph{An attaining learner on the location submodel.}
Consider the fixed location model in Assumption~\ref{ass:least-favorable-location}, with known noise law $g$ and a fixed scale $\sigma$ comparable to $L_\star$, as in its scale-family example. We first apply a bounded transformation so that even an arbitrarily large replacement has a bounded effect on the sample average. We then invert its population mean to recover the location parameter. Define
\begin{gather*}
\psi(y):=\tanh(y/\sigma),\qquad m(\theta):=\mathbb E_g\psi(\theta+\varepsilon),\\
c_\psi:=\inf_{\theta\in\Theta}\mathbb E_g\operatorname{sech}^2((\theta+\varepsilon)/\sigma),
\end{gather*}
where $\operatorname{sech}(z):=1/\cosh(z)$. Positivity, continuity, and compactness of $\Theta$ give $c_\psi>0$ and $m'(\theta)\ge c_\psi/\sigma$. Let $\operatorname{Proj}_{m(\Theta)}$ denote Euclidean projection onto the interval $m(\Theta)$, and set
\[
\widehat\theta(\cD_n)
:=m^{-1}\!\left(\operatorname{Proj}_{m(\Theta)}
\left[\frac1n\sum_{i=1}^n\psi(Y_i)\right]\right),
\qquad \mu(\bbx;\cD_n):=\widehat\theta(\cD_n).
\]
This estimator takes values in $\Theta$. Since projection is nonexpansive and $m^{-1}$ is $\sigma/c_\psi$-Lipschitz, replacing $k$ arbitrary observations gives
\[
|\widehat\theta(\cD_n)-\widehat\theta(\cD_n')|
\le\frac{2\sigma}{c_\psi}\frac{k}{n}.
\]
For clean data, $\mathbb E_\theta\psi(Y)=m(\theta)$ and $|\psi|\le1$. Hoeffding's inequality therefore yields, uniformly over $\theta\in\Theta$,
\[
\Pb_\theta\!\left\{|\widehat\theta-\theta|>
\frac{\sigma}{c_\psi}\sqrt{\frac{2\log(2/\gamma)}n}\right\}\le\gamma.
\]
Thus the clean population target is $\theta$, $L=2\sigma/c_\psi\asymp L_\star$, and $\eta_n(P_\theta;\gamma)=O_\gamma(L_\star n^{-1/2})$ uniformly over $\Theta$.

Restrict $\delta_2$ to a sufficiently small fixed neighborhood of zero so that the quantile range lies inside the common density interval in Assumption~\ref{ass:least-favorable-location}. The clean oracle score law is the same for every $\theta$, so $r_0$ can be chosen comparable to $r_\star$ with a uniform interval margin. Proposition~\ref{prop:fixed-confidence-length-bound} then gives the order $L_\star(n_1^{-1/2}+\delta_1)+(n_2^{-1/2}+\delta_2)/r_\star$. When the local truncations are inactive, Theorem~\ref{thm:joint-minimax-lower-bound} supplies the matching lower order for any fixed failure probability $\beta<c_0$. This establishes attainability within the stated estimator class on this fixed submodel. It does not establish the same training rate for every replacement-stable learner or for the experimental neural networks.

\section{Experimental details}
\label{app:experimental_details}

\subsection{Common protocol}
\label{app:common_protocol}

We study length deviations in prediction intervals under training and calibration contamination. The training and calibration  contamination probabilities are $\delta_1$ and $\delta_2$, respectively. Test observations remain uncontaminated. The target miscoverage level is $\alpha=0.1$. We use this level for calibration in the following text unless comparing the corrections in Figures~\ref{fig:all_correction}, \ref{fig:abalone_cadata_correction}, and~\ref{fig:cpusmall_msd_correction}.

\paragraph{Scores and prediction sets.}
We compare Absolute, Adaptive, and CQR scores:
\begin{align*}
 s_{\mathrm{abs}}(x,y)&=|y-\widehat\mu(x)|,\\
 s_{\mathrm{adp}}(x,y)&=|y-\widehat\mu(x)|/\widehat\sigma(x),\\
 s_{\mathrm{cqr}}(x,y)&=\max\{\widehat q^{\mathrm{lo}}(x)-y,y-\widehat q^{\mathrm{hi}}(x)\}.
\end{align*}
The location model estimates the conditional median. Adaptive score uses the same location model and a separate positive scale estimate. CQR estimates the $0.05$ and $0.95$ conditional quantiles. We sort the two fitted CQR endpoints at each input before computing scores. These base quantile levels remain fixed in the correction experiments.

For $n_{\mathrm{cal}}$ calibration scores, let
\[
 k=\left\lceil(n_{\mathrm{cal}}+1)(1-\alpha)\right\rceil,
 \qquad \widehat q=R_{(k)},\qquad R_{(n_{\mathrm{cal}}+1)}=+\infty.
\]
The three prediction sets are 
\begin{align*}
\text{Absolute:}\quad & [\widehat\mu(x) - \widehat q,\widehat\mu(x) + \widehat q] \\
\text{Adaptive:}\quad &[\widehat\mu(x) - \widehat q\cdot\widehat\sigma(x),\widehat\mu(x) + \widehat q\cdot\widehat\sigma(x)]\\
\text{CQR:}\quad & [\widehat q^{\mathrm{lo}}(x)-\widehat q,
\widehat q^{\mathrm{hi}}(x)+\widehat q]
\end{align*}
CQR scores and threshold $\hat q$ may be negative. If the final lower endpoint exceeds the upper endpoint, the prediction set is empty. Both its length and coverage contribution are zero.

\paragraph{Replacement sampling.}
Except for the diagnostic in Figure~\ref{fig:stability_constant} and the fixed-fraction experiments in Appendix~\ref{app:bernoulli-fixed-fraction}, each observation is replaced with the specified contamination probability $\delta_j$ at stage $j$. Within a repetition, we draw one independent uniform variable per observation and replace the observation if this draw is at most $\delta_j$ at stage $j$. Thus, the replacement sets are nested as $\delta_j$ increases, while ensuring that each set has the required Bernoulli marginal distribution. Training and calibration use independent random streams. For structured contamination, we use the same replacement masks and candidate pairs $(X_Q,Y_Q)$ across scores. Within each family, different severity levels also share the same masks and base noise draws.

\begin{algorithm}[H]
\caption{Split conformal prediction with contamination}
\label{alg:contaminated-split-cp}
\begin{algorithmic}[1]
\Require Training, calibration, and clean test sets; contamination rules $Q_1,Q_2$; probabilities $\delta_1,\delta_2$; level $\alpha$.
\State Replace training observations according to $Q_1$ and $\delta_1$. Fit all models needed for the chosen score.
\State If required by the design, construct $Q_2$ from the fitted models; if not, skip this step.
\State Replace calibration observations according to $Q_2$ and $\delta_2$.
\State Compute calibration scores. Set $k=\lceil(n_{\mathrm{cal}}+1)(1-\alpha)\rceil$ and $\widehat q=R_{(k)}$, with $R_{(n_{\mathrm{cal}}+1)}=+\infty$.
\State Form $C(x)=\{y:s(x,y)\leq\widehat q\}$ and evaluate it on the clean test set.
\end{algorithmic}
\end{algorithm}

\paragraph{Metrics and repetitions.}
Let $w_r^{\delta}(x)$ be the interval length in repetition $r$ under the specified contamination setting. For the score under consideration, define the reference length by
\[
w_r^{\mathrm{ref}}(x)=
\begin{cases}
w_{\mathrm{pop}}(x), & \text{for synthetic data},\\
w_r^0(x), & \text{for real data},
\end{cases}
\]
where $w_{\mathrm{pop}}(x)$ is the clean population-oracle length and $w_r^0(x)$ is the length of clean split conformal prediction fitted and calibrated within the same real-data repetition at $\alpha=0.1$. We record the signed mean difference
\[
 D_r=\frac{1}{n_{\mathrm{test}}}\sum_{i=1}^{n_{\mathrm{test}}}
       \bigl[w_r^{\delta}(X_i)-w_r^{\mathrm{ref}}(X_i)\bigr].
\]
Positive values indicate interval inflation; negative values indicate contraction. The same dataset-specific reference is used for the correction experiments. We also record the mean pointwise absolute difference,
\[
\frac{1}{n_{\mathrm{test}}}\sum_i|w_r^{\delta}(X_i)-w_r^{\mathrm{ref}}(X_i)|.
\]
We use the mean pointwise absolute difference for the contamination plots and the signed difference $D_r$ for the calibration-level correction plots (Figures~\ref{fig:all_correction},~\ref{fig:abalone_cadata_correction}, and~\ref{fig:cpusmall_msd_correction}).
Coverage is defined as the fraction of clean test responses contained in their prediction sets.

For all curves and surfaces, we report means over 20 repetitions. Where shaded bands appear, they indicate the sample standard deviation across repetitions, using denominator $19$. The score histograms are based on repetition $r=0$, which was selected in advance.

\subsection{Synthetic setup}
\label{app:exp_setup}

The five coordinates of $X$ are independent and uniformly distributed on $[-1,1]$. We generate the response from
\[
 Y=f(X)+\sigma\varepsilon,\qquad
 f(x)=\sin(\pi x_1)+x_2^2-\tfrac12x_3,
 \qquad \varepsilon\sim N(0,1).
\]
In each repetition, the training, calibration, and test samples are independent and have sizes $5{,}000$, $5{,}000$, and $10{,}000$, respectively. We set $\sigma=1$, except in Figure~\ref{fig:all_correction}, where $\sigma=3$.
Each regression model uses 36 fixed features: an intercept; the linear, quadratic, and cubic terms of each coordinate; $\sin(\pi x_j)$ and $\cos(\pi x_j)$ for each coordinate; and the ten pairwise interactions $x_i x_j$, $i<j$. For a quantile level $\tau$, we minimize
\[
 \frac{1}{n_{\mathrm{train}}}\sum_i
 \rho_\tau\bigl(Y_i-\theta^\top\phi(X_i)\bigr)
 +\frac{0.1}{2}\|\theta\|_2^2,
 \qquad \rho_\tau(u)=u\bigl(\tau-\mathbf1\{u<0\}\bigr).
\]
After fitting, all predictions are clipped to $[-20,20]$. The optimization uses accelerated projected gradient updates on the dual, with working-set refinement, a relative primal--dual gap tolerance of $10^{-8}$, and at most $200{,}000$ iterations. The gap is checked every ten iterations.

The location model is fitted at $\tau=0.5$. The Adaptive scale model also uses $\tau=0.5$, using absolute training residuals as responses. Its predictions are clipped to $[0.25,20]$. When the training data are contaminated, both models are fitted on the corresponding contaminated training sample. CQR fits separate models at $\tau=0.05$ and $0.95$.

For the synthetic experiments, the reference is the clean population oracle obtained using the same regularized learner and population score calibration; Adaptive and CQR use pointwise oracle widths evaluated at the corresponding test covariates.

\subsection{Contamination designs}
\label{app:contamination_designs}

\paragraph{Joint training and calibration contamination.}
For Figures~\ref{fig:joint_length} and~\ref{fig:all_surface}, we set
\[
 Q_1=Q_2=\operatorname{Dirac}(x^\star,-80),
 \qquad x^\star=(-1,-1,-1,-1,+1).
\]
The same covariate and response values are used at both stages and for all three scores. The models are refitted for each training contamination level. Consequently, the score assigned to the fixed calibration point may vary with $\delta_1$. Figure~\ref{fig:stability_constant} uses the same training point mass, without calibration contamination.

\paragraph{Structured calibration contamination.}
The training data remain clean in these experiments. Table~\ref{tab:synthetic_contamination} defines the four contamination families. In the first three families, we set \(X_Q=X\). The Gaussian and Student-\(t\) draws are independent of the clean data. In the shift family, we retain the observed response and then add the shift.

\begin{table}[H]
\centering
\caption{Structured contamination for the synthetic experiments ($\sigma=1$).
Here $\varepsilon_Q\sim N(0,1)$ and $T_3$ has a Student-$t$ distribution
with three degrees of freedom.}
\label{tab:synthetic_contamination}
\renewcommand{\arraystretch}{1.25}
\setlength{\tabcolsep}{4pt}
\begin{tabularx}{\linewidth}{@{}%
>{\hsize=0.9\hsize\raggedright\arraybackslash}X%
>{\hsize=0.6\hsize\raggedright\arraybackslash}X%
>{\hsize=1.3\hsize\raggedright\arraybackslash}X%
>{\hsize=1.2\hsize\raggedright\arraybackslash}X@{}}
\toprule
Family & \(X_Q\) & \(Y_Q\) & Parameter values \\
\midrule
Mean shift & \(X\) & \(Y+a\) & \(a\in\{3,5,10,15\}\) \\
Variance inflation & \(X\) & \(f(X)+s\sigma\varepsilon_Q\)
 & \(s\in\{3,5,10,15\}\) \\
Heavy tail & \(X\) & \(f(X)+c\sigma T_3/\sqrt{3}\)
 & \(c\in\{1,3,5,10\}\) \\
Covariate shift + heteroscedastic noise
 & \(X+b\mathbf{1}_5\)
 & {\(f(X_Q)+\sigma(0.5\)\(+1.5\|X_Q\|_\infty)\varepsilon_Q\)}
 & \(b\in\{1.5,3,5,8\}\) \\
\bottomrule
\end{tabularx}
\end{table}

The multiplier \(s\) multiplies the noise standard deviation; the corresponding variance multiplier is therefore \(s^2\). The random variable \(T_3/\sqrt{3}\) has unit variance. The final family changes both the covariate distribution and the noise scale. For example, setting \(b=8\) shifts the support of the synthetic covariates to \([7,9]^5\). Each family is evaluated over its own severity grid.

\paragraph{Finite high-score references.}
Figures~\ref{fig:various_contami}, \ref{fig:all_trend_nonlinear}, and~\ref{fig:real-width-trend} use calibration point masses with score \(M=200\) as empirical references. The training data remain clean. For the synthetic data, the covariate is \(x^\star\), and we choose the response for each score as
\[
 y_Q=\begin{cases}
 \widehat{\mu}_0(x^\star)-M, & \text{Absolute},\\
 \widehat{\mu}_0(x^\star)-M\widehat{\sigma}_0(x^\star), & \text{Adaptive},\\
 \widehat{q}^{\mathrm{lo}}_0(x^\star)-M, & \text{CQR}.
 \end{cases}
\]
The subscript \(0\) denotes a fit on clean training data. For \(\delta_2<\alpha\), the Dirac reference curves show the mean length deviation under these point masses, plus one sample standard deviation. These are descriptive summaries for a specified finite contamination distribution, not theoretical upper bounds. Randomness in the replacement counts can cause a sharp increase before the contamination probability reaches \(0.1\).

\paragraph{Low-score contamination and calibration-level correction.}
For Absolute and Adaptive, we retain the selected calibration covariates and replace their responses with \(\widehat{\mu}_0(X)\). Their scores are then zero. For CQR, let
\[
 B_0(x)=\widehat{q}^{\mathrm{hi}}_0(x)-\widehat{q}^{\mathrm{lo}}_0(x).
\]
We numerically search for a point \(x^\star_{\mathrm{CQR}}\) with large \(B_0(x)\) and set
\[
 X_Q=x^\star_{\mathrm{CQR}},\qquad
 Y_Q=\frac{\widehat{q}^{\mathrm{lo}}_0(x^\star_{\mathrm{CQR}})
             +\widehat{q}^{\mathrm{hi}}_0(x^\star_{\mathrm{CQR}})}{2}.
\]
The resulting score is \(-B_0(x^\star_{\mathrm{CQR}})/2\). For synthetic data, the search domain is \([-1,1]^5\). We evaluate a scrambled Sobol sequence of \(2^{16}\) points and all \(32\) corners. The eight best candidates are used to initialize Powell searches, each with at most \(300\) iterations, a position tolerance of \(10^{-8}\), and an objective tolerance of \(10^{-10}\). We select the point with the largest fitted band length among all candidate and locally searched points.

We compare the uncorrected level with two corrections:
\[
 \alpha'_{\mathrm{unc}}=\alpha,\qquad
 \alpha'_{\mathrm{tight}}=\alpha(1-\delta_2),\qquad
 \alpha'_{\mathrm{naive}}=\alpha-\delta_2.
\]
Each rule replaces \(\alpha\) only in the calibration rank. The additive rule is used only for \(\delta_2\le\alpha\). At \(\delta_2=\alpha\), it gives \(\alpha'_{\mathrm{naive}}=0\), rank \(n_{\mathrm{cal}}+1\), an infinite interval length, and coverage equal to one. For any common \(\delta_2\) for which both rules are defined, the additive rule selects a threshold that is at least as large as the one selected by the multiplicative rule.

\subsection{Grids, diagnostics, and reproducibility}
\label{app:grids_diagnostics}

For the length-trend and joint-contamination experiments, \(\delta_2\) ranges from zero to \(0.11\) in increments of \(0.005\). We also include \(0.096\), \(0.097\), \(0.098\), and \(0.099\), for a total of 27 points. The joint training grid ranges from zero to \(0.30\) in increments of \(0.005\), for a total of 61 points. The correction experiments range from zero to \(0.95\) with the same step size, for a total of 191 points. Figures~\ref{fig:all_trend_nonlinear} and~\ref{fig:real-width-trend} display structured means at
\[
 \delta_2\in\{0,0.015,0.03,0.045,0.06,0.075,0.09,0.10,0.11\}.
\]
The results for the full grid are retained.

\paragraph{Predictor and density diagnostics.}
Figure~\ref{fig:stability_constant} uses exact replacement counts along one fixed random permutation of the training sample. The first 11 settings replace zero through ten of the \(5{,}000\) training observations. The remaining contamination levels range from \(0.005\) to \(0.095\) in increments of \(0.005\). In each repetition, we compute
\[
 \eta_r(\delta_1)=\max_{1\le i\le n_{\mathrm{test}}}
 \bigl|\widehat{\mu}_{r,\delta_1}(X_i)-\widehat{\mu}_{r,0}(X_i)\bigr|
\]
and fit a line through the origin to these values. The mean fitted slope across repetitions is \(56.37\), with a standard deviation of \(5.28\). This slope summarizes the sensitivity of the predictor to this point-mass contamination along the specified replacement path, as measured on the clean test sample.

All prediction-interval thresholds remain empirical order statistics, as specified in Appendix~\ref{app:common_protocol}. For the density diagnostic in Figure~\ref{fig:lower_bound} and the reference curve in Figure~\ref{fig:train_calibration}, we condition on each saved clean predictor $\widehat\mu_{r,0}$ and compute the distribution of $|f(X)-\widehat\mu_{r,0}(X)+\varepsilon|$ under the known synthetic law. Here $X\sim\operatorname{Unif}([-1,1]^5)$ and $\varepsilon\sim N(0,1)$ are independent. We integrate over the Gaussian noise analytically and over the covariates numerically using $2^{16}$ scrambled Sobol points (seed 20260915). This yields the score CDF and density without kernel smoothing. For \(\delta_2<\alpha\), the two endpoint probabilities of the interval are
\[
 p_-(\delta_2)=\frac{1-\alpha-\delta_2}{1-\delta_2},\qquad p_+(\delta_2)=\frac{1-\alpha}{1-\delta_2}.
\]
We numerically invert this CDF at the two endpoint probabilities to locate the endpoints of the diagnostic interval. For each evaluated model, we verify that the density is decreasing throughout this interval, so its minimum is attained at the upper endpoint, and we compute that minimum. These diagnostic quantiles are not used to construct prediction intervals. Let \(\widehat{r}_{0,r}(\delta_2)\) denote the computed density minimum. The computed minima change by less than \(1.72\times10^{-5}\) relative to the primary computation when using \(2^{14}\) points or an independent scramble.
Figure~\ref{fig:lower_bound} reports the mean of this quantity across repetitions. The reference curve in Figure~\ref{fig:train_calibration} is computed as
\[
 \overline{B}(\delta_2)=\frac{1}{20}\sum_{r=0}^{19}\frac{2\delta_2(1-\alpha)}{(1-\delta_2)\widehat{r}_{0,r}(\delta_2)}.
\]
This diagnostic curve evaluates the population-level contamination expression with fitted-model scores and then averages over repetitions. It uses the clean score distribution of each fitted predictor instead of that of the population learner, and it omits the training and calibration sampling terms in Theorem~\ref{thm:delta}. It is distinct from the clean population-oracle interval reference used in the synthetic length metric and is not a certified finite-sample upper bound. Comparisons near \(\delta_2=\alpha\) are therefore interpreted descriptively and remain subject to finite-sample variability.
Figure~\ref{fig:train_calibration} uses logarithmic axes and shows only nonzero contamination levels. Its calibration-only branch is restricted to \(\delta_2<\alpha\), consistent with the density diagnostic.

\paragraph{Score histograms.}
Figures~\ref{fig:score_distribution}, \ref{fig:abalone_cadata_distribution}, and~\ref{fig:cpusmall_msd_distribution} use repetition 0 and \(\delta_2=0.1\). The family parameters are \(a=5\), \(s=5\), \(c=10\), and \(b=8\). Scores and source labels are taken from the same contaminated calibration samples used for the length trends. Each panel uses 40 equal-width bins. Each source component has height \(N_{\mathrm{source},\mathrm{bin}}/(n_{\mathrm{cal}}\times\text{bin width})\), so the stacked components sum to the mixture histogram. The vertical lines use the conformal order statistics for the clean and contaminated calibration samples.

Most panels cover the full range of the pooled scores from the clean, contamination-only, and mixture settings. For heavy tails, the initial range is based on pooled quantiles \(0.002\) and \(0.985\), extended to include both thresholds, and given a \(4\%\) margin. If the lower endpoint is nonnegative before adding the margin, the final lower limit is clipped at zero. Figure~\ref{fig:abalone_score} overlays the clean and contaminated Absolute-score histograms for Abalone at \(a=5\), using the same fixed repetition, 40 bins, and range \([0,8]\). Density heights are always normalized using the full sample size, including observations outside the displayed range. Across the displayed histograms, at most \(0.46\%\) of a mixture sample lies outside its window.

\paragraph{Random seeds and saved outputs.}
Repetitions are indexed by \(r=0,\ldots,19\), with a data-split seed \(42+10{,}000r\). Auxiliary streams are seeded with \(42+10{,}000r+100j+c\), where \(j\) and \(c\) are fixed stream identifiers recorded for each experiment. We save the split indices, fitted models, replacement masks, contaminated observations, scores, calibration ranks, and per-repetition metrics. The implementation uses Python 3.12.14, NumPy 2.5.3, SciPy 1.16.2, scikit-learn 1.7.2, and PyTorch 2.8.0 (CPU). Numerical computations are run using one process and one thread per library.

\subsection{Further synthetic results}
\label{app:further_synthetic}

Figure~\ref{fig:all_surface} extends the joint contamination experiment to all three scores.
Across the evaluated grid, the magnitude of the mean length deviation depends on the score and the two contamination levels.
The fixed point mass produces finite widths across the evaluated grid, including for \(\delta_2\ge\alpha\).
The theoretical breakdown result is stated in terms of the supremum over calibration contamination distributions.

\begin{figure}[tbp]
\centering
\includegraphics[width=\textwidth]{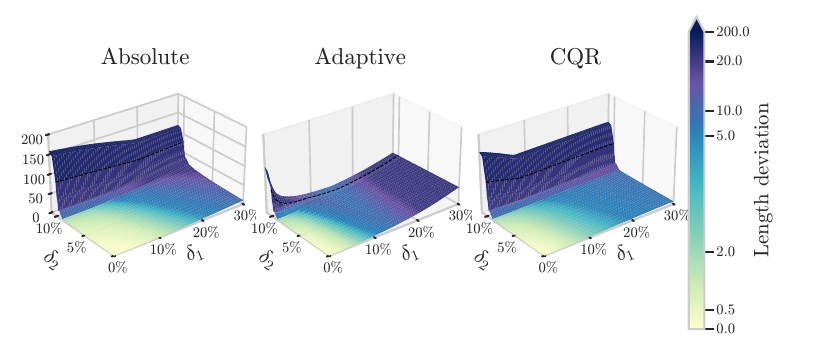}
\caption{Joint training and calibration contamination for Absolute,
Adaptive, and CQR. Both stages use
$Q_1=Q_2=\operatorname{Dirac}((-1,-1,-1,-1,+1),-80)$.
The two horizontal axes are $\delta_1$ and $\delta_2$; height and color
show the mean length deviation over 20 repetitions at $\alpha=0.1$.
Height uses a linear $0$--$200$ scale, and color follows the labeled
nonlinear scale.}
\label{fig:all_surface}
\end{figure}

Figure~\ref{fig:all_correction} shows the effect of contamination from low-score calibration points. Without correction, intervals contract and coverage on the clean test set falls. Across all three scores and 191 contamination levels, the mean coverage over 20 repetitions under the multiplicative correction ranges from $0.897$ to $0.904$. The additive rule produces wider intervals where defined and an infinite interval at \(\delta_2=0.1\).

\begin{figure}[tbp]
\centering
\includegraphics[width=\textwidth]{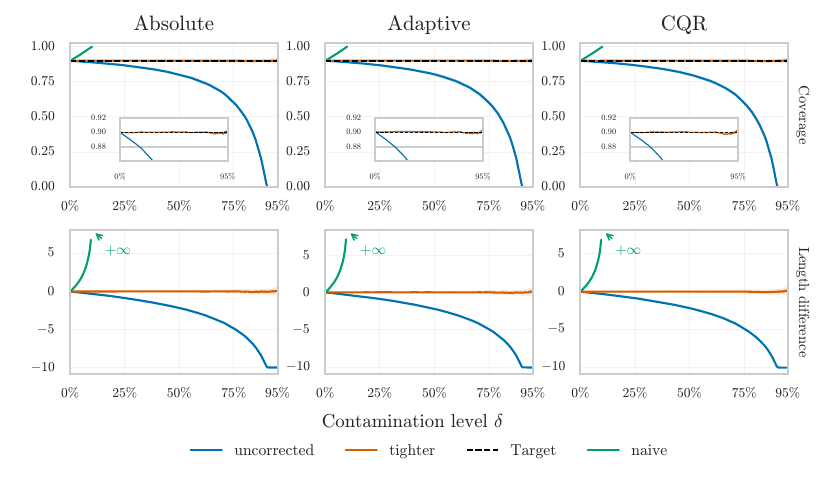}
\caption{Calibration-level correction under low-score contamination in
the synthetic model with $\sigma=3$. Columns correspond to Absolute,
Adaptive, and CQR. The top row shows clean-test coverage and the bottom
row the signed length difference. Curves compare $\alpha$,
$\alpha(1-\delta_2)$, and $\alpha-\delta_2$, with target coverage $0.9$.
Results are means over 20 repetitions. The additive rule has infinite
length at $\delta_2=\alpha=0.1$ and is undefined beyond this point.}
\label{fig:all_correction}
\end{figure}

Figure~\ref{fig:all_trend_nonlinear} shows that interval inflation depends on both the score and the contamination family. Under the combined covariate and noise change, Absolute and CQR exhibit much larger maximum mean changes in interval length than Adaptive over the tested grid. Under mean shift, the largest changes are similar across scores.

\begin{figure}[tbp]
\centering
\includegraphics[width=\textwidth]{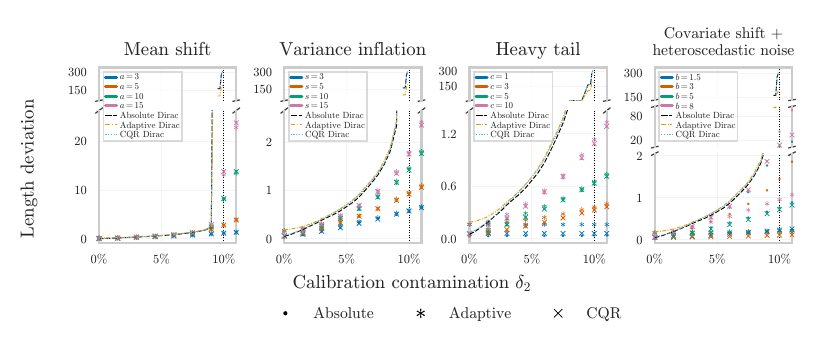}
\caption{Synthetic calibration contamination at $\alpha=0.1$. Columns show mean shift, variance inflation, heavy tails, and covariate shift plus heteroscedastic noise. Colors indicate severity and marker shapes distinguish scores. Points show mean length deviation over 20 repetitions. The score-$200$ Dirac references show mean plus one standard deviation for $\delta_2<\alpha$. Broken vertical axes separate these large reference values from the structured results; the final column uses three segments. Tick ranges differ between panels.}
\label{fig:all_trend_nonlinear}
\end{figure}

Figure~\ref{fig:score_distribution} displays the score mixtures underlying these changes. For the fixed repetition and severity levels, contamination raises the conformal threshold in every panel. The shape and magnitude of the shift vary across scores. These panels correspond to a single calibration sample from repetition 0.

\begin{figure}[tbp]
\centering
\includegraphics[width=\textwidth]{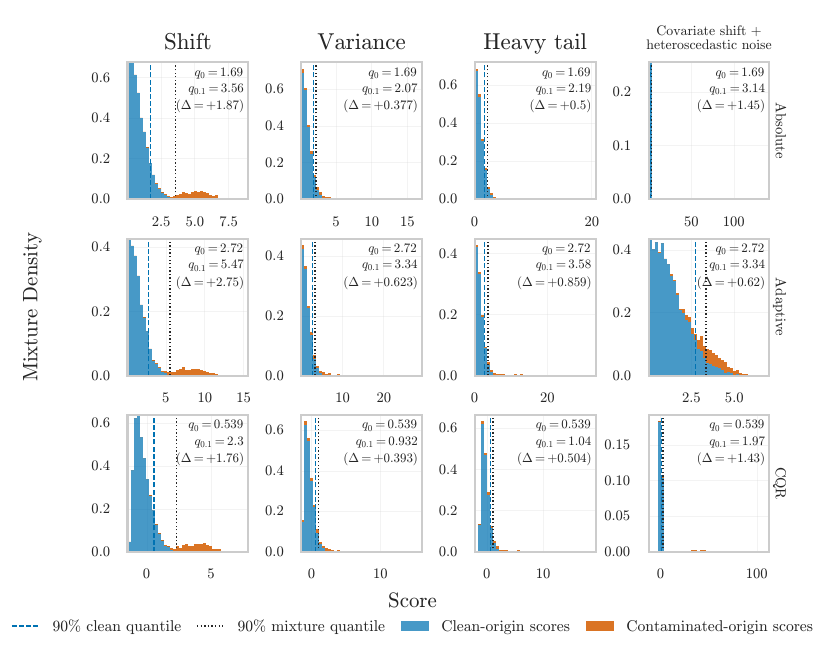}
\caption{Synthetic score distributions at $\delta_2=0.1$, using the fixed
repetition zero. Rows correspond to Absolute, Adaptive, and CQR; columns
use $a=5$, $s=5$, $c=10$, and $b=8$. The respective replacement counts
are $537$, $463$, $452$, and $491$ out of $5{,}000$, shared across scores.
Stacked bars distinguish retained clean observations from replacements.
Vertical lines show the clean and contaminated conformal thresholds at
$\alpha=0.1$. Each panel reports their difference.}
\label{fig:score_distribution}
\end{figure}

\clearpage
\section{Experiments on real data}
\label{app:real_data}

\subsection{Data and model fitting}
\label{app:real_setup}

We use the LIBSVM versions of \texttt{Abalone}, \texttt{California Housing (cadata)}, \texttt{CPU Small}, and \texttt{YearPredictionMSD (MSD)}. Table~\ref{tab:real_data_splits} reports the dimensions and sample sizes. For the first three datasets, a fixed \texttt{RandomState(42)} permutation assigns the first \(\lfloor 0.8N \rfloor\) observations to a training pool and the remaining observations to a clean test set. Each repetition permutes the pool and selects disjoint training and calibration samples. For both Abalone and CPU Small, one pool observation is unused in each split. MSD provides predefined training and test files with \(463{,}715\) and \(51{,}630\) observations, respectively. Using seed 42, we first select a fixed training pool of \(50{,}000\) observations and a fixed test subset of \(10{,}000\) observations. Each repetition then draws \(5{,}000\) training and \(5{,}000\) calibration observations from this pool. Test sets remain fixed across repetitions. Variation across repetitions reflects the train--calibration split, model fitting, and contamination sampling, conditional on the fixed dataset and test set.

\begin{table}[H]
\centering
\caption{Real-data dimensions and sample sizes. The training pool is
fixed before the repeated train--calibration splits.}
\label{tab:real_data_splits}
\setlength{\tabcolsep}{5pt}
\begin{tabular}{@{}lrrrrr@{}}
\toprule
Dataset & $d$ & Pool & Train & Calibration & Test \\
\midrule
Abalone & 8 & 3,341 & 1,670 & 1,670 & 836 \\
California Housing & 8 & 16,512 & 8,256 & 8,256 & 4,128 \\
CPU Small & 12 & 6,553 & 3,276 & 3,276 & 1,639 \\
YearPredictionMSD & 90 & 50,000 & 5,000 & 5,000 & 10,000 \\
\bottomrule
\end{tabular}
\end{table}

Features are standardized coordinate-wise using only the clean training split.
Responses are centered and scaled using the mean and population standard deviation of the clean training split.
These transformations are then applied to the calibration and test observations.
Interval lengths are converted back to the original response units.
Score histograms use the internal standardized response scale; for Adaptive, they additionally use the fitted scale division.

Each quantile model is a neural network trained with the pinball loss defined in Section~\ref{app:exp_setup}.
Table~\ref{tab:real_nn_configuration} reports its configuration.
The outer training split contains an internal \(10\%\) validation subset for early stopping.
Absolute and Adaptive share the clean median network \(\widehat{\mu}_0\).
Adaptive fits a separate median network to the absolute residuals computed on the outer training split.
CQR uses separate networks at levels \(0.05\) and \(0.95\).

\begin{table}[H]
\centering
\caption{Neural-network configuration for all real-data experiments.}
\label{tab:real_nn_configuration}
\renewcommand{\arraystretch}{1.15}
\begin{tabularx}{\linewidth}{@{}lX@{}}
\toprule
Component & Setting \\
\midrule
Architecture & $d$--64--32--1, with ReLU activations \\
Optimizer & AdamW; learning rate $10^{-3}$; weight decay $10^{-4}$ \\
Training & Batch size 256; at most 100 epochs; gradient norm clipped at 5 \\
Early stopping & Internal validation fraction $0.1$; patience 10;
minimum improvement $10^{-4}$; restore the best validation checkpoint \\
\bottomrule
\end{tabularx}
\end{table}

\subsection{Training residual scale and contamination}
\label{app:real_contamination}

All response-side contamination parameters are defined in standardized response units. After fitting \(\widehat{\mu}_0\), we compute
\begin{align*}
 e_i&=|Y_i-\widehat{\mu}_0(X_i)|,\qquad \overline{e}=\frac{1}{n_{\mathrm{train}}}\sum_i e_i,\\
 \sigma_{\mathrm{res}}&=\max\left\{\sqrt{\frac{1}{n_{\mathrm{train}}}\sum_i(e_i-\overline{e})^2},\,10^{-12}\right\}.
\end{align*}
This is the standard deviation of the absolute residuals on the clean outer training split, computed with denominator $n_{\mathrm{train}}$. We fix it before applying calibration contamination. The Adaptive-scale predictions are clipped to \([0.25\sigma_{\mathrm{res}},20\sigma_{\mathrm{res}}]\).

Training remains clean throughout the real-data experiments. Structured calibration contamination uses the same parameter grids as those in Table~\ref{tab:synthetic_contamination}. The observed response is replaced by \(Y+a\sigma_{\mathrm{res}}\) for mean shift, by \(\widehat{\mu}_0(X)+s\sigma_{\mathrm{res}}\varepsilon_Q\) for variance inflation, and by \(\widehat{\mu}_0(X)+c\sigma_{\mathrm{res}}T_3/\sqrt{3}\) for heavy tails. For the composite family, we use
\[
 X_Q=X+b\mathbf{1}_d,\qquad Y_Q=\widehat{\mu}_0(X_Q)+\sigma_{\mathrm{res}}(0.5+1.5\|X_Q\|_\infty)\varepsilon_Q.
\]
The feature shift is applied after training-based standardization. The same clean median model defines these contamination centers for all three scores.

For the finite score-\(200\) references, the anchor is the coordinate-wise signed extreme of the standardized training design:
\[
 x^\star_j=X_{i_j^\star,j},\qquad i_j^\star\in\operatorname*{arg\,max}_i|X_{ij}|.
\]
We set \(Y_Q\) to \(\widehat{\mu}_0(x^\star)+200\) for Absolute, \(\widehat{\mu}_0(x^\star)+200\widehat{\sigma}_0(x^\star)\) for Adaptive, and \(\widehat{q}^{\mathrm{hi}}_0(x^\star)+200\) for CQR. These choices all give score \(200\) in the corresponding fitted score function.

Low-score contamination follows Section~\ref{app:contamination_designs}. For the CQR search, the domain is the coordinate-wise minimum--maximum box of the standardized clean training features. Candidates comprise the training observations, \(2^{14}\) scrambled Sobol points, 256 random corners, the corner with the largest absolute value in each coordinate, and its opposite corner. The best 16 candidates are refined by 300 Adam steps with learning rate \(0.05\), constrained to the box. We retain the best point found across the candidates and refinements.

\subsection{Calibration-level correction}

Figures~\ref{fig:abalone_cadata_correction} and~\ref{fig:cpusmall_msd_correction} compare the three calibration-level rules under low-score contamination.
The multiplicative correction maintains coverage near the target level \(0.9\), although differences remain across datasets and scores.
Table~\ref{tab:real_correction_ranges} reports the range of mean coverage across the full grid and all three scores.
Mean coverage on Abalone is above \(0.9\) throughout the grid.
Some settings for the other datasets fall below \(0.9\).
The additive correction is more conservative and yields infinite length at \(\delta_2=0.1\).

\begin{table}[H]
\centering
\caption{Range of mean clean-test coverage with
$\alpha'=\alpha(1-\delta_2)$. Each range spans all three scores and
all 191 contamination levels; each mean uses 20 repetitions.}
\label{tab:real_correction_ranges}
\begin{tabular}{@{}lc@{}}
\toprule
Dataset & Minimum--maximum mean coverage \\
\midrule
Abalone & $0.916$--$0.934$ \\
California Housing & $0.888$--$0.899$ \\
CPU Small & $0.886$--$0.908$ \\
YearPredictionMSD & $0.887$--$0.897$ \\
\bottomrule
\end{tabular}
\end{table}

\begin{figure}[tbp]
\centering
\includegraphics[width=\textwidth]{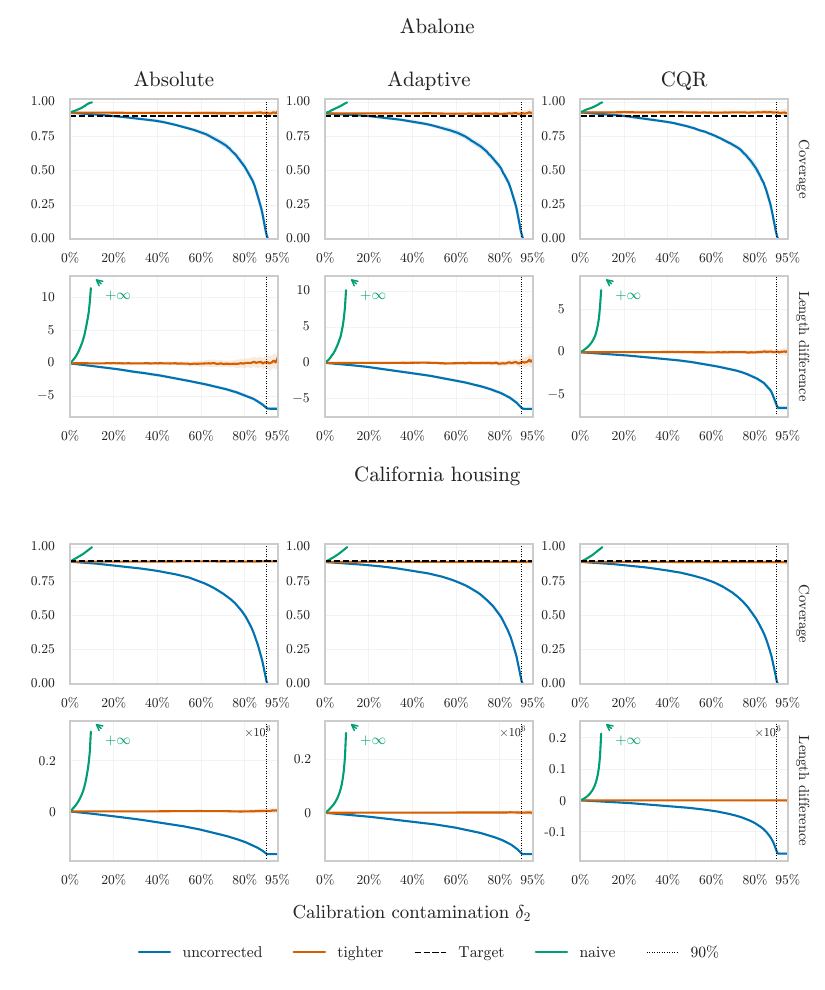}
\caption{Calibration-level correction on Abalone (top two rows) and
California Housing (bottom two rows). Columns correspond to Absolute,
Adaptive, and CQR. Each dataset has a coverage row and a signed length
difference row. Curves compare $\alpha$, $\alpha(1-\delta_2)$, and
$\alpha-\delta_2$ under low-score contamination and average 20 repetitions.
The dashed horizontal line marks coverage $0.9$, and the dotted vertical
line marks $\delta_2=0.9$. The additive rule has infinite length at
$\delta_2=0.1$. Length differences use the original response units.}
\label{fig:abalone_cadata_correction}
\end{figure}

\begin{figure}[tbp]
\centering
\includegraphics[width=\textwidth]{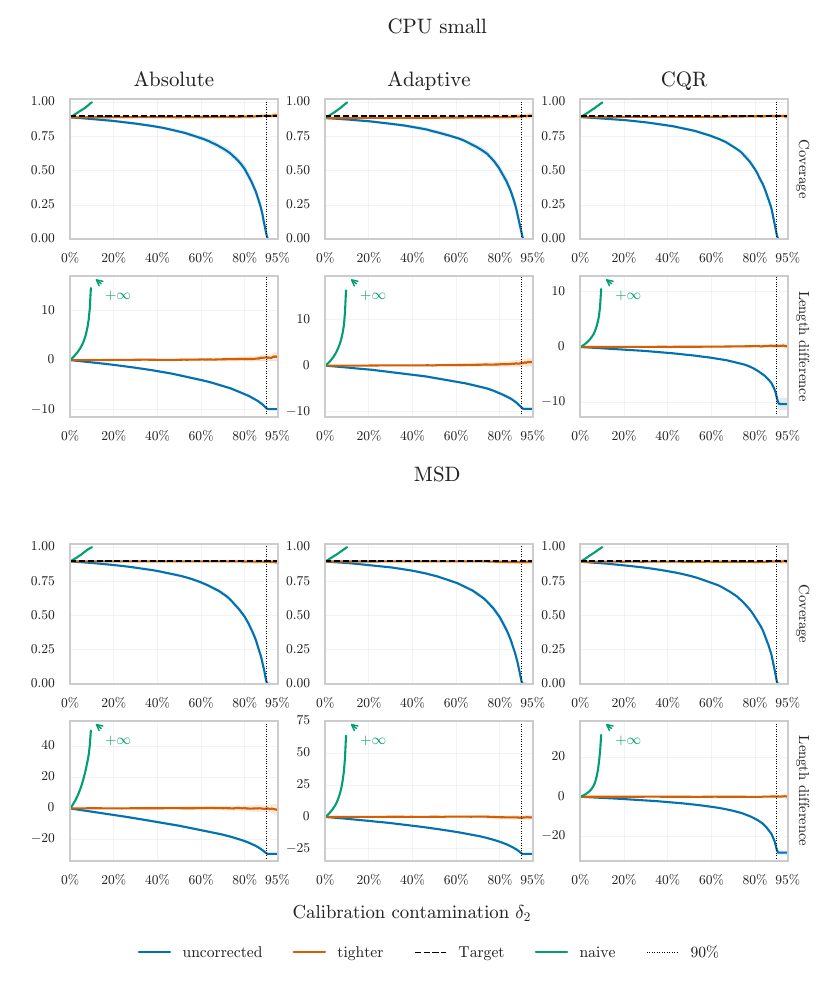}
\caption{Calibration-level correction on CPU Small (top two rows) and
YearPredictionMSD (bottom two rows), using the same conventions as
Figure~\ref{fig:abalone_cadata_correction}. Columns correspond to Absolute,
Adaptive, and CQR. Results average 20 repetitions, and length differences
are in the original response units. The additive rule has infinite length
at $\delta_2=0.1$.}
\label{fig:cpusmall_msd_correction}
\end{figure}

\subsection{Calibration contamination and interval length}
\label{app:real_upper_bound}

Figure~\ref{fig:real-width-trend} compares the four structured families at \(\alpha=0.1\).
For each dataset and score, the largest mean length difference across the tested severity and contamination grids occurs under mean shift.
The ordering of the other families depends on the dataset and score.
The Dirac curves show the mean plus one sample standard deviation, as defined in Section~\ref{app:contamination_designs}.

We use broken vertical axes to display both the structured points and the much larger finite Dirac references.
Each panel has its own set of axis ranges.
California Housing uses an axis multiplier of \(10^6\), which is applied to both segments in each panel.

\begin{figure}[tbp]
\centering
\includegraphics[width=\textwidth]{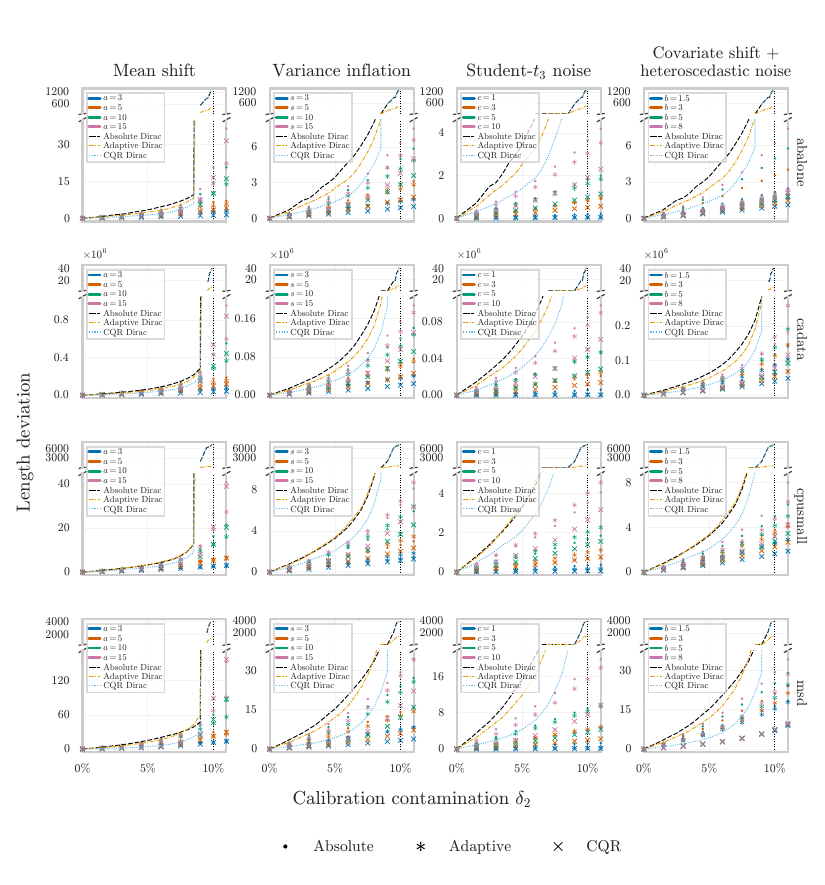}
\caption{Real-data calibration contamination at $\alpha=0.1$. Rows correspond to Abalone, California Housing, CPU Small, and YearPredictionMSD. Columns show mean shift, variance inflation, heavy tails, and covariate shift plus heteroscedastic noise. Colored markers show severity levels, and marker shapes distinguish scores. Points show mean length deviation over 20 repetitions, in original response units. The score-$200$ Dirac curves show mean plus one standard deviation only for $\delta_2<\alpha$. Each panel uses two independently labeled vertical-axis segments; both California Housing segments share the displayed $10^6$ multiplier.}
\label{fig:real-width-trend}
\end{figure}

\clearpage
\subsection{Score distributions}
\label{app:real_score_distribution}

Figures~\ref{fig:abalone_cadata_distribution} and~\ref{fig:cpusmall_msd_distribution} use the fixed repetition \(0\), \(\delta_2=0.1\), and the same four severity values as in the synthetic histograms.
In each displayed panel, the contaminated conformal threshold is above the clean threshold.
Their separation and the mixture shape vary across datasets, scores, and contamination families.
The realized replacement fractions also differ because the sampling is Bernoulli.
Table~\ref{tab:real_replacement_counts} reports the realized counts, which are shared across scores within each dataset and family.
The effect on length depends on the score: for Adaptive, the threshold is multiplied by the fitted scale, while CQR adjusts the endpoints of its fitted base interval.

\begin{table}[H]
\centering
\caption{Realized replacement counts for the real-data score histograms
at $\delta_2=0.1$ in repetition zero. Columns use $a=5$, $s=5$, $c=10$,
and $b=8$, respectively. ``Composite'' denotes covariate shift plus
heteroscedastic noise.}
\label{tab:real_replacement_counts}
\setlength{\tabcolsep}{4pt}
\begin{tabular}{@{}lrrrrr@{}}
\toprule
Dataset & $n_{\mathrm{cal}}$ & Shift & Variance & Heavy tail & Composite \\
\midrule
Abalone & 1,670 & 183 & 161 & 144 & 161 \\
California Housing & 8,256 & 870 & 791 & 769 & 812 \\
CPU Small & 3,276 & 349 & 303 & 304 & 338 \\
YearPredictionMSD & 5,000 & 537 & 463 & 452 & 491 \\
\bottomrule
\end{tabular}
\end{table}

\begin{figure}[tbp]
\centering
\includegraphics[width=\textwidth]{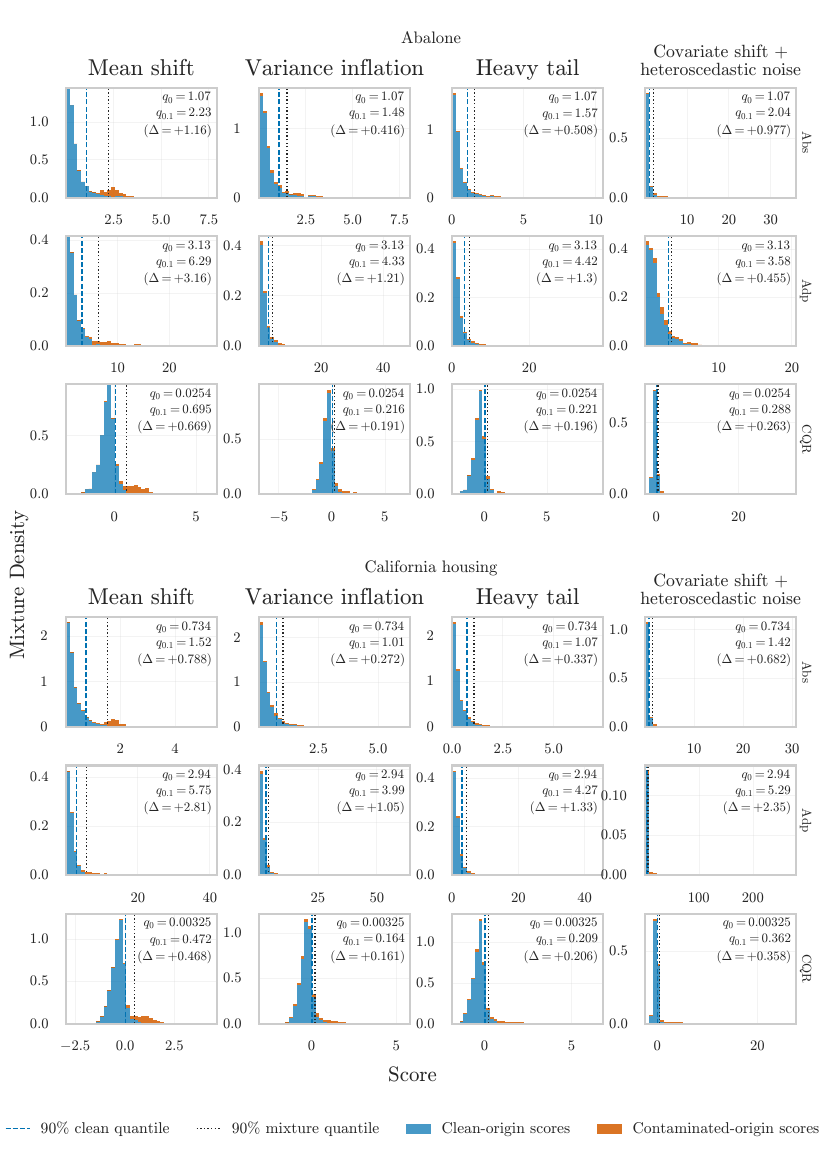}
\caption{Score mixtures on Abalone (top) and California Housing (bottom),
using repetition zero and $\delta_2=0.1$.
Within each dataset, rows show Absolute, Adaptive, and CQR. Columns use
mean shift $a=5$, variance inflation $s=5$, heavy tails $c=10$, and
covariate shift plus heteroscedastic noise $b=8$.
Stacked bars show the source components; vertical lines are conformal
thresholds at $\alpha=0.1$. Panel annotations report their differences.}
\label{fig:abalone_cadata_distribution}
\end{figure}

\begin{figure}[tbp]
\centering
\includegraphics[width=\textwidth]{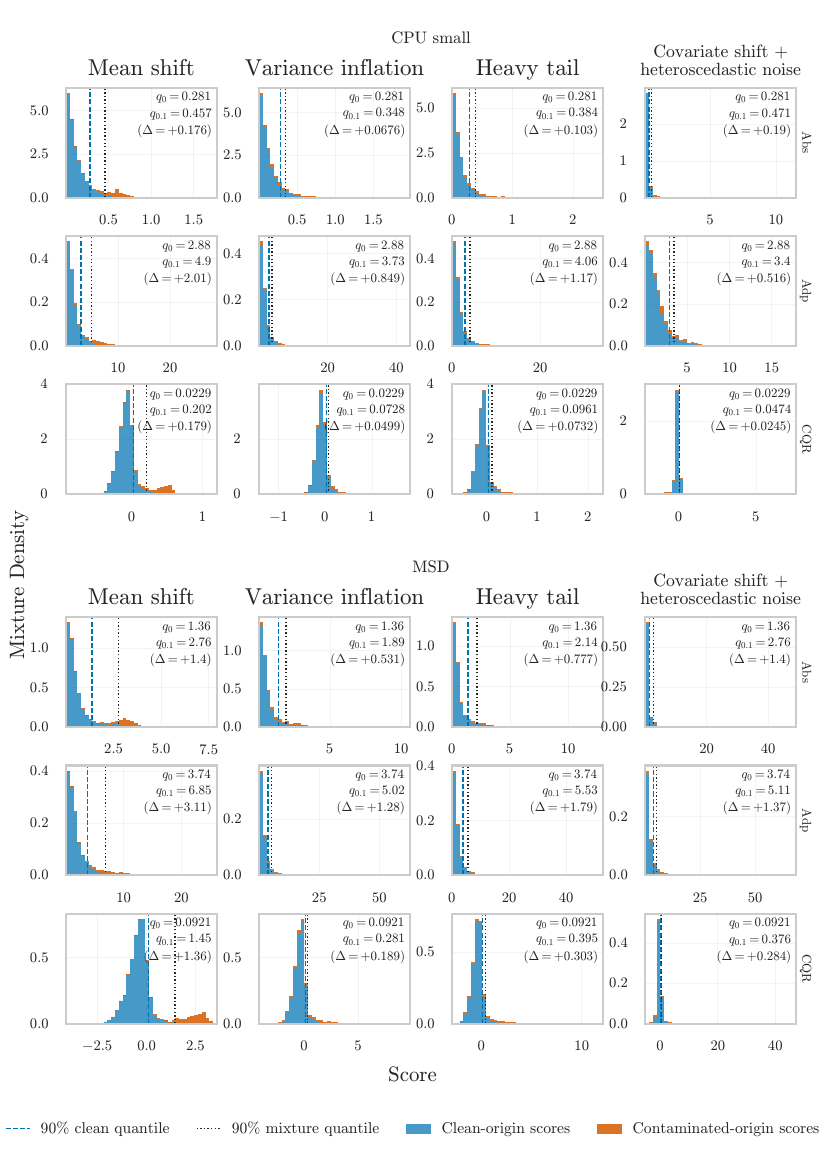}
\caption{Score mixtures on CPU Small (top) and
YearPredictionMSD (bottom), with the same ordering and
conventions as Figure~\ref{fig:abalone_cadata_distribution}.
Results use repetition zero at $\delta_2=0.1$.}
\label{fig:cpusmall_msd_distribution}
\end{figure}
\clearpage

\section{Bernoulli and Fixed-Fraction Replacement}
\label{app:bernoulli-fixed-fraction}

\paragraph{Conditional relationship.}
At each stage $j\in\{1,2\}$, write $(n,\delta,Q)=(n_j,\delta_j,Q_j)$, where $n$, $\delta$, and $Q$ denote the sample size, replacement probability, and contamination distribution. Under Bernoulli replacement, independent indicators $U_i\sim\operatorname{Bernoulli}(\delta)$ mark the replaced observations, and $K=\sum_{i=1}^nU_i\sim\operatorname{Binomial}(n,\delta)$ counts the replacements. For $0<\delta<1$ and any $u\in\{0,1\}^n$ with $\sum_i u_i=k$,
\[
\Pb\bigl((U_1,\ldots,U_n)=u\mid K=k\bigr)=\binom nk^{-1}.
\]
Thus, choosing exactly $k=\lfloor n\delta\rfloor$ locations uniformly and independently of the data and replacing those observations with draws from $Q$ is precisely the Bernoulli replacement scheme conditional on $K=k$. The two stages use independent copies of this construction. At $\delta=0$, both schemes leave the sample unchanged. Moreover, $K/n\to\delta$ in probability and $k/n\to\delta$, so the two schemes have the same limiting replacement fraction.

\paragraph{Interval-length bound.}
Let $k_j=\lfloor n_j\delta_j\rfloor$ denote the fixed replacement counts. Under the conditions of Theorem~\ref{thm:delta}, for every $\gamma\in(0,1)$ and all sufficiently large $n_1,n_2$, the fixed-fraction procedure satisfies, with probability at least $1-\rho_{n_1}(P_0)-\gamma$,
\[
\begin{aligned}
2\left|\widehat q(\cD_1;\cD_2)-q(P_0;P_0)\right|
&\le 2\eta_{n_1}(P_0)+\frac{2Lk_1}{n_1}\\
&\quad+\frac{2}{(1-\delta_2)r_0}
\left(\sqrt{\frac{\log(2/\gamma)}{2n_2}}+\frac{2}{n_2}+\delta_2(1-\alpha)\right).
\end{aligned}
\]
Indeed, replacement stability bounds the training perturbation by $Lk_1/n_1$ deterministically. Conditional on the selected calibration locations, the oracle scores are independent: $n_2-k_2$ are clean draws, and $k_2$ are draws from the contaminating score distribution. Applying Hoeffding's inequality at the two thresholds bracketing the target quantile gives the sampling bound in Lemma~\ref{lem:empirical-quantile-error-under-p2} with replacement fraction $k_2/n_2$. Since $k_2/n_2\le\delta_2$, the original density lower bound and population-bias bound remain valid. The clean-training sampling term is unchanged, which completes the same decomposition as in Theorem~\ref{thm:delta}. The upper bound in Theorem~\ref{thm:delta} therefore remains valid, with the training count-fluctuation term removed.

\paragraph{Experimental comparison.}
We keep the original splits, contamination distributions, learners, and 20 repetitions. At each stage, the $k_j$ smallest saved uniform draws determine the replacement locations; training-contaminated models are refitted. Figure~\ref{fig:fixed-fraction-placement} compares the two schemes in the setting of Figure~\ref{fig:joint_length}(a). Using fixed counts removes the early rise caused by replacement-count fluctuations near $\delta_2=\alpha$. Figure~\ref{fig:fixed-fraction-real} complements this comparison with the four structured contamination settings in the Abalone dataset. 
The reference curves in these figures are empirical references, not certified finite-sample upper bounds.

\begin{figure}[!htbp]
    \centering
    \begin{subfigure}[b]{0.48\linewidth}
        \includegraphics[width=\linewidth]{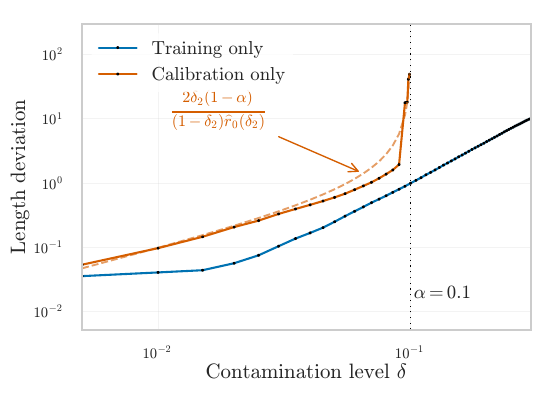}
        \caption{Bernoulli replacement.}
    \end{subfigure}\hfill
    \begin{subfigure}[b]{0.48\linewidth}
        \includegraphics[width=\linewidth]{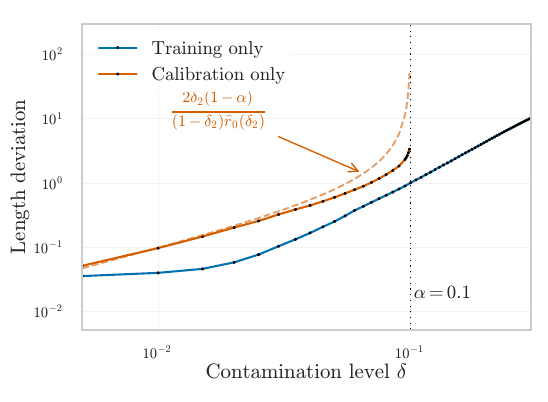}
        \caption{Fixed-fraction replacement.}
    \end{subfigure}
    \caption{Training-only and calibration-only mean length deviation relative to the clean population oracle, averaged over 20 repetitions. Both panels use the setting of Figure~\ref{fig:joint_length}(a) and the same numerical reference for the population calibration term.}
    \label{fig:fixed-fraction-placement}
    \par\medskip
    \centering
    \includegraphics[width=\linewidth]{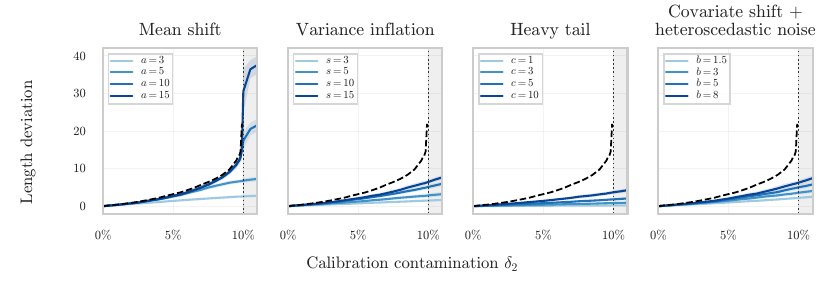}
    \caption{Fixed-fraction counterpart of Figure~\ref{fig:various_contami}: calibration-only mean length deviation on Abalone with absolute-residual scores. Curves and shading show means and one standard deviation over 20 repetitions. The dashed curve is the finite score-$200$ Dirac mean plus one standard deviation for $\delta_2<\alpha$.}
    \label{fig:fixed-fraction-real}
\end{figure}

\end{document}